\documentclass{article}

\usepackage{microtype}
\usepackage{graphicx}
\usepackage{subcaption}
\usepackage{booktabs} 
\usepackage{placeins}

\usepackage{hyperref}

\usepackage[accepted]{icml2026}

\usepackage{amsmath,amssymb,amsthm,mathtools,bm,mathrsfs}
\usepackage{microtype}
\usepackage{enumitem}

\usepackage[capitalize,noabbrev,nameinlink]{cleveref}
\crefformat{equation}{(#2#1#3)}
\crefrangeformat{equation}{(#3#1#4) to~(#5#2#6)}
\crefmultiformat{equation}{(#2#1#3)}{ and~(#2#1#3)}{, (#2#1#3)}{ and~(#2#1#3)}
\Crefformat{equation}{Equation~(#2#1#3)}
\Crefrangeformat{equation}{Equations~(#3#1#4) to~(#5#2#6)}
\Crefmultiformat{equation}{Equations~(#2#1#3)}{ and~(#2#1#3)}{, (#2#1#3)}{ and~(#2#1#3)}

\newtheorem{theorem}{Theorem}[section]
\newtheorem{proposition}[theorem]{Proposition}
\newtheorem{lemma}[theorem]{Lemma}
\newtheorem{corollary}[theorem]{Corollary}
\theoremstyle{definition}
\newtheorem{definition}[theorem]{Definition}
\newtheorem{remark}[theorem]{Remark}

\newcommand{\R}{\mathbb{R}}
\newcommand{\E}{\mathbb{E}}
\newcommand{\Var}{\operatorname{Var}}

\newcommand{\Exp}{\operatorname{Exp}}

\newcommand{\cO}{O}
\newcommand{\so}{\mathrm{o}}
\newcommand{\norm}[1]{\left\lVert #1 \right\rVert}
\newcommand{\ip}[2]{\left\langle #1, #2 \right\rangle}
\newcommand{\Tub}{\operatorname{Tub}}
\newcommand{\vol}{\operatorname{Vol}}
\newcommand{\reach}{\operatorname{reach}}

\newcommand{\Ric}{\operatorname{Ric}}
\newcommand{\tr}{\operatorname{tr}}

\newcommand{\Id}{\operatorname{Id}}

\usepackage[textsize=tiny]{todonotes}

\icmltitlerunning{Rao-Blackwellized Score Matching on Manifolds}

\begin{document}

\twocolumn[
 \icmltitle{Rao-Blackwellized Score Matching on Manifolds}



\icmlsetsymbol{equal}{*}

\begin{icmlauthorlist}
 \icmlauthor{Divit Rawal}{berk}
\end{icmlauthorlist}

 \icmlaffiliation{berk}{Department of Statistics, University of California, Berkeley, Berkeley CA, U.S.A}

 \icmlcorrespondingauthor{Divit Rawal}{divit.rawal@berkeley.edu}

 \icmlkeywords{Machine Learning, ICML, denoising score matching, Rao-Blackwellization, Riemannian score, geometric statistics, manifold hypothesis}

\vskip 0.3in ]



\printAffiliationsAndNotice{} 

\begin{abstract}
We study denoising score matching (DSM) when data are drawn from an embedded manifold $M \subset \mathbb{R}^D$. We show that under ambient Gaussian corruption, the target has variance that diverges as the noise scale decreases and correct for it by regressing against the conditional expectation given the nearest point projection on the manifold: the $L^2$-optimal Rao-Blackwellized target. We then compute the small-noise expansion of this target and show that it recovers the true intrinsic Riemannian score to first order, with a second-order bias from a Tweedie term and two geometric terms dependent on how the manifold is embedded in ambient space: a curvature operator acting on the intrinsic score, and an additive drift generated by the spatial variation of the embedding's second fundamental form. On hyperspheres, we derive a simplified formula and show that both geometric terms vanish exactly on $S^2$, offering a theoretical explanation for why ambient DSM performs comparably to intrinsic methods on real Earth science spherical data in prior work.
\end{abstract}

\section{Introduction}

The manifold hypothesis --- that high-dimensional data concentrates on or near a lower-dimensional submanifold $M \subset \R^D$ --- underpins much of modern generative modeling. For score-based generative models in particular, it creates a tension: the latent law $Z\sim q\ d\vol_M$ defined to be positive only on the manifold is singular with respect to the ambient Lebesgue measure, so the ambient score $\nabla\log q$ is not defined, and score matching is not well-defined. A common alternative is denoising score matching (DSM), in which isotropic Gaussian noise (with variance $\sigma^2$) is used to corrupt points on the manifold so that the score in ambient space is defined. However, as a result, the DSM regression learns only a $\sigma$-dependent surrogate of the intrinsic Riemannian score instead of the true score.

A central question therefore remains unresolved: \emph{What object does ambient DSM actually learn on manifold-supported data?}

Existing approaches address this issue in two different ways. Intrinsic methods \citep{debortoli2022riemannianscorebasedgenerativemodelling,huang2022riemanniandiffusionmodels} replace ambient Gaussian corruption by intrinsic manifold diffusions, such as Riemannian Brownian motion, and directly estimate tangent vector fields on $M$. These approaches avoid the ambient singularity but require manifold-specific geometric infrastructure, including exponential maps and heat-kernel simulation, which are often computationally expensive. Ambient approaches \citep{levyjurgenson2026manifoldawaredenoisingscore} instead retain the standard Euclidean DSM methodology and rely on the heuristic that, after projection onto the tangent bundle, the learned field converges to the intrinsic Riemannian score as $\sigma \to 0^+$. Both lines have yielded impressive empirical results, but both also recognize that the manifold hypothesis causes ambient DSM targets to carry some additional noise and that generalization bounds degrade as $\sigma \to 0^+$ \citep{yakovlev2025generalizationerrorbounddenoising}.

In this work, we work under the ambient Gaussian corruption model:
\[
X = Z + \sigma \xi,
\qquad
Z \sim q \, d\mathrm{Vol}_M,
\qquad
\xi \sim \mathcal{N}(0,I_D),
\]
where $Z$ is sampled from the probability law defined on the manifold $M$ and $X$ is the corresponding noisy point. Our results make heavy use of the projection of the usual DSM target $(Z-X)/\sigma^2$ onto the nearest point on the manifold, which we shall denote as $\pi(X)$. After this projection, the target still carries noise from the directions orthogonal to the manifold (normal-fiber noise) which diverges at rate $d/\sigma^2$ as $\sigma \to 0^+$. We show that by taking the expectation of the projection conditioned on the event $\pi(X)=z$, one obtains the $L^2$-optimal predictor of the raw tangent target among all estimators that depend on $X$ only through $\pi(X)$; a Rao-Blackwellization of the DSM regression problem against the projection. We then compute its expansion and derive its exact bias.

Concretely, our contributions are the following:
\begin{enumerate}
    \item We identify the Rao-Blackwellized tangent target $r_\sigma$ and prove that it is the unique $L^2$-optimal predictor of the raw DSM target among estimators that see the observation $X$ only through the projection $\pi(X)$. We then show that the raw target's variance diverges at rate $d/\sigma^2$ while that of $r_\sigma$ stays bounded, and that $d/\sigma^2$ is an irreducible Bayes-risk floor within this class of predictors.

    \item We compute the order-$\sigma^2$ expansion of $r_\sigma$ on arbitrary embedded submanifolds, and quantify the bias from an intrinsic Tweedie term and two extrinsic terms: a curvature operator built from the Weingarten and Ricci operators of the embedding, acting on the intrinsic score, and an additive score-independent drift generated by the spatial variation of the second fundamental form. The latter vanishes exactly on embeddings with parallel second fundamental form, including affine subspaces and round spheres, but is nonzero in general --- so ambient corruption can produce a tangent drift even where the intrinsic score is zero.

    \item On flat supports the construction reduces exactly to ordinary lower-dimensional Gaussian DSM; on spheres we derive closed-form curvature coefficients and find an exact cancellation on $S^2$, justifying the empirical success of ambient DSM on Earth science data.
\end{enumerate}

\section{Relation to Prior Work}
\label{sec:related}

\paragraph{Score matching and denoising score matching.}
Score matching was introduced by \citet{hyvarinen05a} as a way of fitting non-normalized statistical models without evaluating normalizing constants. \citet{10.1162/NECO_a_00142} established the equivalence between denoising score matching (DSM) at a fixed noise level $\sigma$ and regression of the denoising residual $(Z-X)/\sigma^2$, via Tweedie's formula \citep{Efron2011TweediesFA,robbins1956empirical}. Both derivations assume that the latent law $q$ admits a density with respect to ambient Lebesgue measure. When $q$ is supported on a lower-dimensional submanifold, the ambient score $\nabla \log q$ is not defined and DSM is only meaningful at positive $\sigma$. Our results clarify what DSM is actually estimating in this regime: the raw tangent denoising target $T_\sigma$ contains a nuisance normal-fiber component whose conditional variance diverges at rate $d/\sigma^2$, and its projection onto $\pi(X)$ isolates the signal-bearing part.

\paragraph{Score-based generative models and the manifold hypothesis.}
The diffusion-model literature \citep{song2020generativemodelingestimatinggradients,ho2020denoisingdiffusionprobabilisticmodels,song2021scorebasedgenerativemodelingstochastic} formulates generation as reversing a Gaussian noising process, which implicitly regularizes any singular latent law by Gaussian convolution. Theoretical work on this regime has repeatedly observed that the ambient score blows up near the support of a low-dimensional latent. \citet{pidstrigach2022scorebasedgenerativemodelsdetect} showed that score-based generative models trained on data concentrated near a manifold recover drift fields that align with the normal direction, effectively detecting the manifold. \citet{debortoli2023convergencedenoisingdiffusionmodels} gave quantitative convergence guarantees for denoising diffusion models under a manifold hypothesis, and \citet{debortoli2022riemannianscorebasedgenerativemodelling} introduced Riemannian score-based generative modelling (RSGM), which replaces the ambient Gaussian noising process by an intrinsic heat-kernel diffusion on a known manifold. RSGM estimates an intrinsic Riemannian score directly, but requires the ability to simulate the heat kernel on $M$. Our work is complementary: we ask what the ambient DSM target actually identifies when the latent law is singular with respect to Lebesgue measure, and show that a single Rao-Blackwellization step against the nearest-point projection $\pi(X)$ converts the ambient target into an $\cO(\sigma^2)$-accurate estimator of the intrinsic Riemannian score, without simulating any heat kernel.

\paragraph{Sample complexity and rates.}
\citet{chen2023samplingeasylearningscore} and \citet{oko2023diffusionmodelsminimaxoptimal} derive sample-complexity and minimax-rate guarantees for diffusion models, working in the ambient formulation. Our results are population-level identification theorems rather than rate theorems, but the variance collapse in \cref{thm:variance-collapse} and the finite-sample bound in \cref{prop:finite-sample} quantify the price of ignoring the manifold structure. They suggest that Rao-Blackwellization can yield an asymptotically stronger signal-to-noise improvement as $\sigma\to0^+$, rather than merely a constant-factor gain.

\paragraph{Manifolds and Geometry.}
The tubular-neighborhood and positive-reach calculus we use was developed by \citet{federer1959curvature}; we use it in the form developed by \citet{10.5555/3116660.3117013} for manifold learning. The local-averaging regression proposition in \cref{app:finite-sample} is based on the classical rate theory of \citet{Fan1992DesignadaptiveNR}.

Existing DSM theory assumes an absolutely continuous latent law \citep{hyvarinen05a,10.1162/NECO_a_00142}; the manifold-hypothesis literature has observed blowup of the ambient score \citep{pidstrigach2022scorebasedgenerativemodelsdetect,debortoli2023convergencedenoisingdiffusionmodels}; and the intrinsic heat-kernel route \citep{debortoli2022riemannianscorebasedgenerativemodelling} bypasses the ambient score entirely. In this work, we provide three pieces absent from prior analyses: (i) a canonicality statement identifying $\pi(X)$ as the finest fiber-collapsing summary; (ii) the exact constant $d/\sigma^2$ in the raw-target variance, together with a matching Bayes-risk floor; and (iii) an exact equality reduction of ambient DSM to lower-dimensional DSM in the flat case, which pins down the baseline against which curvature effects must be measured.

\section{Setup}
\label{sec:setup}

Before proceeding, we define some notation. We also direct the reader to \cref{app:notation} for a brief refresher on the Riemannian geometry used to derive our results; a more thorough treatment may be found in \citet{docarmo1992riemannian}.

Let $M \subset \R^D$ be a compact embedded $C^6$ submanifold of dimension $d$ and positive reach. Let $q \in C^6(M)$ be strictly positive with respect to the Riemannian volume measure $d\vol_M$.\footnote{It is worth noting that $C^5$ suffices for every statement below except the $\cO(\sigma^4)$ remainders of \cref{sec:extrinsic}, in which we need six derivatives only because the induced metric in graph coordinates is built from $\partial h$ and so loses one derivative relative to the embedding, see \cref{app:uniformity} for technical details.} Unless otherwise specified, all norms are taken to be the standard $L^2$ norm. Following standard notation, we use a.s.\ and a.e.\ to mean almost surely and almost everywhere respectively.

We work under the Gaussian corruption model
\begin{equation}
 Z \sim q\, d\vol_M,
 \quad
 X = Z + \sigma \xi,
 \quad
 \xi \sim \mathcal{N}(0,I_D).
 \label{eq:model}
\end{equation}

Let $r_0<\reach(M)$ and write $\Tub_{r_0}(M)$ for the corresponding tubular neighborhood. For sufficiently small $\sigma$, the event \[ \mathcal{E}_\sigma \doteq \{X\in \Tub_{r_0}(M)\} \] has probability at least $1-\cO(\exp(-c/\sigma^2))$ for some $c>0$. On $\mathcal{E}_\sigma$, the nearest-point projection \[ \pi:\Tub_{r_0}(M)\to M \] is well-defined and smooth. Throughout we work on $\mathcal{E}_\sigma$; all omitted tails are exponentially small in $\sigma^{-2}$ and do not affect any polynomial-order expansions we give.

For $z\in M$, let $P_T(z)$ and $P_N(z)$ denote the orthogonal projections onto $T_zM$ and $N_zM$ (the tangent and normal spaces, respectively).

The standard denoising target is
\begin{equation}
 Y_\sigma \doteq \frac{Z-X}{\sigma^2}.
 \label{eq:denoising-target}
\end{equation}
We define the raw tangent denoising target
\begin{equation}
 T_\sigma \doteq P_T(\pi(X))Y_\sigma.
 \label{eq:raw-tangent-target}
\end{equation}
Since $X-\pi(X)\in N_{\pi(X)}M$, we may write
\begin{equation}
 T_\sigma
 =
 \frac{1}{\sigma^2}P_T(\pi(X))(Z-\pi(X)).
 \label{eq:tangent-target-simplified}
\end{equation}

\begin{definition}[Rao-Blackwellized tangent target]
For $z\in M$, define
\begin{equation}
 r_\sigma(z)
 \doteq
 \E[T_\sigma\mid \pi(X)=z]
 \in T_zM,
 \label{eq:rb-target}
\end{equation}
yielding a measurable tangent field $r_\sigma: M \to TM$ with $r_\sigma(z)\in T_zM$ by construction. For any measurable tangent field $h: M\to TM$, define the projected denoising risk
\begin{equation}
 \mathcal{R}_\sigma(h)
 \doteq
 \E \norm{T_\sigma-h(\pi(X))}^2.
 \label{eq:projected-risk}
\end{equation}
\end{definition}

\paragraph{Conditioning on $\pi(X)=z$.}
The set $\{\pi(X)=z\}$ has probability zero for each individual $z\in M$, so the conditional expectation in \cref{eq:rb-target} is interpreted as a regular conditional expectation. The Federer-Gray tube formula together with the Gaussian density of $X$ shows that $\pi(X)$ admits a smooth strictly positive density on $M$ with respect to $d\vol_M$ on the event $\mathcal{E}_\sigma$ (the explicit form is computed in \cref{app:fiber-posterior}). Throughout, equalities of the form $r_\sigma(\pi(X))=\cdots$ and $\E[\,\cdot\mid \pi(X)=z]=\cdots$ are understood to hold $\pi(X)$-a.s., equivalently $d\vol_M$-a.e.\ on $M$.

\section{Canonicality of $r_\sigma$}
\label{sec:canonicality}

Here, we show that within the family of fiber-measurable tangent fields, the Rao-Blackwellized target $r_\sigma$ is the unique $L^2$-risk minimizer; a straightforward Pythagorean decomposition that also gives canonicality of $\pi(X)$ among fiber-collapsing summaries.\footnote{A fiber-collapsing summary is one which maps the set of all points sharing the same unique closest point on a submanifold to that closest point.} We also show that $r_\sigma$ agrees with the intrinsic Riemannian score $\nabla_M\log q$ to leading order in $\sigma$ and that the residual risk of the raw target $T_\sigma$ against any fiber-collapsing predictor diverges at the exact rate $d/\sigma^2$, an irreducible Bayes-risk lower bound. Full proofs of the claims are in \Cref{app:leading-order-proof}.

\subsection{$L^2$ Optimality}
\label{sec:rb-identities}

The risk $\mathcal{R}_\sigma(h)$ in \cref{eq:projected-risk} penalizes a tangent field $h$ for its $L^2$ distance to $T_\sigma$. Because $r_\sigma(\pi(X))$ is a conditional expectation, an $L^2$-projection decomposition applies.

\begin{theorem}[Projected denoising risk]
\label{thm:rb-risk}
For every measurable tangent field $h:M\to TM$,
\begin{equation}
 \mathcal{R}_\sigma(h)
 =
 \mathcal{R}_\sigma(r_\sigma)
 +
 \E \norm{r_\sigma(\pi(X))-h(\pi(X))}^2.
 \label{eq:pythagorean}
\end{equation}
In particular, $r_\sigma$ is the unique (up to null sets) minimizer of $\mathcal{R}_\sigma$.
\end{theorem}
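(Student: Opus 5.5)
The proof is the standard $L^2$ Pythagorean decomposition for conditional expectations. Write $W \doteq \pi(X)$ and $\mathcal{G} \doteq \sigma(W)$. By the definition in \cref{eq:rb-target}, interpreted as a regular conditional expectation, $r_\sigma(W)=\E[T_\sigma\mid\mathcal{G}]$ almost surely. The first thing to check is square integrability, so that every quantity in \cref{eq:pythagorean} is finite or the identity holds in $[0,\infty]$.

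On $\mathcal{E}_\sigma$, \cref{eq:tangent-target-simplified} gives $\norm{T_\sigma}\le \sigma^{-2}\norm{Z-\pi(X)}\le \sigma^{-2}\operatorname{diam}(M)$, since both points lie on the compact manifold $M$. Hence $T_\sigma\in L^2$ for each fixed $\sigma>0$. Off $\mathcal{E}_\sigma$, I would either restrict everything to this event, following the paper's convention, or note that $\norm{Z-\pi(X)}$ is still bounded whenever $\pi$ is defined. By conditional Jensen, $r_\sigma(W)$ is then also in $L^2$.

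If $\E\norm{h(W)}^2=\infty$, both sides of \cref{eq:pythagorean} are infinite. This follows from the triangle inequality in $L^2$, since $T_\sigma$ and $r_\sigma(W)$ are in $L^2$. So assume $h(W)\in L^2$. Expand
\begin{equation*}
\norm{T_\sigma-h(W)}^2=\norm{T_\sigma-r_\sigma(W)}^2+\norm{r_\sigma(W)-h(W)}^2+2\ip{T_\sigma-r_\sigma(W)}{r_\sigma(W)-h(W)}.
\end{equation*}
The key step is that the cross term has zero expectation. The vector $V\doteq r_\sigma(W)-h(W)$ is $\mathcal{G}$-measurable, because $h$ and $r_\sigma$ are measurable maps composed with $W$. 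It is square integrable, so $\ip{T_\sigma-r_\sigma(W)}{V}$ is integrable by Cauchy--Schwarz. The tower property and pull-out property then give
\begin{equation*}
\E\ip{T_\sigma-r_\sigma(W)}{V}=\E\ip{\E[T_\sigma\mid\mathcal{G}]-r_\sigma(W)}{V}=0.
\end{equation*}
The pull-out is applied coordinatewise in $\R^D$, since all vectors live in the ambient space. Taking expectations of the expansion yields \cref{eq:pythagorean}.

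Uniqueness is then immediate. If $\mathcal{R}_\sigma(h)=\mathcal{R}_\sigma(r_\sigma)<\infty$, then $\E\norm{r_\sigma(W)-h(W)}^2=0$, so $h=r_\sigma$ holds $W$-a.s. This equals $d\vol_M$-a.e. equality, because the law of $\pi(X)$ has a strictly positive density with respect to $d\vol_M$ on $\mathcal{E}_\sigma$ (\cref{app:fiber-posterior}).

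The only delicate point is measure-theoretic: verifying that $z\mapsto r_\sigma(z)$ is a genuine measurable tangent field, so that it belongs to the competing class, and identifying "null sets" with $d\vol_M$-null sets. I would handle this by taking a measurable version of the regular conditional expectation and composing it with the measurable projection $P_T(z)$. This does not change the field, because $T_\sigma\in T_{W}M$ pointwise.
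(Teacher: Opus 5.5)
Your proposal is correct and follows the paper's proof: the same $L^2$ Pythagorean decomposition, with the cross term killed because $T_\sigma-r_\sigma(\pi(X))$ is orthogonal to every square-integrable $\sigma(\pi(X))$-measurable variable, and uniqueness read off from the nonnegative second summand. Your extra checks fill in details the paper leaves implicit: the bound $\norm{T_\sigma}\le\sigma^{-2}\operatorname{diam}(M)$ for square integrability, the case $\E\norm{h(\pi(X))}^2=\infty$, the choice of a tangent-valued measurable version of $r_\sigma$, and the identification of null sets with $d\vol_M$-null sets.
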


\begin{proof}
Since $r_\sigma(\pi(X))=\E[T_\sigma\mid \pi(X)]$, we may write
\begin{align*}
T_\sigma-h(\pi(X))
&=
\bigl(T_\sigma-r_\sigma(\pi(X))\bigr)\\
&\quad+
\bigl(r_\sigma(\pi(X))-h(\pi(X))\bigr).
\end{align*}
The second term is $\sigma(\pi(X))$-measurable, whereas the first is
orthogonal in $L^2$ to every $\sigma(\pi(X))$-measurable square-integrable
random variable. Hence the cross term in
$\E\norm{T_\sigma-h(\pi(X))}^2$ vanishes, which is \cref{eq:pythagorean}.
Uniqueness is immediate: the second summand is nonnegative and vanishes
iff $h(\pi(X))=r_\sigma(\pi(X))$ a.s.
\end{proof}

The same identity, applied with $h$ replaced by an estimator measurable with respect to a coarser statistic, extends the Pythagorean decomposition to the full family of fiber-collapsing summaries. First note that there exist statistics $S = S(X)$ satisfying $\sigma(S) \subseteq \sigma(\pi(X))$; equivalently, there exists a measurable function $\bar{S}$ such that $S = \bar{S}(\pi(X))$ a.s.
For any such $S$, define $\eta_S \doteq \mathbb{E}[T_\sigma \mid S]$. The tower property of expectations gives
\begin{equation}
\eta_S=\E[T_\sigma\mid S]=\E\!\bigl[r_\sigma(\pi(X))\mid S\bigr],
\label{eq:canonicality}
\end{equation}
so the optimal $S$-measurable predictor of $T_\sigma$ equals the optimal $S$-measurable predictor of $r_\sigma(\pi(X))$, and the excess risk of any $S$-measurable estimator $\eta$ over $r_\sigma(\pi(X))$ decomposes exactly as
\begin{equation}
\begin{split}
\E\norm{T_\sigma-\eta}^2
&=\E\norm{T_\sigma-r_\sigma(\pi(X))}^2\\
&\quad+\E\norm{r_\sigma(\pi(X))-\eta_S}^2\\
&\quad+\E\norm{\eta_S-\eta}^2.
\end{split}
\label{eq:three-term}
\end{equation}
Among estimators measurable with respect to fiber-collapsing summaries, $r_\sigma(\pi(X))$ is therefore the unique minimum-risk choice, and \cref{eq:three-term} identifies the exact cost $\E\norm{r_\sigma(\pi(X))-\eta_S}^2$ of coarsening $\pi(X)$ to $S$. This is the Rao-Blackwell statement for our setup: among fiber-collapsing statistics, $\pi(X)$ is the canonical finest summary and $r_\sigma(\pi(X))$ is the corresponding $L^2$-optimal predictor of $T_\sigma$.

\subsection{Leading-Order Behavior and Variance}
\label{sec:leading-order}

Having identified the correct object to target, we now ask how close $r_\sigma$ is to the intrinsic Riemannian score. A tubular-coordinate Bayes calculation combined with a manifold extension of Stein's identity (\Cref{app:leading-order-proof}) yields, uniformly in $z\in M$,
\begin{equation}
r_\sigma(z)=\nabla_M \log q(z)+\cO(\sigma^2),
\qquad
\sigma\to 0^+.
\label{eq:main-intrinsic}
\end{equation}
This is a small-noise \emph{population} statement; a finite-sample consistency rate in the number of observations used to form the conditional expectation is given in \cref{app:finite-sample}. The $\sigma^2$ residual is nonzero on a curved support and is precisely the $\sigma^2$ coefficient computed in \cref{sec:extrinsic}.

We now quantify how much of the raw target's risk is supervision noise rather than signal. Throughout, for a vector-valued random variable $V$, the symbol $\Var(V)$ denotes the trace covariance, equivalently $\E\norm{V-\E V}^2$.

\begin{theorem}[Variance collapse under Rao-Blackwellization]
\label{thm:variance-collapse}
Uniformly in $z\in M$,
\begin{equation}
\Var(T_\sigma\mid \pi(X)=z)=\frac{d}{\sigma^2}+\cO(1).
\label{eq:cond-var-blowup}
\end{equation}
Consequently,
\begin{equation}
\Var(T_\sigma)
=
\Var(r_\sigma(\pi(X)))
+
\frac{d}{\sigma^2}
+
\cO(1),
\label{eq:uncond-var-blowup}
\end{equation}
so the raw tangent denoising target has variance diverging like
$d/\sigma^2$ while the Rao-Blackwellized target has $\cO(1)$ variance.
\end{theorem}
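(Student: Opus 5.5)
The plan is to compute the conditional law of $Z$ given $\pi(X)=z$ in tubular coordinates, show that it concentrates at scale $\sigma$ around $z$ with an approximately isotropic Gaussian tangential profile, and then read off the conditional second moment of $T_\sigma=\sigma^{-2}P_T(z)(Z-z)$.

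\textbf{Step 1 (fiber posterior).} Write $X=y+\nu$ with $y=\pi(X)\in M$ and $\nu\in N_yM$. By the Federer--Gray change of variables, the joint density of $(Z,y,\nu)$ is proportional to
\[
q(Z)\,\exp\!\bigl(-\norm{y+\nu-Z}^2/(2\sigma^2)\bigr)\,J(y,\nu),
\]
where $J(y,\nu)=\det(I-A_\nu(y))$ and $A_\nu$ is the Weingarten map. This is taken with respect to $d\vol_M(Z)\,d\vol_M(y)\,d\nu$.

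Fix $y=z$ and integrate out $\nu$. Use normal coordinates $Z=\Exp_z(\sigma u)$, so that $Z-z=\sigma u+\tfrac{\sigma^2}{2}\mathrm{II}(u,u)+\cO(\sigma^3|u|^3)$. Then
\[
\norm{z+\nu-Z}^2=\sigma^2|u|^2+\norm{\nu-\tfrac{\sigma^2}{2}\mathrm{II}(u,u)}^2+\cO(\sigma^4|u|^4+\sigma^3|u|^3|\nu|).
\]
The normal integral of the Gaussian in $\nu$, weighted by $J$, equals $(2\pi\sigma^2)^{(D-d)/2}\bigl(1+\cO(\sigma^2(1+|u|^2))\bigr)$. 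This uses the facts that $J$ is smooth, that $J(z,0)=1$, and that $\nu$ is of size $\sigma$.

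Hence the conditional law of $u=\sigma^{-1}\exp_z^{-1}(Z)$ given $\pi(X)=z$ has density
\[
\propto \varphi_d(u)\bigl(1+\sigma\,\ip{\nabla_M\log q(z)}{u}+\cO(\sigma^2(1+|u|^4))\bigr),
\]
where $\varphi_d$ is the standard Gaussian density on $T_zM$. This is the expansion that also produces \cref{eq:main-intrinsic}, so I will take it from \Cref{app:leading-order-proof} if it is available there. The error terms come from $q$ in $C^2$, the volume density $1+\cO(\sigma^2|u|^2)$, and the tails beyond $|u|\gtrsim\sigma^{-1}r_0$, which are exponentially small. Compactness of $M$ and the $C^5$ regularity make every constant uniform in $z$.

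\textbf{Step 2 (second moment).} Write $P_T(z)(Z-z)=\sigma u+\cO(\sigma^3|u|^3)$; the $\sigma^2$ term vanishes because $\mathrm{II}$ is normal. Then
\[
\E\bigl[\norm{T_\sigma}^2\mid\pi(X)=z\bigr]=\sigma^{-2}\,\E\bigl[|u|^2+\cO(\sigma^2|u|^4)\bigr].
\]
Under the perturbed Gaussian, $\E|u|^2=d+\cO(\sigma^2)$: the order-$\sigma$ correction is odd in $u$ and integrates to zero against the even function $|u|^2$. Therefore the conditional second moment is $d/\sigma^2+\cO(1)$. Since $r_\sigma(z)=\cO(1)$ uniformly by \cref{eq:main-intrinsic}, subtracting $\norm{r_\sigma(z)}^2$ gives \cref{eq:cond-var-blowup}.

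\textbf{Step 3 (unconditional variance).} The law of total variance gives
\[
\Var(T_\sigma)=\Var\bigl(\E[T_\sigma\mid\pi(X)]\bigr)+\E\bigl[\Var(T_\sigma\mid\pi(X))\bigr].
\]
The first term is $\Var(r_\sigma(\pi(X)))$, and averaging \cref{eq:cond-var-blowup}, which holds uniformly in $z$, gives $d/\sigma^2+\cO(1)$ for the second. The contribution from the complement of $\mathcal{E}_\sigma$ is handled using Gaussian moments of $T_\sigma$, which are polynomial in $\sigma^{-1}$, multiplied by the probability bound $\cO(e^{-c/\sigma^2})$. The $\cO(1)$ variance of the Rao-Blackwellized target follows because $r_\sigma$ is uniformly bounded.

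\textbf{Main obstacle.} The hard part is Step 1, specifically checking that the error in the posterior expansion really is relative $\cO(\sigma^2)$ and not $\cO(\sigma)$ in its even part. An even error at order $\sigma$ would shift $\E|u|^2$ by $\cO(\sigma)$ and contribute $\cO(\sigma^{-1})$ to the variance, which would spoil the claimed $\cO(1)$ remainder.

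I expect the following to settle it. The cross term $\ip{\nu}{\mathrm{II}(u,u)}$ appears in the exponent with coefficient $\sigma^{-2}\cdot\sigma^2=1$, and it is linear in $\nu$. It therefore integrates to a relative $\cO(\sigma^2)$ contribution, since $\E\nu=0$ at leading order. Likewise the Jacobian's linear term $-\tr A_\nu$ is odd in $\nu$, so its leading contribution also vanishes. I will verify these two parity cancellations explicitly.
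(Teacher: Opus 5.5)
Your proposal is correct and follows essentially the same route as the paper: the fiber-posterior normal form in normal coordinates at $z$, the chord expansion $P_T(z)(Z-z)=v+\cO(\norm{v}^3)$, a conditional second moment of $d/\sigma^2+\cO(1)$ minus the $\cO(1)$ squared mean from \cref{eq:main-intrinsic}, and then the law of total variance. Your explicit parity argument (the only order-$\sigma$ posterior correction is the odd term $\sigma\ip{\nabla_M\log q(z)}{u}$, so it cannot shift $\E|u|^2$) is the step the paper compresses into an appeal to the moment bounds \cref{eq:moment-estimate}; those bounds alone give only $\E\norm{v}^2=\cO(\sigma^2)$, not $d\sigma^2+\cO(\sigma^4)$.
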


The constant $d/\sigma^2$ is the Bayes-risk floor for regression against $T_\sigma$ \emph{within the restricted class of fiber-collapsing summaries}: estimators of $T_\sigma$ that depend on the observation $X$ only through some $S(X)$ with $\sigma(S)\subseteq\sigma(\pi(X))$.\footnote{It is easy to see that unrestricted estimators of $T_\sigma$ can trivially attain zero risk by taking $\eta = T_\sigma$ itself; the floor reflects the cost of predicting $T_\sigma$ from any summary that does not see the full normal-fiber noise.} Indeed, for any such $S$, \cref{eq:three-term} bounds
\begin{align*}
\inf_{\eta\in L^2(\sigma(S))}\E\norm{T_\sigma-\eta}^2
&\ge \E\norm{T_\sigma-r_\sigma(\pi(X))}^2\\
&=\E\!\left[\Var(T_\sigma\mid \pi(X))\right]\\
&=\frac{d}{\sigma^2}+\cO(1),
\end{align*}
where the last equality is the tower-rule average of \cref{eq:cond-var-blowup}. Equality at leading order holds only when $\sigma(S)=\sigma(\pi(X))$ (modulo null sets). The roles of $\pi$ and $r_\sigma$ are distinct: $\pi(X)$ is the canonical finest fiber-collapsing summary, and $r_\sigma$ is the $L^2$-optimal predictor of $T_\sigma$ given that summary. Every fiber-collapsing $S$ inherits the $d/\sigma^2$ floor. \Cref{fig:variance-collapse} demonstrates the variance-collapse rate of \Cref{thm:variance-collapse} on $S^2$ with the von Mises-Fisher density.

\begin{figure}[t]
\centering
\includegraphics[width=\linewidth]{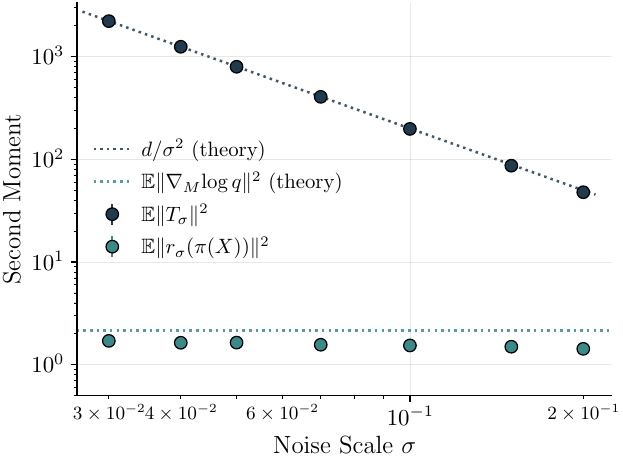}
\caption{Variance collapse on $S^2$ under $\mathrm{vMF}(\mu,\kappa{=}2)$.
Second moment of the raw target $T_\sigma$ (black, slope $-2$ in
$\log\sigma$, matching $d/\sigma^2$ with $d=2$) versus the
Rao-Blackwellized target $r_\sigma(\pi(X))$ (blue, flat at the
theoretical $\E\norm{\nabla_M\log q}^2$). The gap is the claimed $d/\sigma^2$ Bayes-risk floor.}
\label{fig:variance-collapse}
\end{figure}

\section{Extrinsic $\sigma^2$ Correction}
\label{sec:extrinsic}

We now provide a closed-form expansion of the canonical target $r_\sigma$ in $\sigma$, with two extrinsic curvature corrections that are absent from intrinsic-noising analyses.

\subsection{Affine Case}
\label{sec:flat}

Before proceeding to general manifolds, let us gain some intuition by treating the affine case where all identities are exact.

Let $V\subset\R^D$ be an affine $d$-plane, $P$ the orthogonal projection onto $V$, and $Q\doteq I-P$. We still work under our ambient Gaussian corruption setup, and write $T\doteq PX$, $N\doteq QX$ so that $T=Z+\sigma P\xi$ and $N=\sigma Q\xi$ are independent. For $\phi_\sigma^{(d)}$ the probability density function of a $d$-dimensional isotropic Gaussian with variance $\sigma^2$, let \[ p_T\doteq q*\phi_\sigma^{(d)} \] be the $d$-dimensional Gaussian convolution of $q$ on $V$.

\begin{proposition}
\label{prop:flat}
In the setting above, the ambient score factorizes as
\begin{equation}
\nabla\log p_\sigma(x)
=
P\nabla_V\log p_T(Px)-\frac{Qx}{\sigma^2},
\label{eq:flat-score}
\end{equation}
the Rao-Blackwellized target coincides with the flat Tweedie score along $V$,
\begin{equation}
r_\sigma(t)=\nabla_V\log p_T(t),
\qquad
p_T=q*\phi_\sigma^{(d)},
\label{eq:flat-rb}
\end{equation}
and ambient DSM restricted to the tangent channel reduces exactly to $d$-dimensional Gaussian DSM on $V$, that is, for every measurable $h:V\to V$, with $g_h(x)\doteq Ph(Px)-Qx/\sigma^2$,
\begin{equation}
\E\norm{Y_\sigma-g_h(X)}^2
=
\E\left\|\tfrac{Z-T}{\sigma^2}-h(T)\right\|^2.
\label{eq:flat-dsm-reduction}
\end{equation}
\end{proposition}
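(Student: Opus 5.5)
The plan is to exploit the orthogonal decomposition $\R^D = V_0\oplus V_0^\perp$, where $V_0$ is the direction space of $V$. Without loss of generality, translate so that $V$ passes through the origin; the general affine case follows by the same argument with $Px$ replaced by the affine projection. Because $\xi$ is isotropic Gaussian, $P\xi$ and $Q\xi$ are independent. Hence $T=Z+\sigma P\xi$ and $N=\sigma Q\xi$ are independent, since $Z$ is independent of $\xi$. The density of $X$ then factorizes with respect to Lebesgue measure on $V\times V^\perp$:
\[
p_\sigma(x)=p_T(Px)\,\phi^{(D-d)}_\sigma(Qx),
\qquad p_T=q*\phi^{(d)}_\sigma .
\]
Taking logarithms and gradients gives $\nabla\log p_\sigma(x)=P\nabla_V\log p_T(Px)+\nabla\log\phi^{(D-d)}_\sigma(Qx)$. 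The second term equals $-Qx/\sigma^2$, which is \cref{eq:flat-score}.

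For \cref{eq:flat-rb}, I will first note that $\pi(X)=PX=T$ on all of $\R^D$, since the reach is infinite. The tangent space is constant with $P_T=P$, so by \cref{eq:tangent-target-simplified} we have $T_\sigma=(Z-T)/\sigma^2$. Thus
\[
r_\sigma(t)=\E[(Z-T)/\sigma^2\mid T=t],
\]
and this is exactly the $d$-dimensional Tweedie formula for the model $T=Z+\sigma P\xi$ on $V$. I will verify it directly. Differentiate $p_T(t)=\int q(z)\phi^{(d)}_\sigma(t-z)\,dz$ under the integral, which is justified by compactness of $\operatorname{supp} q$ and smoothness of the Gaussian. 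This yields $\nabla_V p_T(t)=\int q(z)\tfrac{z-t}{\sigma^2}\phi^{(d)}_\sigma(t-z)\,dz$. Dividing by $p_T(t)>0$ gives the posterior mean of $(Z-t)/\sigma^2$, as required. Here the identification of $d\vol_M$ with Lebesgue measure on $V$ is used. The independence of $N$ guarantees that conditioning on $T$ alone loses nothing.

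For \cref{eq:flat-dsm-reduction}, I will split $Y_\sigma$ into its $V$ and $V^\perp$ components. We have $PY_\sigma=(Z-T)/\sigma^2$ because $PZ=Z$. We also have $QY_\sigma=-QX/\sigma^2=-N/\sigma^2$ because $QZ=0$. Pythagoras then gives
\[
\|Y_\sigma-g_h(X)\|^2=\|(Z-T)/\sigma^2-h(T)\|^2+\|-N/\sigma^2+QX/\sigma^2\|^2 .
\]
The second term vanishes identically, and taking expectations yields the claim.

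The only delicate points are the following. First, there is an implicit identification: the tube-formula density of $\pi(X)$ reduces to the marginal $p_T$. Second, with $M$ noncompact, the standing assumption "compact $M$" fails, so the exponential-tail event $\mathcal E_\sigma$ must be replaced by the whole space. I expect this bookkeeping, rather than any computation, to be the main obstacle. I will address it by noting that $\mathcal E_\sigma$ has probability one here, and that $q$ being a compactly supported density on $V$ suffices for all differentiations.
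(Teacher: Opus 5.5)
Your proposal is correct, and for \cref{eq:flat-dsm-reduction} it is exactly the paper's proof: write $Y_\sigma=(Z-T)/\sigma^2-N/\sigma^2$ and $g_h(X)=h(T)-N/\sigma^2$, so the normal components cancel identically. The paper gives no separate argument for \cref{eq:flat-score,eq:flat-rb}, treating them as immediate from the independence of $T$ and $N$ and Tweedie's formula, which is what your factorization $p_\sigma(x)=p_T(Px)\,\phi^{(D-d)}_\sigma(Qx)$ and your direct differentiation of $p_T$ supply; note that compact support of $q$ is not needed for the differentiation (the paper's flat-case expansion takes $q>0$ on all of $V$), since $\nabla_t\phi^{(d)}_\sigma(t-z)$ is bounded uniformly in $(t,z)$ and $q\in L^1$ then gives domination.
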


\begin{proof}
Write
\[
Y_\sigma
=
\frac{Z-X}{\sigma^2}
=
\frac{Z-T}{\sigma^2}
-
\frac{N}{\sigma^2},
\qquad
g_h(X)=h(T)-\frac{N}{\sigma^2}.
\]
Therefore
\[
Y_\sigma-g_h(X)
=
\frac{Z-T}{\sigma^2}-h(T),
\]
which lies in $V$. Taking squared norms and expectations gives
\cref{eq:flat-dsm-reduction}.
\end{proof}

Thus, on a flat support, the intrinsic score, the $V$-component of the ambient score, and the Rao-Blackwellized tangent target all coincide; no $\sigma^2$ correction is needed.

\subsection{General Case}
\label{sec:curved-expansion}

We now state the main expansion of $r_\sigma$ on a curved support. The coefficient decomposes into an intrinsic flat-Tweedie term, an extrinsic curvature term acting on the intrinsic score, and an additive extrinsic drift generated by the spatial variation of the embedding.

\begin{theorem}[Extrinsic $\sigma^2$ correction]
\label{thm:extrinsic}
Under the assumptions of \cref{sec:setup}, uniformly in $z\in M$,
\begin{equation}
\begin{split}
 r_\sigma(z)
 &=
 \nabla_M\log q(z)\\
 &\quad+
 \sigma^2\bigl[b_q(z)+g_M^{\mathrm{ext}}(z)+g_M^{\mathrm{inh}}(z)\bigr]
 +
 \cO(\sigma^4),
\end{split}
 \label{eq:extrinsic-main}
\end{equation}
as $\sigma\to 0^+$,
where the intrinsic flat-Tweedie term is
\begin{equation}
 b_q(z)
 =
 \tfrac{1}{2}\nabla_M\!\left[\Delta_M \log q + \norm{\nabla_M \log q}^2\right](z),
 \label{eq:bq-main}
\end{equation}
the extrinsic curvature term is
\begin{equation}
 g_M^{\mathrm{ext}}(z)
 =
 \Bigl(\tfrac{1}{2}W_{H(z)}-\Ric^\sharp_z\Bigr)\!\bigl(\nabla_M \log q(z)\bigr),
 \label{eq:gM-ext-main}
\end{equation}
and the curvature-inhomogeneity term is
\begin{equation}
 g_M^{\mathrm{inh}}(z)
 =
 \tfrac{1}{2}\mathcal C_M(z).
 \label{eq:gM-inh-main}
\end{equation}
Here $W_u:T_zM\to T_zM$ is the Weingarten operator in normal direction $u$,
$H(z)=\sum_{i=1}^d \mathrm{II}_z(e_i,e_i)\in N_zM$ is the mean curvature vector
of $M\hookrightarrow\R^D$ at $z$, $\Ric^\sharp_z$ is the Ricci endomorphism
of $(M,g_M)$ at $z$, and $\mathcal C_M(z)\in T_zM$ is the curvature-inhomogeneity
vector \cref{eq:CM-def}, a contraction of $\mathrm{II}_z$ against
$\nabla^\perp\mathrm{II}_z$. The two extrinsic terms are structurally different:
$g_M^{\mathrm{ext}}$ is a curvature operator applied to the intrinsic score and
vanishes wherever that score vanishes, whereas $g_M^{\mathrm{inh}}$ is
score-independent and vanishes exactly when the embedding is parallel at $z$
($\nabla^\perp\mathrm{II}_z=0$). In the flat case $M=V$ one has
$\mathrm{II}\equiv 0$, so $W_H$, $\Ric^\sharp$ and $\mathcal C_M$ all vanish,
$g_M^{\mathrm{ext}}\equiv g_M^{\mathrm{inh}}\equiv 0$, and
\cref{eq:extrinsic-main} reduces to the flat Tweedie identity
\cref{eq:flat-rb}.
\end{theorem}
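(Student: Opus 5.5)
The proof is a Laplace-type expansion of the fiber posterior in tubular and normal coordinates about a fixed $z\in M$, carried to second order in $\sigma$.

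\textbf{Step 1: reduce $r_\sigma$ to a ratio of integrals.} Every $X\in\Tub_{r_0}(M)$ can be written uniquely as $X=\pi(X)+\nu$ with $\nu\in N_{\pi(X)}M$. By the Federer-Gray tube formula, the joint density of $(Z,\pi(X),\nu)$ is proportional to
\[
q(y)\,\phi_\sigma\bigl(z+\nu-y\bigr)\,\det\!\bigl(I-W_\nu(z)\bigr),
\]
taken with respect to $d\vol_M(y)\,d\vol_M(z)\,d\nu$. Conditioning on $\pi(X)=z$ and using \cref{eq:tangent-target-simplified}, we get
\[
r_\sigma(z)=\frac{\int_M\int_{N_zM}\sigma^{-2}P_T(z)(y-z)\,q(y)\,\phi_\sigma(z+\nu-y)\det(I-W_\nu)\,d\nu\,d\vol_M(y)}{\int_M\int_{N_zM}q(y)\,\phi_\sigma(z+\nu-y)\det(I-W_\nu)\,d\nu\,d\vol_M(y)}.
\]
Two facts make the integrals tractable. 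The Gaussian in $\nu$ is centered at the normal part of $y-z$. The Jacobian $\det(I-W_\nu)$ is polynomial in $\nu$. So the $\nu$-integral can be done exactly: the Gaussian moments of $\det(I-W_\nu)$ about the center $P_N(y-z)$ give
\[
1-\langle H(z),P_N(y-z)\rangle+\cO(\sigma^2+|y-z|^2).
\]
This is the only place the mean curvature vector $H$ and the $\nu$-quadratic part of the Jacobian enter.

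\textbf{Step 2: rescale and Taylor expand the $y$-integral.} Parametrize $M$ near $z$ as a graph over $T_zM$:
\[
y=z+u+h(u),\qquad h(u)=\tfrac12\mathrm{II}_z(u,u)+\tfrac16\nabla\mathrm{II}_z(u,u,u)+\cO(|u|^4).
\]
Then rescale $u=\sigma w$. After the $\nu$-integration, the tangential Gaussian is $\exp(-|w|^2/2)$ times corrections. Expand each remaining factor to relative order $\sigma^2$: $q(y)$ in normal coordinates, where its Hessian mixes $\nabla^2_M\log q$ with $\mathrm{II}$ through the graph map; the Riemannian volume density $\sqrt{\det g(u)}=1+\tfrac12|\partial h|^2+\cdots$; the $H$-factor from Step 1; and the numerator's $P_T(z)(y-z)=u$. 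Odd moments of $w$ vanish. The $\sigma^{-2}\cdot u$ in the numerator pairs with the order-$\sigma$ odd terms to give the leading $\nabla_M\log q$, and pairs with the order-$\sigma^3$ odd terms to give the $\sigma^2$ coefficient. The $\cO(\sigma^4)$ remainder is uniform in $z$ by compactness and $C^6$ regularity, since $q$, $h$ and the metric all need enough derivatives (cf.\ \cref{app:uniformity}); the exponentially small tails are dropped as in \cref{sec:setup}.

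\textbf{Step 3: identify the terms.} I expect the pieces to sort as follows.
\begin{itemize}
\item Terms that survive with $\mathrm{II}\equiv0$ reproduce the flat computation of \cref{prop:flat}, i.e.\ the $\sigma^2$ term of $\nabla\log(q*\phi^{(d)}_\sigma)$, which is $b_q$.
\item Terms where one $\mathrm{II}$-quadratic factor (from the metric determinant or from $q$ composed with the graph map) multiplies $\nabla\log q$ should give, via Gauss's equation $\Ric=\sum\langle\mathrm{II}(\cdot,e_i),\mathrm{II}(\cdot,e_i)\rangle$ rewritten as $W_H-\sum W_{\mathrm{II}(e_i,\cdot)}\cdots$, the operator $\tfrac12W_H-\Ric^\sharp$.
\item Terms with no $q$-derivative, coming from the cubic $\nabla^\perp\mathrm{II}$ in $h$ and the third-order metric expansion paired against the linear $H$-factor, give the drift $\tfrac12\mathcal C_M$.
\end{itemize}
I would then take the definition \cref{eq:CM-def} to be exactly the contraction that comes out of this computation.

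\textbf{Main obstacle.} The hard part is the bookkeeping in Step 3. Contributions of the form $\mathrm{II}\cdot\mathrm{II}\cdot\nabla\log q$ come from three sources: the volume density, the Jacobian $\det(I-W_\nu)$ together with the mismatch between the normal center $P_N(y-z)$ and $\nu$, and the reparametrization of $q$. These must be combined using the Gauss equation so that they collapse to $\tfrac12W_H-\Ric^\sharp$ rather than an unsimplified quartic contraction. To verify the coefficients I would check the result on the round sphere $S^d$, where the whole computation can be done in closed form via the vMF example, and confirm there that $g_M^{\mathrm{inh}}=0$.
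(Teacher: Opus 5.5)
Your route is the same as the paper's: a tangent-graph chart $y(s)=z+s+h(s)$ in which the chord is exactly $s$, the tube Jacobian $\det(I-W_\nu(z))$ integrated against the normal Gaussian centred at $h(s)$, and a Gaussian-moment expansion of the resulting tilt. The gap is that Step~3, which carries the content of the theorem, is only anticipated, and the inputs to it are stated too coarsely or incorrectly.

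\emph{The Step~1 remainder.} The bound $\cO(\sigma^2+\norm{y-z}^2)$ is useless at this order, because the retained term $\ip{H(z)}{h(s)}$ is itself $\cO(\norm{s}^2)$. What you need is $F_\sigma(s)=C_\sigma\bigl[1-\ip{H(z)}{h(s)}+\cO(\norm{s}^4)+\cO(\sigma^2\norm{s}^2)+\cO(\sigma^4)\bigr]$, with $C_\sigma$ independent of $s$ and $h$ kept to cubic order. In particular you must rule out a term $\sigma^2\cdot(\text{linear in }s)$, which would shift the $\sigma^2$ coefficient directly. It is absent because odd $\eta$-moments vanish and $h=\cO(\norm{s}^2)$.

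\emph{The Gauss equation.} As written it is wrong: $\sum_i\ip{\mathrm{II}(\cdot,e_i)}{\mathrm{II}(\cdot,e_i)}$ is $\mathcal S_z=\sum_\alpha W_{n_\alpha}^2$, and the correct identity is $\Ric^\sharp_z=W_{H(z)}-\mathcal S_z$. With it, and writing $\ell\doteq\nabla_M\log q(z)$, the three $\mathrm{II}\,\mathrm{II}\,\ell$ sources become explicit.
First, $\tfrac12\ip{\mathcal S_zs}{s}$ from $\log\sqrt{\det g}$ and $-\tfrac12\ip{W_{H(z)}s}{s}$ from the fiber factor combine into $\nabla^2m_\sigma(0)=-\Ric^\sharp_z$, where $m_\sigma=\log(J_{\mathrm{gr}}F_\sigma)$.
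Second, the reparametrization of $q$ does not enter through the Hessian. Christoffel symbols vanish at $s=0$, so $\nabla^2f(0)=\nabla_M^2\log q(z)$ for $f(s)=\log q(y(s))$. It enters instead through the third-order Laplacian mismatch $\nabla_s\Delta_sf(0)=\nabla_M\Delta_M\log q(z)+W_{H(z)}\ell$, which comes from $(\sqrt g)^{-1}\partial_i(\sqrt g\,g^{ij})=-(W_{H(z)}s)^j+\cO(\norm{s}^2)$. This is what supplies the $+\tfrac12W_{H(z)}\ell$.
The paper makes this bookkeeping mechanical with the Euclidean Tweedie expansion $\sigma^{-2}\E[s]=\nabla b(0)+\tfrac{\sigma^2}{2}\nabla\bigl(\Delta b+\norm{\nabla b}^2\bigr)(0)+\cO(\sigma^4)$ for $b=f+m_\sigma$. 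That reduces everything to three jets: $\nabla m_\sigma(0)=0$, $\nabla^2m_\sigma(0)=-\Ric^\sharp_z$, and $\nabla\Delta m_\sigma(0)=\mathcal C_M(z)$.

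\emph{The drift.} This is the more serious gap. You cannot take \cref{eq:CM-def} to be ``whatever comes out'': $\mathcal C_M$ is fixed by the statement. The claim $g^{\mathrm{inh}}_M=\tfrac12\mathcal C_M$ is precisely the identity $\nabla\Delta c_3(0)=\mathcal C_M(z)$ for the cubic form $c_3(s)=\tfrac12\sum\mathrm{II}^\alpha_{ik}A^\alpha_{ilm}s^ks^ls^m-\tfrac16\sum H^\alpha A^\alpha_{klm}s^ks^ls^m$. These are the cubic jets of $\log J_{\mathrm{gr}}$ and of $-\ip{H(z)}{h(s)}$; they add in the logarithm rather than being ``paired.'' Proving the identity requires three ingredients:
(i) $\partial_i\partial_j\partial_kh^\alpha(0)=A^\alpha_{ijk}$, the frame components of $\nabla^\perp\mathrm{II}_z$;
(ii) Codazzi total symmetry of $A$, which gives $\sum_lA^\alpha_{ill}=\nabla_iH^\alpha$;
(iii) the cubic-form identities $\nabla_p\Delta\bigl(\tfrac12T_{klm}s^ks^ls^m\bigr)(0)=2\sum_kT_{kkp}+\sum_lT_{pll}$ for $T$ symmetric in its last two indices, and $\nabla_p\Delta\bigl(\tfrac16B_{klm}s^ks^ls^m\bigr)(0)=\sum_jB_{jjp}$ for totally symmetric $B$.

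\emph{Your sanity check.} Checking on $S^d$ cannot catch an error in the drift, because $\mathcal C_{S^d}\equiv0$. A plane curve with uniform $q$ can. There $m_\sigma(s)=\tfrac13\kappa\kappa's^3+\cO(s^4)$, so $r_\sigma=\sigma^2\kappa\kappa'+\cO(\sigma^4)=\tfrac{\sigma^2}{2}\mathcal C_M+\cO(\sigma^4)$. The same example shows that only the ``if'' half of ``$g^{\mathrm{inh}}_M$ vanishes exactly when the embedding is parallel'' holds: at an inflection point, $\kappa=0\neq\kappa'$ gives $\mathcal C_M=0$ without parallelism.
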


Examining \cref{thm:extrinsic}, it is clear that ambient Gaussian corruption does not merely smooth the intrinsic density on $M$; it distorts the target score through a curvature-dependent embedding term, and it additionally injects a drift that does not depend on the score at all. Thus, even after projecting onto the tangent bundle and Rao-Blackwellizing away the singular fiber noise, ambient DSM is biased relative to intrinsic noising by the explicit $\sigma^2(g^{\mathrm{ext}}_M+g^{\mathrm{inh}}_M)$ term. The proof is a graph-coordinate Bayes calculation with two geometric factors --- the induced volume-form correction and the tube-Jacobian mean-curvature correction. Their quadratic jets combine without cancellation into the operator $\tfrac12 W_{H(z)}-\Ric^\sharp_z$ acting on the score, and their cubic jets, which are linear in $\nabla^\perp\mathrm{II}$, combine into the score-independent vector $\tfrac12\mathcal C_M(z)$. We defer the full proof to \cref{app:second-order}.

\begin{remark}[Comments on \cref{eq:gM-ext-main,eq:gM-inh-main}]\label{rem:comments} We note that (i) the extrinsic term is invisible to any analysis that corrupts $Z$ by intrinsic manifold noise (e.g. Brownian motion on $M$ or geodesic noising), where only $b_q(z)$ appears. The term $g_M^{\mathrm{ext}}(z)$ is a result of ambient Gaussian corruption and vanishes identically in intrinsic-noising analyses. (ii) Using the Gauss equation $\Ric^\sharp_z=W_{H(z)}-\mathcal S_z$ with $\mathcal S_z \doteq \sum_\alpha W_{n_\alpha}^2$ for any orthonormal normal frame, one can equivalently write $g_M^{\mathrm{ext}}(z)=(\mathcal S_z-\tfrac12 W_{H(z)})\nabla_M\log q(z)$. (iii) The same holds for $g_M^{\mathrm{inh}}$, which is likewise a pure artifact of ambient corruption; unlike $g_M^{\mathrm{ext}}$ it is not a function of the intrinsic score, and unlike $\Ric^\sharp$ it is not an intrinsic invariant of $(M,g_M)$. The cylinder over a plane curve of curvature $\kappa$ is intrinsically flat, so $b_q$ and $\Ric^\sharp$ see nothing, yet $g_M^{\mathrm{inh}}=\kappa\kappa'e_1\ne 0$ wherever $\kappa'\ne0$.
\end{remark}

\begin{figure}[t]
\centering
\includegraphics[width=\linewidth]{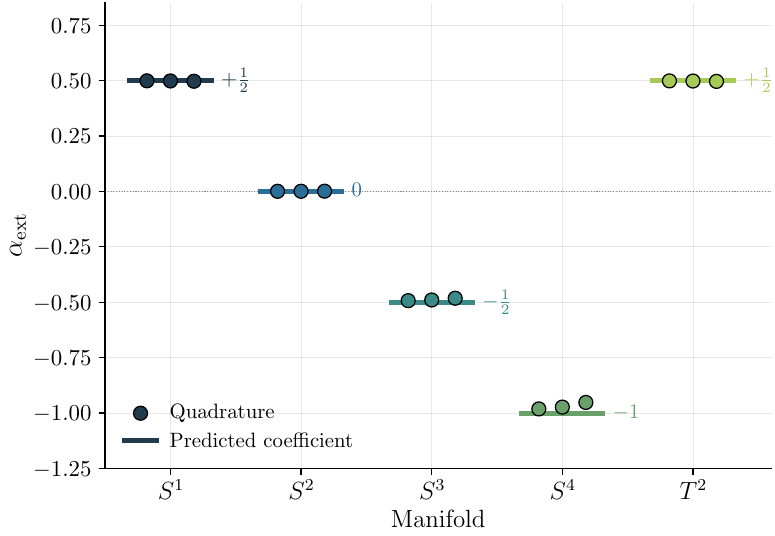}
\caption{Extrinsic coefficient $\alpha_{\mathrm{ext}}$ across manifolds.
Quadrature estimates (one dot per $\sigma\in\{0.05,0.06,0.08\}$) are
computed by Gauss-Hermite quadrature of $r_\sigma(z)$ and compared to
the predicted coefficients $\alpha_d=1-d/2$ on $S^d$ and
$\alpha=+\tfrac12$ on $T^2$. Latent density $q$ is von Mises-Fisher
with $\kappa=2$ on each $S^d$ and a wrapped Gaussian with $\kappa=1.5$
on $T^2$. Each manifold shown has parallel second fundamental form, so
$g^{\mathrm{inh}}_M\equiv 0$ (\cref{eq:gM-inh-main}) and the measurement isolates
$g^{\mathrm{ext}}_M$.}
\label{fig:extrinsic-bias}
\end{figure}

\Cref{fig:extrinsic-bias} computes $r_\sigma(z)$ numerically, subtracts the intrinsic score and the Tweedie term, and divides by $\sigma^2\nabla_M\log q(z)$ to isolate the dimensionless extrinsic coefficient $\alpha_{\mathrm{ext}}$ we predict. The manifolds shown are parallel ($\nabla^\perp\mathrm{II}\equiv0$), so $g^{\mathrm{inh}}_M$ vanishes there.

\subsection{Hyperspheres}
\label{sec:sphere}

We record the specialization of \cref{eq:extrinsic-main} to the unit $d$-sphere $S^d\subset\R^{d+1}$, a common application of ambient DSM.

\begin{corollary}[Extrinsic coefficient on $S^d$]
\label{cor:sphere-coefficient}
For $M=S^d\subset\R^{d+1}$ with the round metric and outward normal
$\nu=z$, $W_\nu=-\Id_{T_zM}$, $H(z)=-dz$,
$W_{H(z)}=d\Id_{T_zM}$, and $\Ric^\sharp_z=(d-1)\Id_{T_zM}$, so
\begin{equation}
 g_{S^d}^{\mathrm{ext}}(z)
 =
 \bigl(1-\tfrac{d}{2}\bigr)\nabla_M\log q(z).
 \label{eq:sphere-coeff}
\end{equation}
In particular, the scalar multiplier $\alpha_d \doteq 1-d/2$ attached to
$\nabla_M\log q(z)$ in the $\sigma^2$ correction is identically 0 when $d=2$.
Moreover the round embedding is parallel, $\nabla^\perp\mathrm{II}\equiv 0$, so
$\mathcal C_{S^d}\equiv 0$ and $g^{\mathrm{inh}}_{S^d}\equiv 0$: the complete
$\sigma^2$ coefficient on $S^d$ is $b_q+(1-\tfrac d2)\nabla_M\log q$.
\end{corollary}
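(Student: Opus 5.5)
The plan is to specialize \cref{thm:extrinsic} directly. Once the three geometric objects $W_{H(z)}$, $\Ric^\sharp_z$ and $\nabla^\perp\mathrm{II}$ are computed for the round embedding $S^d\subset\R^{d+1}$, \cref{eq:extrinsic-main} does the rest. No new analysis is needed beyond fixing sign conventions consistently with those used in \cref{thm:extrinsic}, namely $\mathrm{II}(X,Y)=(\bar\nabla_X Y)^\perp$ and $\ip{W_u X}{Y}=\ip{\mathrm{II}(X,Y)}{u}$.

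The first step is the second fundamental form. The normal bundle is spanned by $\nu(z)=z$. For tangent fields $X,Y$ on $S^d$ we have $\ip{Y}{z}=0$, and differentiating along $X$ gives $\ip{\bar\nabla_X Y}{z}=-\ip{Y}{X}$. Hence $\mathrm{II}(X,Y)=-\ip{X}{Y}\,z$, so $W_\nu=-\Id_{T_zM}$. Tracing over an orthonormal frame yields $H(z)=-dz$, and by linearity of $u\mapsto W_u$ we get $W_{H(z)}=-d\,W_\nu=d\Id$.

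The second step is the Ricci endomorphism, which I would obtain from the Gauss equation quoted in \cref{rem:comments}(ii). With a single normal $n=\nu$, $\mathcal S_z=W_\nu^2=\Id$, so $\Ric^\sharp_z=W_{H(z)}-\mathcal S_z=(d-1)\Id$, matching the sectional curvature $1$ of the round sphere. Substituting into \cref{eq:gM-ext-main} gives
\[
g^{\mathrm{ext}}_{S^d}=\bigl(\tfrac d2-(d-1)\bigr)\nabla_M\log q=\bigl(1-\tfrac d2\bigr)\nabla_M\log q,
\]
which vanishes at $d=2$.

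The third step handles the inhomogeneity term. Write $\mathrm{II}=-g_M\otimes\nu$. Then $\nabla^\perp\mathrm{II}=-(\nabla g_M)\otimes\nu-g_M\otimes\nabla^\perp\nu$. The first term vanishes by metric compatibility. For the second, $\nabla^\perp_X\nu=(\bar\nabla_X z)^\perp=X^\perp=0$. Hence $\nabla^\perp\mathrm{II}\equiv0$. Since $\mathcal C_M$ is, by \cref{eq:CM-def}, a contraction linear in $\nabla^\perp\mathrm{II}$, it vanishes, and so does $g^{\mathrm{inh}}_{S^d}$. The full coefficient is then read off from \cref{eq:extrinsic-main}.

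The main obstacle is only the sign bookkeeping. The orientation of $H$ relative to the outward normal must match the convention used in deriving \cref{thm:extrinsic}, since flipping it would change $\tfrac12 W_H$ to $-\tfrac d2\Id$. I would check this by confirming that the Gauss equation as stated reproduces the known positive Ricci curvature $(d-1)$. That check is independent of orientation because $W_H$ is quadratic in $\mathrm{II}$.
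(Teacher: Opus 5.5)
Your proposal is correct and follows the paper's route: direct substitution into the extrinsic term. Both arguments compute $\mathrm{II}_z(v,w)=-\ip{v}{w}z$, hence $W_\nu=-\Id$, $\mathcal S_z=\Id$ and $W_{H(z)}=d\Id$. The paper plugs these into the form $(\mathcal S_z-\tfrac12 W_{H(z)})\nabla_M\log q$ and cross-checks with $\Ric^\sharp_z=(d-1)\Id$, which is your Gauss-equation step run in the other direction.

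The one genuinely different sub-step is parallelism. The paper notes that in graph coordinates the sphere is the graph of the even function $h(s)=-(1-\sqrt{1-\norm{s}^2})$. Its third jet therefore vanishes, and the graph-coordinate identity $\partial_i\partial_j\partial_k h^\alpha(0)=A^\alpha_{ijk}$ then gives $\nabla^\perp\mathrm{II}=0$. You instead compute $\nabla^\perp\mathrm{II}$ covariantly from $\mathrm{II}=-g_M\otimes\nu$, using $\nabla g_M=0$ and $\nabla^\perp\nu=(\bar\nabla z)^\perp=0$. Your version is self-contained and does not rely on that third-jet identification. The paper's version reuses coordinates already set up for the general theorem.

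The orientation worry in your last paragraph is not a real obstacle. With $\mathrm{II}$ normal-vector-valued and $W_u$ defined by $\ip{W_u v}{w}=\ip{\mathrm{II}(v,w)}{u}$, neither $H(z)=-dz$ nor $W_{H(z)}$ involves a choice of unit normal, since $W_{H(z)}$ is bilinear in $\mathrm{II}$. So $W_{H(z)}=d\Id$ unambiguously. The outward convention only fixes the sign of $W_\nu$, which enters $\mathcal S_z$ squared.
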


We note that the $S^d$ specialization is unique because $S^d$ is an Einstein manifold: the Ricci curvature tensor is proportional to the metric tensor. In particular, on $S^2$, $\tfrac12 W_H=\Ric^\sharp=\Id$ forces the two curvature contributions to cancel; since the round embedding is also parallel, the entire extrinsic $\sigma^2$ bias vanishes on $S^2$. Proof is a direct substitution; see \cref{app:sphere-d}.

\begin{figure*}[t]
\centering
\includegraphics[width=\linewidth]{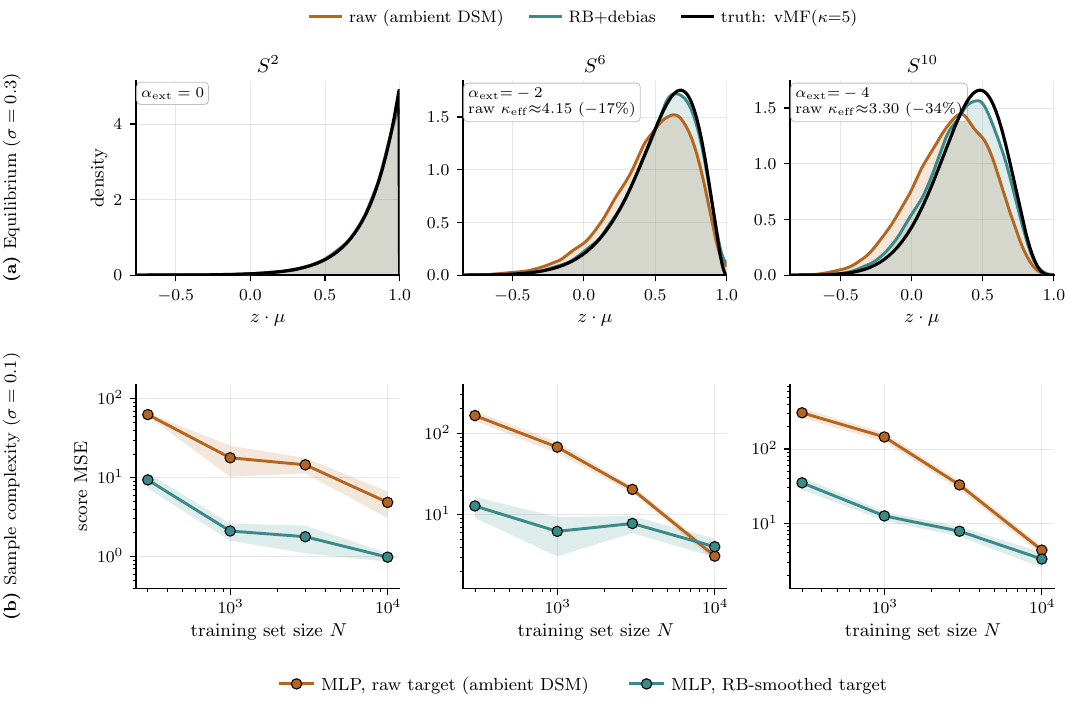}
\caption{\textit{(a)} Densities of $z\cdot\mu$ at the equilibrium of three closed-form Langevin drifts ($\sigma{=}0.3$): the intrinsic score $\nabla_M\log q$ (black), the raw ambient-DSM drift $(1+\sigma^2\alpha_d)\nabla_M\log q$ with $\alpha_d{=}1-d/2$ (orange), and its \cref{cor:sphere-coefficient} debias $(1-\sigma^2\alpha_d)(1+\sigma^2\alpha_d)\nabla_M\log q$ (blue). \textit{(b)} Score MSE of the same MLP architecture and training budget regressing on the raw $T_{\sigma,i}$ target (orange) versus the Rao-Blackwellized $\widehat r_{\sigma,i}$ target (blue). The vertical gap is the sample-efficiency value of Rao-Blackwellized preprocessing and widens with $d$ in the way \cref{thm:variance-collapse} predicts.}
\label{fig:dim-payoff}
\end{figure*}

\Cref{fig:dim-payoff} demonstrates this effect on three hyperspheres, where $g^{\mathrm{inh}}_M\equiv0$ and the scalar debias of \cref{cor:sphere-coefficient} is therefore the complete extrinsic correction.

\section{Discussion}
\label{sec:discussion}

\paragraph{Intrinsic generation and inference.}
\cref{eq:main-intrinsic} identifies the canonical target as an intrinsic score surrogate, $r_\sigma(z)=\nabla_M\log q(z)+\cO(\sigma^2)$, so the intrinsic Langevin dynamics
\begin{equation}
dZ_t = r_\sigma(Z_t)dt + \sqrt{2}dB_t^M
\label{eq:langevin}
\end{equation}
are an $\cO(\sigma^2)$-accurate drift surrogate for Langevin sampling targeting $q$ on $M$. \Cref{thm:variance-collapse} makes this surrogate statistically usable: the raw tangent denoising target has variance diverging at the $d/\sigma^2$ rate, whereas $r_\sigma(\pi(X))$ has bounded variance, so Rao-Blackwellization is not a constant-factor improvement; it removes the singular fiber-noise channel and isolates the intrinsic score signal.

\paragraph{Ambient vs.\ intrinsic DSM.}
The comparison between ambient and intrinsic methods is often framed qualitatively: intrinsic methods require manifold infrastructure while ambient methods do not. \Cref{eq:extrinsic-main} quantifies the exact bias resulting from using ambient instead of intrinsic methods. Given any embedded $M$ and any $\sigma$, the finite-$\sigma$ extrinsic bias of ambient DSM is $\sigma^2\bigl(g^{\rm ext}_M(z)+g^{\rm inh}_M(z)\bigr)$, where $g^{\rm ext}_M=\bigl(\frac12 W_H-\operatorname{Ric}^{\sharp}\bigr)\nabla_M \log q$ and $g^{\rm inh}_M=\frac12\mathcal C_M$, computable from the second fundamental form, its covariant derivative, and the intrinsic score. Practitioners may (i) accept it (if $\sigma$ is small enough that $\sigma^2\norm{g^{\mathrm{ext}}+g^{\mathrm{inh}}}$ is below the target accuracy), (ii) subtract it off, leaving only the intrinsic flat-Tweedie bias $\sigma^2 b_q$, or (iii) avoid regimes where it dominates. The two corrections differ in what they require: $g^{\mathrm{ext}}_M$ is a linear operator applied to an estimate of the score, so its accuracy inherits the score error, whereas $g^{\mathrm{inh}}_M$ is a fixed vector field determined by the embedding alone and can be subtracted without any score estimate.

\paragraph{$S^2$ bias.}
\cref{cor:sphere-coefficient} shows that $S^2$ --- the most commonly used test for manifold-aware score matching --- is a coincidentally benign case: the extrinsic coefficient $1-d/2$ vanishes exactly at $d=2$, so on $S^2$ ambient DSM matches the intrinsic score up to the intrinsic flat-Tweedie bias $\sigma^2 b_q$, without any extrinsic debiasing. $S^2$ is in fact benign for two independent reasons: the Einstein identity $\tfrac12 W_H=\Ric^\sharp$ on unit $S^2$ cancels the score-dependent term, and the round embedding is parallel, so the score-independent term $g^{\mathrm{inh}}_M$ vanishes as well. The formula predicts measurable second-order bias on $S^1$, $S^3$, or $S^d$ for $d\ge 3$ (\cref{fig:dim-payoff}), and on any non-parallel embedding it predicts bias even where the intrinsic score vanishes.

\paragraph{Limitations and open directions.}
Our results are small-noise population statements: we characterize $r_\sigma$ in terms of the true joint law of $(Z,X)$ and its expansion as $\sigma\to 0^+$. A finite-sample local-averaging rate is given in \cref{app:finite-sample}, but the interplay between $\sigma$ and sample size in the regime where the extrinsic bias and the statistical error are of comparable magnitude (i.e.\ how to choose $\sigma$ given $N$) is a natural next question. Our numerical experiments are carried out on $S^d$ and on the flat product torus, whose embeddings are parallel; they therefore isolate $g^{\mathrm{ext}}_M$ and say nothing about $g^{\mathrm{inh}}_M$, which vanishes identically on those supports. Measuring the curvature-inhomogeneity drift requires a non-parallel embedding, which we leave to future work. A sharper practical question is how to correct that drift $g^{\mathrm{inh}}_M=\tfrac12\mathcal C_M$ when the embedding is known only through samples: unlike $g^{\mathrm{ext}}_M$ it does not vanish where the intrinsic score vanishes, so it cannot be neglected in low-density regions, and estimating it requires the third-order local geometry of the embedding rather than just its curvature. Extending the expansion to higher order, to manifolds with boundary, or to anisotropic noise are also open.

Scientific data are often constrained to embedded submanifolds --- e.g. orientation and rotation groups in robotics and protein structure, directional and spherical data in geosciences and astronomy, and products of spaces. The question of whether, and to what order, ambient score models learn the correct intrinsic object is both practically and conceptually central: it informs whether a generic score-matching pipeline may be deployed on manifold-valued data without heavy geometric tools. Our results answer that question quantitatively at small $\sigma$ and identify a canonical Rao-Blackwellized target that is implementation-agnostic, informing the development of score-matching-based pipelines for manifold-constrained data.

\bibliographystyle{icml2026}
\bibliography{references}
\newpage
\appendix
\onecolumn

\section{Some Useful Results from Riemannian Geometry}
\label{app:notation}

Here we define some basic geometric notation and record some useful facts from differential geometry that are used in the main text and in the remaining appendices. We shall follow the convention of \citet{docarmo1992riemannian} and give all statements without proof here.

\paragraph{Submanifold setup.}
Throughout the text $M\subset\R^D$ is taken to be a $C^6$ embedded submanifold without boundary, of dimension $d<D$, with positive reach $\reach(M)>0$. For $z\in M$ we write $T_zM$ and $N_zM$ for its tangent and normal spaces in $\R^D$, and $P_T(z): \R^D\to T_zM$, $P_N(z)\doteq I-P_T(z): \R^D\to N_zM$ for the orthogonal projections. The tangent bundle is $TM=\bigsqcup_{z\in M}T_zM$. The induced Riemannian metric $g_M$ on $M$ is the restriction of the Euclidean inner product on $\R^D$ to each $T_zM$. Gradients on $M$ are denoted $\nabla_M$ and are defined by $\ip{\nabla_M f(z)}{v}=d f(z)[v]$ for $v\in T_zM$; equivalently, $\nabla_M f(z)=P_T(z)\nabla_{\!\R^D} \tilde f(z)$ for any smooth extension $\tilde f$.

\paragraph{Second fundamental form.}
The second fundamental form of $M\subset\R^D$ at $z$ is the symmetric bilinear map \[ \mathrm{II}_z: T_zM\times T_zM \to N_zM, \qquad \mathrm{II}_z(u,v) \doteq P_N(z)\overline{\nabla}_{\!u} \bar v, \] where $\overline{\nabla}$ is the ambient Euclidean connection and $\bar v$ is any tangent extension of $v$. Informally, $\mathrm{II}_z(u,v)$ measures how $M$ bends away from its tangent plane in the $(u,v)$ direction.

\paragraph{Weingarten operator.}
For a normal vector $\nu\in N_zM$, the Weingarten operator $W_\nu: T_zM\to T_zM$ is the symmetric endomorphism characterized by
\begin{equation}
\ip{W_\nu u}{v}_{\R^D} = \ip{\mathrm{II}_z(u,v)}{\nu}_{\R^D},
\qquad u,v\in T_zM.
\label{eq:weingarten-def}
\end{equation}
Equivalently, $W_\nu u = -P_T(z)\overline{\nabla}_{\!u}\tilde\nu$ for any normal extension $\tilde\nu$ of $\nu$. On the unit sphere $S^d\subset\R^{d+1}$ with outward normal $\nu=z$, $W_\nu=-\mathrm{Id}_{T_zM}$.

\paragraph{Mean curvature vector.}
The mean curvature vector at $z$ is \[ H(z) \doteq \sum_{i=1}^d \mathrm{II}_z(e_i,e_i) \in N_zM, \] for any orthonormal basis $(e_i)_{i=1}^d$ of $T_zM$. The operator $W_{H(z)}$, obtained by first forming the mean curvature vector and then taking its Weingarten operator, appears in the extrinsic correction. On $S^d$ with outward normal $\nu=z$, $H(z)=-dz$ and hence $W_{H(z)}=d\mathrm{Id}_{T_zM}$.

\paragraph{Ricci endomorphism.}
The intrinsic Ricci tensor $\Ric$ of $(M,g_M)$ is raised to an endomorphism $\Ric^\sharp_z: T_zM\to T_zM$ via the metric: $\ip{\Ric^\sharp_z u}{v}=\Ric_z(u,v)$. On $S^d$, $\Ric^\sharp_z=(d-1)\mathrm{Id}_{T_zM}$.

\paragraph{Gauss equation.}
In the main text, we link intrinsic and extrinsic curvature via the Gauss equation for a submanifold $M^d\subset\R^D$ with orthonormal normal frame $(n_\alpha)_{\alpha=1}^{D-d}$:
\begin{equation}
\Ric^\sharp_z = W_{H(z)} - \mathcal{S}_z,
\qquad
\mathcal{S}_z \doteq \sum_{\alpha=1}^{D-d} W_{n_\alpha}^2.
\label{eq:gauss-equation}
\end{equation}
This identity lets us rewrite the extrinsic operator $\tfrac12 W_{H(z)} - \Ric^\sharp_z$ equivalently as $\mathcal{S}_z-\tfrac12 W_{H(z)}$; \cref{rem:comments} uses this alternative form.

\paragraph{Covariant derivative of $\mathrm{II}$.}
Let $\nabla^\perp$ denote the normal connection on $NM$. The covariant derivative of the second fundamental form is the $NM$-valued $3$-tensor \[ (\nabla^\perp_w\mathrm{II})_z(u,v) \doteq \nabla^\perp_w\bigl(\mathrm{II}(u,v)\bigr)-\mathrm{II}(\nabla_wu,v)-\mathrm{II}(u,\nabla_wv). \] Because the ambient space is flat, the Codazzi equation states that $(\nabla^\perp_w\mathrm{II})_z(u,v)$ is symmetric in all three arguments. Fixing orthonormal frames $\{e_i\}_{i=1}^d$ of $T_zM$ and $\{n_\alpha\}_{\alpha=1}^{D-d}$ of $N_zM$, we write
\begin{equation}
\mathrm{II}^\alpha_{ij}\doteq\ip{\mathrm{II}_z(e_i,e_j)}{n_\alpha},
\qquad
A^\alpha_{ijk}\doteq\ip{(\nabla^\perp_{e_k}\mathrm{II})_z(e_i,e_j)}{n_\alpha},
\label{eq:codazzi-jet}
\end{equation}
so that $A^\alpha_{ijk}$ is totally symmetric and $\nabla_kH^\alpha=\sum_jA^\alpha_{jjk}$ with $H^\alpha \doteq \sum_i\mathrm{II}^\alpha_{ii}$. We call the embedding \emph{parallel} at $z$ if $\nabla^\perp\mathrm{II}_z=0$, equivalently $A^\alpha_{ijk}=0$. Affine subspaces, round spheres, and Riemannian products of round spheres (e.g.\ the flat $S^1(R_1)\times S^1(R_2)\subset\R^4$) are parallel at every point.

\paragraph{Curvature-inhomogeneity vector.}
The tangent vector
\begin{equation}
\mathcal C_M(z)
\doteq
\nabla_M\norm{\mathrm{II}}^2(z)-\tfrac12\nabla_M\norm{H}^2(z)+\sum_{i=1}^dW_{\nabla^\perp_{e_i}H(z)}e_i
\;\in\;T_zM,
\label{eq:CM-def}
\end{equation}
where $\norm{\mathrm{II}}^2=\sum_{\alpha,i,j}(\mathrm{II}^\alpha_{ij})^2$ and $\norm{H}^2=\sum_\alpha(H^\alpha)^2$, is linear in $\nabla^\perp\mathrm{II}$ and vanishes identically wherever the embedding is parallel. In the frame \cref{eq:codazzi-jet},
\begin{equation}
(\mathcal C_M)_k
=\sum_\alpha\Bigl[\,2\sum_{i,j}\mathrm{II}^\alpha_{ij}A^\alpha_{ijk}
-H^\alpha\sum_jA^\alpha_{jjk}
+\sum_{i,j}\mathrm{II}^\alpha_{ik}A^\alpha_{ijj}\Bigr].
\label{eq:CM-frame}
\end{equation}
Unlike $\Ric^\sharp$, $\mathcal C_M$ is not determined by the intrinsic geometry of $(M,g_M)$: the cylinder over a plane curve of curvature $\kappa$ is intrinsically flat, yet has $\mathcal C_M=2\kappa\kappa'e_1$ with $e_1$ the unit tangent along the curve factor.

\paragraph{Reach and tubular coordinates.}
Positive reach, $\reach(M)>0$, is the largest $r_0$ such that every point $x\in\R^D$ with $\mathrm{dist}(x,M)<r_0$ has a unique nearest point $\pi(x)\in M$. For any $r_0<\reach(M)$, the map \[ \Psi: \{(y,u):y\in M,\ u\in N_yM,\ \norm{u}<r_0\} \to \mathrm{Tub}_{r_0}(M), \qquad \Psi(y,u) \doteq y+u, \] is a $C^3$ diffeomorphism onto the open $r_0$-tube around $M$. Its inverse is $(\pi(x),x-\pi(x))$.

\paragraph{Exponential map.}
The intrinsic exponential map at $z\in M$ is $\Exp_z: T_zM\supseteq B(0,r)\to M$, defined by $\Exp_z(v)=\gamma_v(1)$ where $\gamma_v$ is the geodesic with $\gamma_v(0)=z$ and $\dot\gamma_v(0)=v$. For $r$ smaller than the injectivity radius at $z$, $\Exp_z$ is a $C^3$ diffeomorphism onto its image; normal coordinates on $M$ near $z$ are its inverse.

\section{Proof of the Leading-Order Identification and Variance Collapse}
\label{app:leading-order-proof}
\label{app:variance-collapse-proof}

Here we give the proof of \cref{eq:main-intrinsic} and \cref{thm:variance-collapse}. The argument reduces both statements to a tubular-coordinate Bayes calculation combined with a manifold Stein identity. The argument is self-contained given \cref{prop:fiber-posterior} below, whose proof we defer to \cref{app:fiber-posterior}.

\subsection{Local Coordinates on a Curved Manifold}

Fix $z\in M$. Let \[ F_z(v) \doteq \Exp_z(v), \qquad v\in T_zM, \] be geodesic normal coordinates on a sufficiently small neighborhood of $0 \in T_zM$. Define the tangent chord map
\begin{equation}
G_z(v) \doteq P_T(z)\bigl(F_z(v)-z\bigr)\in T_zM.
\label{eq:Gz}
\end{equation}

\begin{lemma}[Chord expansion]
\label{lem:chord}
Uniformly in $z\in M$, for $v$ sufficiently small,
\begin{equation}
G_z(v)=v+R_{3,z}(v)+R_{\geq 4,z}(v),
\qquad
R_{3,z}(v)=\cO(\norm{v}^3),
\qquad
R_{\geq 4,z}(v)=\cO(\norm{v}^4),
\label{eq:Gz-expansion}
\end{equation}
where the cubic part $R_{3,z}$ is odd: $R_{3,z}(-v)=-R_{3,z}(v)$.
\end{lemma}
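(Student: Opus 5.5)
Write the Taylor expansion of the geodesic $\gamma_v(t)=F_z(tv)$ in $t$ and use the ambient geodesic equation for a submanifold. Since $\gamma_v$ is an intrinsic geodesic, its ambient acceleration is purely normal: $\ddot\gamma_v(t)=\mathrm{II}_{\gamma_v(t)}(\dot\gamma_v,\dot\gamma_v)\in N_{\gamma_v(t)}M$. Expanding at $t=0$ with $\gamma_v(0)=z$ and $\dot\gamma_v(0)=v$ gives
\[
F_z(v)-z=v+\tfrac12\mathrm{II}_z(v,v)+\tfrac16\dddot\gamma_v(0)+\cO(\norm{v}^4).
\]
The remainder is legitimate because $M$ is $C^6$, so $F_z$ is at least $C^4$ near $0$. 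The third derivative $\dddot\gamma_v(0)$ is a homogeneous cubic in $v$, because the geodesic of $sv$ is $t\mapsto\gamma_v(st)$.

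Now apply $P_T(z)$. The linear term stays $v$, and the quadratic term $\tfrac12\mathrm{II}_z(v,v)$ is normal, so it is killed. This is the key point: no quadratic tangential term survives.

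For the cubic term, differentiate $\ddot\gamma=\mathrm{II}(\dot\gamma,\dot\gamma)$ once. Its tangential part equals the derivative of the normal field along a tangent direction, which the Weingarten relation \cref{eq:weingarten-def} identifies as $-W_{\mathrm{II}_z(v,v)}v$. Hence
\[
R_{3,z}(v)=-\tfrac16 W_{\mathrm{II}_z(v,v)}v .
\]
This is homogeneous of degree three, hence odd, which gives $R_{3,z}(-v)=-R_{3,z}(v)$. Even without the explicit formula, oddness follows from homogeneity: the cubic Taylor coefficient of $F_z$ at $0$ is a symmetric trilinear form evaluated at $(v,v,v)$. I would also record this explicit formula, since later expansions will need it.

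Everything of order four and higher is collected in $R_{\ge4,z}$, defined as $G_z(v)-v-R_{3,z}(v)$. Taylor's theorem with integral remainder bounds it by $C\norm{v}^4$, where $C$ is controlled by $\sup\norm{D^4F_z}$ on a ball of fixed radius.

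Uniformity in $z$ is the only genuinely delicate step, and I would handle it by compactness. Because $M$ is compact and $C^6$, the injectivity radius is bounded below by some $\iota>0$. The exponential map $(z,v)\mapsto\Exp_z(v)$ is jointly $C^4$ on $\{\norm{v}\le\iota/2\}$, as it solves the geodesic ODE whose coefficients (the Christoffel symbols, or equivalently $\mathrm{II}$) are $C^4$. So the fourth derivatives are uniformly bounded on this compact set. The same bound controls the $\cO(\norm{v}^3)$ constant for $R_{3,z}$, via $\sup_z\norm{\mathrm{II}_z}^2$. The main care needed is to check that the derivative count suffices (we lose derivatives through the ODE), and $C^6$ leaves ample margin.
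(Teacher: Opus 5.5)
Your proposal is correct and follows essentially the same route as the paper: Taylor-expand $F_z=\Exp_z$ in ambient space, note that $P_T(z)$ kills the normal quadratic term $\tfrac12\mathrm{II}_z(v,v)$, and get oddness of the cubic term because it is a symmetric trilinear form evaluated on the diagonal (equivalently, it is homogeneous of degree $3$). Your explicit formula $R_{3,z}(v)=-\tfrac16 W_{\mathrm{II}_z(v,v)}v$ checks out (on the unit sphere it gives $-\tfrac16\norm{v}^2v$), and together with your compactness argument for uniformity in $z$ it adds detail that the paper only asserts.
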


\begin{proof}
The ambient Taylor expansion of the exponential map at $z$ is
\[
F_z(v)=z+v+\tfrac12 \mathrm{II}_z(v,v)+C_z(v,v,v)+\cO(\norm{v}^4),
\]
where $\mathrm{II}_z$ is the (symmetric, quadratic) second fundamental form and
$C_z$ is a symmetric trilinear form valued in $\R^D$ coming from the third
Taylor coefficient of $F_z$. Applying $P_T(z)$ annihilates the normal
quadratic term $\tfrac12\mathrm{II}_z(v,v)\in N_zM$. The tangent component of
the cubic term, $R_{3,z}(v)\doteq P_T(z)C_z(v,v,v)$, is odd in $v$
because $C_z$ is symmetric-trilinear and $v\mapsto(v,v,v)$ changes sign
under $v\mapsto -v$. Higher-order corrections are collected in
$R_{\geq 4,z}(v)=\cO(\norm{v}^4)$.
\end{proof}

On the event $\{\pi(X)=z\}$, define \[ V_z \doteq \Exp_z^{-1}(Z)\in T_zM. \] Then \cref{eq:tangent-target-simplified} becomes
\begin{equation}
T_\sigma=\frac{1}{\sigma^2}G_z(V_z)
\qquad\text{on }\{\pi(X)=z\}.
\label{eq:T-local}
\end{equation}

\subsection{Fiber Posterior Normal Form}

\begin{proposition}[Fiber posterior normal form]
\label{prop:fiber-posterior}
There exists $\sigma_0>0$ such that for each $\sigma\in(0,\sigma_0]$ and each
$z\in M$, the conditional law of $V_z$ given $\pi(X)=z$ admits a density of
the form
\begin{equation}
\mu_{\sigma,z}(dv)
=
\frac{1}{\mathcal{Z}_{\sigma,z}}
\gamma_\sigma(v)a_{\sigma,z}(v)dv,
\label{eq:posterior-density}
\end{equation}
where
\[
\gamma_\sigma(v)
 \doteq 
(2\pi \sigma^2)^{-d/2}
\exp\!\left(-\frac{\norm{v}^2}{2\sigma^2}\right)
\]
and
\begin{equation}
a_{\sigma,z}(v)=q(F_z(v))J_z(v)\Lambda_{\sigma,z}(v).
\label{eq:a-factor}
\end{equation}
Here $J_z$ is the geodesic-coordinate volume Jacobian and $\Lambda_{\sigma,z}$
is a smooth positive correction factor satisfying, uniformly in $z$,
\begin{align}
J_z(v)&=1+\cO(\norm{v}^2), & \nabla_v J_z(0)&=0, \label{eq:Jz-expansion}\\
\Lambda_{\sigma,z}(v)&=1+\cO(\sigma^2+\norm{v}^2), & \nabla_v \Lambda_{\sigma,z}(0)&=0. \label{eq:Lambdaz-expansion}
\end{align}
Moreover, for every integer $k\ge 1$,
\begin{equation}
\sup_{z\in M}\int \norm{v}^k\mu_{\sigma,z}(dv)\le C_k \sigma^k.
\label{eq:moment-estimate}
\end{equation}
\end{proposition}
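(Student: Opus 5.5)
The plan is to compute the joint law of $(\pi(X), Z)$ in tubular coordinates and then read off the conditional law of $V_z$. Start from the joint density of $(Z,X)$, namely $q(y)\,d\vol_M(y)\,(2\pi\sigma^2)^{-D/2}\exp(-\norm{x-y}^2/2\sigma^2)\,dx$. On $\mathcal E_\sigma$, change variables $x=\Psi(w,u)=w+u$ with $w\in M$, $u\in N_wM$, $\norm{u}<r_0$. The Federer--Gray/Weyl tube formula gives
\[
dx=\det\bigl(\Id_{T_wM}-W_u\bigr)\,d\vol_M(w)\,du .
\]
Integrating out $u$ over the normal fiber at $w=z$ then yields the joint density of $(\pi(X),Z)=(z,y)$ with respect to $d\vol_M(z)\,d\vol_M(y)$. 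It is
\[
q(y)\int_{N_zM}(2\pi\sigma^2)^{-D/2}e^{-\norm{z+u-y}^2/2\sigma^2}\det(\Id-W_u)\,du .
\]
Next, write $y=F_z(v)$ in geodesic normal coordinates, so that $d\vol_M(y)=J_z(v)\,dv$ with the standard expansion $J_z(v)=1-\tfrac16\Ric_z(v,v)+\cO(\norm v^3)$. This gives \cref{eq:Jz-expansion}, including $\nabla J_z(0)=0$, uniformly in $z$ by compactness and $C^6$ regularity.

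Using \cref{lem:chord} and $F_z(v)-z=v+\tfrac12\mathrm{II}_z(v,v)+\cO(\norm v^3)$, split $z-F_z(v)$ into a tangent part $-G_z(v)$ and a normal part $n_z(v)\doteq -P_N(z)(F_z(v)-z)=-\tfrac12\mathrm{II}_z(v,v)+\cO(\norm v^3)$. The exponent then separates as
\[
\norm{G_z(v)}^2+\norm{u+n_z(v)}^2 .
\]
The Gaussian in $u$ can be integrated exactly after the shift $u\mapsto u-n_z(v)$. This produces a factor $(2\pi\sigma^2)^{-d/2}\,\E_{\zeta}\bigl[\det(\Id-W_{\sigma\zeta-n_z(v)})\bigr]$, where $\zeta\sim\mathcal N(0,I_{D-d})$ is truncated to the tube with an exponentially small error. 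The determinant is a polynomial in $u$ of degree at most $d$. Its Gaussian average is $1+W$-linear terms in $n_z(v)$ plus terms of order $\sigma^2$; the odd moments of $\zeta$ vanish. Since $n_z(v)=\cO(\norm v^2)$, this factor is $1+\cO(\sigma^2+\norm v^2)$ with zero gradient at $v=0$.

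The tangent Gaussian factor is $\exp(-\norm{G_z(v)}^2/2\sigma^2)=\gamma_\sigma(v)\exp(-(\norm{G_z(v)}^2-\norm v^2)/2\sigma^2)$ times the normalizing power. By \cref{lem:chord}, $\norm{G_z(v)}^2-\norm v^2=2\ip{v}{R_{3,z}(v)}+\cO(\norm v^5)=\cO(\norm v^4)$. Define $\Lambda_{\sigma,z}$ as the product of this exponential factor and the normal-integral factor. The overall $v$-independent constants go into $\mathcal Z_{\sigma,z}$. Finally, apply Bayes' rule: dividing the joint density by the marginal of $\pi(X)$ at $z$ gives \cref{eq:posterior-density} with $a_{\sigma,z}$ as in \cref{eq:a-factor}. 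Smoothness and positivity follow from $C^6$ regularity and from the fact that $\det(\Id-W_u)>0$ for $\norm u<\reach(M)$.

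The main obstacle is the $\Lambda$ estimate \cref{eq:Lambdaz-expansion}. The term $\exp(-\cO(\norm v^4)/\sigma^2)$ is only $1+\cO(\norm v^2)$ on the scale $\norm v\lesssim\sigma$. The stated bound should therefore be understood, and proved, on a region $\norm v\le \sigma^{1-\epsilon}$ or in the weighted sense relevant under $\gamma_\sigma$, where $\norm v^4/\sigma^2\lesssim\norm v^2$ up to constants. The complement must be handled separately. For large $\norm v$ (beyond the injectivity/normal-coordinate radius, uniform by compactness), positive reach gives $\norm{z+u-y}\ge c\min(\norm v,1)$, which yields mass $\cO(e^{-c/\sigma^2})$. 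The intermediate region is handled by $\norm{G_z(v)}\ge \tfrac12\norm v$ for small $\norm v$. The gradient condition $\nabla\Lambda(0)=0$ holds because every correction term is at least quadratic in $v$.

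The moment bound \cref{eq:moment-estimate} comes from the same domination. On the small ball, $\gamma_\sigma a_{\sigma,z}\le C\,\gamma_{2\sigma}$-type bounds hold because $\norm{G_z(v)}^2\ge\tfrac12\norm v^2$. Also $\mathcal Z_{\sigma,z}\ge c>0$, since the Laplace-type integral of $a_{\sigma,z}$ at $v=0$ equals $q(z)(1+\cO(\sigma^2))$ and $q$ is bounded below. Outside the ball the contribution is exponentially small. Together these give $\int\norm v^k\,d\mu_{\sigma,z}\le C_k\sigma^k$ uniformly in $z$.
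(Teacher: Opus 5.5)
Your proposal is correct and follows essentially the same route as the paper's proof: the tube-formula change of variables, geodesic normal coordinates for $J_z$, the orthogonal split of $\norm{z+u-F_z(v)}^2$ into tangent and normal parts, exact Gaussian integration over the normal fiber, and absorbing $\exp\bigl((\norm{v}^2-\norm{G_z(v)}^2)/2\sigma^2\bigr)$ together with the normal integral into $\Lambda_{\sigma,z}$. Your moment argument dominates $\gamma_\sigma\Lambda_{\sigma,z}$ jointly via $\norm{G_z(v)}^2\ge\tfrac12\norm{v}^2$ and uses positive reach for the far region, which is more careful than the paper's claim that $a_{\sigma,z}$ is bounded on a fixed ball (it is not, because of the $\exp(\cO(\norm{v}^4)/\sigma^2)$ factor); one small correction is that $\Lambda_{\sigma,z}=1+\cO(\sigma^2+\norm{v}^2)$ with uniform constants holds on $\norm{v}\lesssim\sigma$, not on $\norm{v}\le\sigma^{1-\epsilon}$, where one only gets $\norm{v}^4/\sigma^2\le\sigma^{-2\epsilon}\norm{v}^2$ (the paper's $\sigma\sqrt{\log(1/\sigma)}$ version is loose in the same way, up to log factors).
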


\begin{proof}
The proof is rather involved and deserves its own appendix, see \cref{app:fiber-posterior}. The main point is that conditioning on $\pi(X)=z$
produces a posterior over latent tangent displacements $v$ whose leading
factor is the centered Gaussian $\exp(-\norm{v}^2/(2\sigma^2))$, while all
geometric corrections are smooth and even to first order.
\end{proof}

\subsection{A Manifold Stein Identity}

\begin{lemma}[Posterior Stein identity]
\label{lem:stein}
For every $z\in M$,
\begin{equation}
\frac{1}{\sigma^2}\E_{\mu_{\sigma,z}}[v]
=
\E_{\mu_{\sigma,z}}\!\left[\nabla_v \log a_{\sigma,z}(v)\right].
\label{eq:stein}
\end{equation}
\end{lemma}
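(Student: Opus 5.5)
This is Gaussian integration by parts applied to the posterior density of Proposition~\ref{prop:fiber-posterior}. Write $\mu_{\sigma,z}(dv)=\mathcal Z_{\sigma,z}^{-1}\gamma_\sigma(v)a_{\sigma,z}(v)\,dv$. The Gaussian factor satisfies $\nabla_v\gamma_\sigma(v)=-\sigma^{-2}v\,\gamma_\sigma(v)$. Hence for each coordinate direction $e_i$ of $T_zM$, with $v_i=\ip{v}{e_i}$, we have
\[
\frac{1}{\sigma^2}\int v_i\,\gamma_\sigma a_{\sigma,z}\,dv=-\int \partial_i\gamma_\sigma\, a_{\sigma,z}\,dv=\int \gamma_\sigma\,\partial_i a_{\sigma,z}\,dv-\bigl[\text{boundary}\bigr].
\]
Since $a_{\sigma,z}>0$, we can write $\partial_i a_{\sigma,z}=a_{\sigma,z}\,\partial_i\log a_{\sigma,z}$. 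Dividing by $\mathcal Z_{\sigma,z}$ then gives \cref{eq:stein} componentwise, so the plan is: (i) differentiate the Gaussian, (ii) integrate by parts once, (iii) control the boundary term, (iv) normalize.

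The only real issue is step (iii), because $a_{\sigma,z}$ is built from $F_z=\Exp_z$, and the natural domain of $v$ is a ball $B(0,\rho)\subset T_zM$ (for instance $\rho$ below both the injectivity radius and the scale where the tubular chart of \cref{prop:fiber-posterior} is valid), not all of $T_zM$. On the bounded domain the divergence theorem gives the boundary term
\[
\mathcal Z_{\sigma,z}^{-1}\int_{\partial B(0,\rho)}\gamma_\sigma a_{\sigma,z}\,\nu_i\,dS .
\]
Because $a_{\sigma,z}$ is bounded on the closed ball uniformly in $z$ (it is $C^1$ there by compactness and the $C^6$ hypotheses), this term is bounded by $C\rho^{d-1}\sigma^{-d}e^{-\rho^2/(2\sigma^2)}\mathcal Z_{\sigma,z}^{-1}$. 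Since $a_{\sigma,z}$ is close to $q(z)$ near $0$, we have $\mathcal Z_{\sigma,z}\ge c>0$. The boundary term is therefore $\cO(e^{-c'/\sigma^2})$.

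There are two options for handling it. First, note that the conditioning on $\mathcal E_\sigma$ and the truncation of $V_z$ to the chart are already conventions under which the paper discards exponentially small tails, so the identity holds up to such terms, consistent with the paper's standing convention. Second, and cleaner, multiply $a_{\sigma,z}$ by a smooth cutoff $\chi$ equal to $1$ on $B(0,\rho/2)$ and vanishing near $\partial B(0,\rho)$, and define $\mu_{\sigma,z}$ with this cutoff. The boundary term then vanishes identically, and the identity is exact for the cut-off posterior. Where $\chi<1$ the extra term $\nabla\log\chi$ also appears; it is supported where $\norm{v}\ge\rho/2$ and carries only $e^{-c/\sigma^2}$ mass.

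I would adopt the second formulation and state that \cref{eq:stein} holds exactly for the (cut-off) posterior density of \cref{prop:fiber-posterior}. Integrability of $v\mapsto \gamma_\sigma\nabla a_{\sigma,z}$ follows from the smoothness of $a_{\sigma,z}$ on the compact domain, and finiteness of the left side follows from the moment bound \cref{eq:moment-estimate}. No expansion of $J_z$ or $\Lambda_{\sigma,z}$ is needed at this stage; those enter only afterwards, when $\nabla_v\log a_{\sigma,z}=\nabla\log q(F_z(v))\,dF_z+\nabla\log J_z+\nabla\log\Lambda_{\sigma,z}$ is expanded around $v=0$ to obtain \cref{eq:main-intrinsic}.
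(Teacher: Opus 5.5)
Your proposal is correct and uses the same Gaussian integration-by-parts argument as the paper: differentiate $\gamma_\sigma$, integrate by parts once, and divide by $\mathcal{Z}_{\sigma,z}$. The paper integrates over all of $T_zM$ and kills the boundary term by rapid decay, so your explicit handling of the bounded chart domain via a cutoff is a legitimate refinement rather than a different route. One small correction: $a_{\sigma,z}$ is not bounded on a fixed ball uniformly in $\sigma$, because the factor $\exp\bigl((\norm{v}^2-\norm{G_z(v)}^2)/(2\sigma^2)\bigr)$ in $\Lambda_{\sigma,z}$ can grow (like $e^{\norm{v}^4/(6\sigma^2)}$ on the unit sphere); bound the product $\gamma_\sigma\Lambda_{\sigma,z}\propto e^{-\norm{G_z(v)}^2/(2\sigma^2)}I_{\sigma,z}(v)$ instead, which still gives your $\cO(e^{-c/\sigma^2})$ boundary and cutoff estimates since $\norm{G_z(v)}\ge\norm{v}/2$ for small $\rho$.
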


\begin{proof}
Fix an orthonormal basis in $T_zM\cong \R^d$. For coordinate $i$,
\[
0=\int \partial_i\!\bigl(\gamma_\sigma(v)a_{\sigma,z}(v)\bigr)dv
\]
because $\gamma_\sigma(v)a_{\sigma,z}(v)$ decays rapidly. Using
\[
\partial_i \gamma_\sigma(v)=-\frac{v_i}{\sigma^2}\gamma_\sigma(v),
\]
we obtain
\[
0
=
\int
\left(
\partial_i a_{\sigma,z}(v)-\frac{v_i}{\sigma^2}a_{\sigma,z}(v)
\right)\gamma_\sigma(v)dv.
\]
Divide by the normalizing constant $\mathcal{Z}_{\sigma,z}$ to get
\[
\frac{1}{\sigma^2}\E[v_i]
=
\E[\partial_i \log a_{\sigma,z}(v)].
\]
Collecting coordinates gives \cref{eq:stein}.
\end{proof}

\subsection{Proof of the Leading-Order Expansion \cref{eq:main-intrinsic}}
\label{app:proof-intrinsic}

\begin{proof}[Proof of \cref{eq:main-intrinsic}]
Fix $z\in M$. From \cref{eq:T-local},
\[
r_\sigma(z)
=
\frac{1}{\sigma^2}\E_{\mu_{\sigma,z}}[G_z(v)].
\]
Using \cref{lem:chord}, $G_z(v)=v+R_z(v)$ with
$R_z(v)\doteq R_{3,z}(v)+R_{\geq 4,z}(v)$, where
$R_{3,z}(v)=\cO(\norm{v}^3)$ is odd in $v$ and
$R_{\geq 4,z}(v)=\cO(\norm{v}^4)$ is a higher-order remainder, we obtain
\begin{equation}
r_\sigma(z)
=
\frac{1}{\sigma^2}\E[v]
+
\frac{1}{\sigma^2}\E[R_z(v)].
\label{eq:r-split}
\end{equation}

By \cref{lem:stein},
\[
\frac{1}{\sigma^2}\E[v]
=
\E[\nabla_v \log a_{\sigma,z}(v)].
\]
Taylor-expand $\nabla_v \log a_{\sigma,z}(v)$ around $0$:
\[
\nabla_v \log a_{\sigma,z}(v)
=
\nabla_v \log a_{\sigma,z}(0)
+
B_{\sigma,z}v
+
\cO(\norm{v}^2),
\]
where $B_{\sigma,z}$ is uniformly bounded in $z$ and $\sigma$ for sufficiently
small $\sigma$. Taking expectation and using \cref{eq:moment-estimate},
\[
\frac{1}{\sigma^2}\E[v]
=
\nabla_v \log a_{\sigma,z}(0)
+
B_{\sigma,z}\E[v]
+
\cO(\E\norm{v}^2).
\]
Since $\E\norm{v}^2=\cO(\sigma^2)$, the left-hand side is bounded, so
$\E[v]=\cO(\sigma^2)$ and therefore
\begin{equation}
\frac{1}{\sigma^2}\E[v]
=
\nabla_v \log a_{\sigma,z}(0)
+
\cO(\sigma^2).
\label{eq:v-leading}
\end{equation}

Now use \cref{eq:a-factor}:
\[
\nabla_v \log a_{\sigma,z}(0)
=
\nabla_v \log q(F_z(v))\big|_{v=0}
+
\nabla_v \log J_z(0)
+
\nabla_v \log \Lambda_{\sigma,z}(0).
\]
By construction,
\[
\nabla_v \log q(F_z(v))\big|_{v=0}
=
\nabla_M \log q(z).
\]
By \cref{eq:Jz-expansion} and \cref{eq:Lambdaz-expansion},
\[
\nabla_v \log J_z(0)=0,
\qquad
\nabla_v \log \Lambda_{\sigma,z}(0)=0.
\]
Hence
\begin{equation}
\frac{1}{\sigma^2}\E[v]
=
\nabla_M \log q(z)+\cO(\sigma^2).
\label{eq:v-main-final}
\end{equation}

The cubic part $R_{3,z}$ has zero Gaussian expectation by oddness; under
the posterior density \cref{eq:posterior-density}, the first nonzero
contribution from $R_{3,z}$ comes from the odd linear perturbation of
$a_{\sigma,z}(v)$, which is order $\norm{v}$. The quartic remainder
$R_{\geq 4,z}(v)=\cO(\norm{v}^4)$ contributes $\cO(\sigma^4)$ directly by
moment bounds. Combining with $R_{3,z}(v)=\cO(\norm{v}^3)$ and Gaussian
moments yields
\[
\E[R_z(v)] = \cO(\sigma^4),
\]
uniformly in $z$ (a more explicit estimate is given in
\cref{app:chord}). Therefore
\begin{equation}
\frac{1}{\sigma^2}\E[R_z(v)] = \cO(\sigma^2).
\label{eq:chord-remainder}
\end{equation}

Substituting \cref{eq:v-main-final} and \cref{eq:chord-remainder} into
\cref{eq:r-split} gives
\[
r_\sigma(z)=\nabla_M \log q(z)+\cO(\sigma^2),
\]
uniformly in $z$.
\end{proof}

\subsection{Proof of \cref{thm:variance-collapse}}
\label{app:proof-variance-collapse}

We are finally ready to prove \cref{thm:variance-collapse}.

\begin{proof}[Proof of \cref{thm:variance-collapse}]
Fix $z\in M$. By \cref{eq:T-local},
\[
T_\sigma=\frac{1}{\sigma^2}G_z(V_z)
\qquad\text{conditionally on }\pi(X)=z.
\]
Hence
\[
\Var(T_\sigma\mid \pi(X)=z)
=
\frac{1}{\sigma^4}\Var_{\mu_{\sigma,z}}(G_z(v)).
\]

By \cref{lem:chord},
\[
\norm{G_z(v)}^2
=
\norm{v}^2+\cO(\norm{v}^4).
\]
Using \cref{eq:moment-estimate},
\[
\E \norm{v}^2 = d\sigma^2+\cO(\sigma^4),
\qquad
\E \norm{v}^4 = \cO(\sigma^4),
\]
so
\begin{equation}
\E \norm{G_z(v)}^2 = d\sigma^2+\cO(\sigma^4).
\label{eq:G-second-moment}
\end{equation}

On the other hand,
\[
\E[G_z(v)]
=
\sigma^2 r_\sigma(z)
=
\sigma^2 \nabla_M \log q(z)+\cO(\sigma^4)
\]
by \cref{eq:main-intrinsic}, hence
\begin{equation}
\norm{\E[G_z(v)]}^2 = \cO(\sigma^4).
\label{eq:G-mean-square}
\end{equation}
Subtracting \cref{eq:G-mean-square} from \cref{eq:G-second-moment},
\[
\Var_{\mu_{\sigma,z}}(G_z(v))
=
d\sigma^2+\cO(\sigma^4).
\]
Dividing by $\sigma^4$ yields \cref{eq:cond-var-blowup}.

Finally, the law of total variance applied to $T_\sigma$ with conditioning on $\pi(X)$ gives
\[
\Var(T_\sigma)
=
\Var(r_\sigma(\pi(X)))
+
\E[\Var(T_\sigma\mid \pi(X))].
\]
Averaging \cref{eq:cond-var-blowup} gives \cref{eq:uncond-var-blowup}.
Since \cref{eq:main-intrinsic} implies
\[
r_\sigma(\pi(X))
=
\nabla_M \log q(\pi(X))+\cO_{L^2}(\sigma^2),
\]
the variance of the Rao-Blackwellized target remains $\cO(1)$.
\end{proof}

\section{Derivation of the Fiber Posterior Normal Form}
\label{app:fiber-posterior}

Here we establish \cref{prop:fiber-posterior}. The proof proceeds by a sequence of exact changes of variables.

Let us first describe our approach. We shall use that the Federer-Gray tube map provides a global $C^3$ diffeomorphism between a neighborhood of $M$ in $\R^D$ and the normal bundle, with an explicit Jacobian controlled by the Weingarten operator. Normal coordinates on $M$ centered at $z$ decompose the chord $F_z(v)-z$ into its tangential and normal components, with the tangential part agreeing with $v$ up to a cubic odd remainder and the normal part given to leading order by the second fundamental form. Using these two changes of variables, one obtains an explicit expression for the joint density of $(Z,X)$ in tubular-normal coordinates. Then, integrating out the ambient normal coordinate of $X$ yields the conditional law of $V_z=\Exp_z^{-1}(Z)$ given $\pi(X)=z$, which has a $(D-d)$-dimensional Gaussian integral $I_{\sigma,z}(v)$ against the tube Jacobian. The substitution trick that defines $\Lambda_{\sigma,z}(v)$ absorbs the tangential distortion of the chord into a single multiplicative correction factor whose deviation from one is quartic in $v$, not cubic; this is where the normal-form claim \cref{eq:posterior-density} becomes explicit. Finally, polynomial moment bounds on $\mu_{\sigma,z}$ follow from compactness of $M$, strict positivity of $q$, and Gaussian tail estimates, and the same tail estimate shows that the event $X\notin\Tub_{r_0}(M)$ contributes only $\cO(\exp(-c/\sigma^2))$ and hence does not affect any of the polynomial-in-$\sigma$ expansions used elsewhere.

Throughout this section, fix $z\in M$, and recall $F_z(v)=\Exp_z(v)$. Constants $C_k,C_k'$ depend only on $M$, $\reach(M)$, $\norm{q}_{C^4(M)}$, $\inf_M q$, and ambient bounds on the second fundamental form and its covariant derivative; in particular not on $z$ or $\sigma$.

\begin{proof}[Proof of \cref{prop:fiber-posterior}]

Since $r_0<\reach(M)$, the Federer tube map \[ \Psi:\{(y,u):y\in M,\ u\in N_yM,\ \norm{u}<r_0\}\to \Tub_{r_0}(M), \qquad \Psi(y,u) \doteq y+u \] is a $C^3$ diffeomorphism onto its image, with inverse $(\pi(x),x-\pi(x))$. Choose an orthonormal frame $(e_1,\ldots,e_d)$ of $TM$ on a neighborhood $U\ni z$ and complete it to an ambient orthonormal frame $(e_1,\ldots,e_d,n_{d+1},\ldots,n_D)$ smoothly in $y\in U$, with $n_\alpha(y)\in N_yM$. Writing $u=\sum_\alpha u^\alpha n_\alpha(y)$, the Federer-Gray tube formula on positive reach yields
\begin{equation}
 dx
 =
 \det\!\bigl(I_d - W_u(y)\bigr)d\vol_M(y)du,
 \label{eq:tube-jacobian}
\end{equation}
where $W_u(y):T_yM\to T_yM$ is the Weingarten operator in the normal direction $u$, so $W_u(y)$ is linear in $u$ and $\tr(W_u(y))=\ip{H(y)}{u}$ with $H(y)\in N_yM$ the mean curvature vector. In particular,
\begin{equation}
 J^\mathrm{tub}_z(u)
 \doteq \det\!\bigl(I_d-W_u(z)\bigr)
 =1-\ip{H(z)}{u}+\mathcal{Q}_z(u),
 \label{eq:tube-jacobian-expansion}
\end{equation}
where $\mathcal{Q}_z(u)=\cO(\norm{u}^2)$ uniformly in $z$, and $J^\mathrm{tub}_z(u)$ is bounded above and below by positive constants on $\norm{u}<r_0$.

Passing to normal coordinates on $M$ at $z$, write $y=F_z(v)$. The Riemannian volume satisfies
\begin{equation}
 d\vol_M(F_z(v))=J_z(v)dv,
 \qquad
 J_z(v)=1-\tfrac{1}{6}\ip{\Ric_zv}{v}+\cO(\norm{v}^3),
 \label{eq:intrinsic-jac}
\end{equation}
so in particular $J_z(0)=1$, $\nabla_v J_z(0)=0$, and $J_z(v)=1+\cO(\norm{v}^2)$ uniformly in $z$. Decomposing the chord along the tangent and normal spaces at $z$,
\begin{equation}
 F_z(v)-z
 =
 G_z(v)+N_z(v),
 \quad
 G_z(v) \doteq P_T(z)(F_z(v)-z),
 \quad
 N_z(v) \doteq P_N(z)(F_z(v)-z),
 \label{eq:tn-decomposition}
\end{equation}
\cref{lem:chord} gives $G_z(v)=v+R_{3,z}(v)+R_{\geq 4,z}(v)$ with $R_{3,z}(v)=\cO(\norm{v}^3)$, $R_{\geq 4,z}(v)=\cO(\norm{v}^4)$, and the cubic part odd: $R_{3,z}(-v)=-R_{3,z}(v)$. The Gauss lemma and the standard expansion of the exponential map yield
\begin{equation}
 N_z(v)
 =
 \tfrac{1}{2}\mathrm{II}_z(v,v)+\cO(\norm{v}^3),
 \label{eq:normal-part}
\end{equation}
where $\mathrm{II}_z(v,v)\in N_zM$ is the second fundamental form, so $N_z(v)$ is even in $v$ to leading order and purely quadratic.

The joint law of $(Z,X)$ has density, with respect to $d\vol_M(Z)\otimes dX$, \[ f_{Z,X}(y,x) = q(y)\cdot(2\pi\sigma^2)^{-D/2}\exp\!\left(-\frac{\norm{x-y}^2}{2\sigma^2}\right). \] Changing variables in $x$ via $\Psi$ and using \cref{eq:tube-jacobian},
\begin{equation}
 f_{Z,\pi(X),\perp}(y,z^*,u)
 =
 q(y)\cdot(2\pi\sigma^2)^{-D/2}
 \exp\!\left(-\frac{\norm{z^*+u-y}^2}{2\sigma^2}\right)
 \det\!\bigl(I_d-W_u(z^*)\bigr)
 \label{eq:joint-density}
\end{equation}
with respect to $d\vol_M(y)\otimes d\vol_M(z^*)\otimes du$, where $(z^*,u)\in\{(z',u'):z'\in M,\ u'\in N_{z'}M,\ \norm{u'}<r_0\}$.

To obtain the conditional law $\mu_{\sigma,z}(dv)$ of $V_z=\Exp_z^{-1}(Z)$ given $\pi(X)=z$, parameterize $y=F_z(v)$ and integrate out the normal coordinate $u$ of $X$ in \cref{eq:joint-density} evaluated at $z^*=z$:
\begin{equation}
 \mathcal{Z}_{\sigma,z}\mu_{\sigma,z}(dv)
 =
 q(F_z(v))J_z(v)\cdot k_{\sigma,z}(v)dv,
 \label{eq:posterior-unnormalized}
\end{equation}
where
\begin{align*}
 k_{\sigma,z}(v)
 & \doteq (2\pi\sigma^2)^{-D/2}
 \int_{N_zM}
 \exp\!\left(-\frac{\norm{F_z(v)-z-u}^2}{2\sigma^2}\right)
 J^\mathrm{tub}_z(u)du,
\end{align*}
and $\mathcal{Z}_{\sigma,z}$ is the marginal density of $\pi(X)$ at $z$. By \cref{eq:tn-decomposition} and the orthogonality $T_zM\perp N_zM$, \[ \norm{F_z(v)-z-u}^2 = \norm{G_z(v)}^2+\norm{N_z(v)-u}^2, \] so the tangential and normal components of the quadratic form decouple and
\begin{equation}
 k_{\sigma,z}(v)
 =
 (2\pi\sigma^2)^{-d/2}
 \exp\!\left(-\frac{\norm{G_z(v)}^2}{2\sigma^2}\right)
 \cdot
 I_{\sigma,z}(v),
 \label{eq:k-factor}
\end{equation}
where
\begin{equation}
 I_{\sigma,z}(v)
 \doteq
 \int_{N_zM}\phi_\sigma^{(D-d)}(u-N_z(v))J^\mathrm{tub}_z(u)du,
 \label{eq:I-gaussian-integral}
\end{equation}
and $\phi_\sigma^{(D-d)}$ is the isotropic Gaussian density on $N_zM\simeq\R^{D-d}$.

The tube integral \cref{eq:I-gaussian-integral} is an ambient Gaussian expectation with mean $N_z(v)$ and isotropic variance $\sigma^2 I_{D-d}$. By \cref{eq:tube-jacobian-expansion} and Gaussian-moment calculations,
\begin{align*}
 I_{\sigma,z}(v)
 &=
 1-\ip{H(z)}{\E[u]}+\E[\mathcal{Q}_z(u)]
 \\
 &=
 1-\ip{H(z)}{N_z(v)}
 +\mathcal{Q}_z(N_z(v))
 +\sigma^2\cdot \tfrac{1}{2}\tr(D^2\mathcal{Q}_z(0))
 +\cO(\sigma^4+\sigma^2\norm{N_z(v)}^2).
\end{align*}
By \cref{eq:normal-part}, $\ip{H(z)}{N_z(v)}=\cO(\norm{v}^2)$, and every term above is $1+\cO(\sigma^2+\norm{v}^2)$ with vanishing first-order part in $v$. The tangential part of the chord is not exactly $v$, however, and the mismatch must be absorbed to expose the Gaussian density $\gamma_\sigma$. Define
\begin{equation}
 \Lambda_{\sigma,z}(v)
 \doteq
 \exp\!\left(\frac{\norm{v}^2-\norm{G_z(v)}^2}{2\sigma^2}\right)
 \cdot
 I_{\sigma,z}(v).
 \label{eq:Lambda-def}
\end{equation}
Writing $R_z(v)\doteq R_{3,z}(v)+R_{\geq 4,z}(v)=\cO(\norm{v}^3)$ so that $G_z(v)=v+R_z(v)$, we have $\norm{G_z(v)}^2=\norm{v}^2+2\ip{v}{R_z(v)}+\norm{R_z(v)}^2$, whence the exponent in \cref{eq:Lambda-def} is $\cO(\norm{v}^4/\sigma^2)$. Since we will apply this only on the range $\norm{v}\lesssim\sigma\sqrt{\log(1/\sigma)}$ (the Gaussian typical set), this exponent is $\cO(\sigma^2\log^2(1/\sigma))$, and in particular \[ \Lambda_{\sigma,z}(v)=1+\cO(\sigma^2+\norm{v}^2), \qquad \nabla_v\Lambda_{\sigma,z}(0)=0, \] uniformly in $z$, as claimed in \cref{eq:Lambdaz-expansion}. Substituting \cref{eq:k-factor} and \cref{eq:Lambda-def} into \cref{eq:posterior-unnormalized},
\begin{equation}
 \mu_{\sigma,z}(dv)
 =
 \frac{1}{\mathcal{Z}'_{\sigma,z}}
 \gamma_\sigma(v)q(F_z(v))J_z(v)\Lambda_{\sigma,z}(v)dv,
 \label{eq:posterior-normalform}
\end{equation}
with $\mathcal{Z}'_{\sigma,z}$ the normalizing constant of \cref{eq:posterior-normalform}. This is exactly \cref{eq:posterior-density}, proving the normal-form claim.

For the moment bounds \cref{eq:moment-estimate}, write $a_{\sigma,z}(v)=q(F_z(v))J_z(v)\Lambda_{\sigma,z}(v)$. By compactness of $M$, strict positivity of $q$, and the expansions above, there exist constants $c_0,C_0>0$ and $\sigma_0>0$ such that $c_0\le a_{\sigma,z}(v)\le C_0$ for all $z\in M$, $\sigma\in(0,\sigma_0]$, and all $\norm{v}<r_0/2$. Outside $\{\norm{v}<r_0/2\}$, the Gaussian factor $\gamma_\sigma(v)$ decays faster than any polynomial in $\sigma$, contributing only $\cO(\exp(-c/\sigma^2))$ to any moment. Hence, for every integer $k\ge 1$, \[ \int \norm{v}^k\mu_{\sigma,z}(dv) \le \frac{C_0}{c_0}\int \norm{v}^k\gamma_\sigma(v)dv +\cO(e^{-c/\sigma^2}) \le C_k\sigma^k, \] proving \cref{eq:moment-estimate} and completing the proof of \cref{prop:fiber-posterior}.

\end{proof}

\begin{remark}[Restriction to $\mathcal{E}_\sigma$]
The identities above are derived on the event $\mathcal{E}_\sigma\doteq\{X\in
\Tub_{r_0}(M)\}$. Under the Gaussian corruption model with $Z\in M$
compact, the complement $\mathcal{E}_\sigma^c$ has probability
$\cO(\exp(-c/\sigma^2))$ for some $c>0$ depending only on $r_0$. Since
every moment of $T_\sigma$ is polynomial in $\sigma^{-1}$, the contribution
of $\mathcal{E}_\sigma^c$ is negligible in every expansion of the form
$\cO(\sigma^k)$ used in
\crefrange{sec:setup}{sec:leading-order}.
\end{remark}

\section{A More Explicit Estimate for the Chord Correction}
\label{app:chord}

\begin{lemma}[Chord Remainder Moment]
\label{lem:chord-moment}
Let $R_z(v)\doteq R_{3,z}(v)+R_{\geq 4,z}(v)=G_z(v)-v$ be the chord
remainder of \cref{lem:chord}, with $R_{3,z}$ the odd cubic part and
$R_{\geq 4,z}(v)=\cO(\norm{v}^4)$, and let $\mu_{\sigma,z}$ be the fiber
posterior of \cref{prop:fiber-posterior}. Then, uniformly in $z\in M$,
\[
\E_{\mu_{\sigma,z}}[R_z(v)]=\cO(\sigma^4).
\]
\end{lemma}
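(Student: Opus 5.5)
The plan is to split $R_z=R_{3,z}+R_{\geq 4,z}$ and treat the two pieces separately. The quartic remainder is immediate. Since $\norm{R_{\geq 4,z}(v)}\le C\norm{v}^4$ uniformly in $z$, the moment estimate \cref{eq:moment-estimate} with $k=4$ gives $\norm{\E_{\mu_{\sigma,z}}[R_{\geq 4,z}(v)]}\le CC_4\sigma^4$. All the content therefore lies in the odd cubic part. A crude bound there only yields $\cO(\sigma^3)$, so we must exploit oddness against the near-evenness of the posterior.

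For the cubic part, write $\mu_{\sigma,z}=\mathcal{Z}_{\sigma,z}^{-1}\gamma_\sigma a_{\sigma,z}$ as in \cref{prop:fiber-posterior}. Decompose $a_{\sigma,z}(v)=a_{\sigma,z}(0)+\ip{\nabla_v a_{\sigma,z}(0)}{v}+\rho_{\sigma,z}(v)$, where the remainder satisfies $|\rho_{\sigma,z}(v)|\le C(\sigma^2+\norm{v}^2)$. The constant piece is harmless: $\int R_{3,z}(v)\gamma_\sigma(v)\,dv=0$ because $R_{3,z}$ is odd and $\gamma_\sigma$ is even. The linear piece is the one the main-text proof flagged. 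Here I would use \cref{eq:a-factor} together with $\nabla J_z(0)=\nabla\Lambda_{\sigma,z}(0)=0$ to get $\nabla_v a_{\sigma,z}(0)=\Lambda_{\sigma,z}(0)\,\nabla_M q(z)$, which is bounded uniformly by compactness. Then $\int R_{3,z}(v)\ip{\nabla_M q(z)}{v}\gamma_\sigma(v)\,dv$ is the Gaussian integral of a homogeneous quartic, so it is exactly $\cO(\sigma^4)$ by scaling. The remainder piece contributes at most $C\int \norm{v}^3(\sigma^2+\norm{v}^2)\gamma_\sigma\,dv=\cO(\sigma^5)$. Finally, the normalizer satisfies $\mathcal{Z}_{\sigma,z}=\int\gamma_\sigma a_{\sigma,z}\ge c_0/2$ for small $\sigma$, by the two-sided bounds $c_0\le a\le C_0$ from the proof of \cref{prop:fiber-posterior}. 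Dividing by it preserves the orders.

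The main obstacle is justifying the Taylor bound on $\rho_{\sigma,z}$ uniformly over all $v$, not merely near $0$. The factor $\Lambda_{\sigma,z}$ contains $\exp((\norm{v}^2-\norm{G_z(v)}^2)/(2\sigma^2))$, whose expansion $1+\cO(\sigma^2+\norm{v}^2)$ is only claimed on the typical set $\norm{v}\lesssim\sigma\sqrt{\log(1/\sigma)}$. I would therefore split the integral at $\norm{v}=K\sigma\sqrt{\log(1/\sigma)}$. On the inner region, the expansion \cref{eq:Lambdaz-expansion} and the above computation apply; the extra logarithmic factors enter only the $\cO(\norm{v}^4/\sigma^2)$ exponent, which contributes $\norm{v}^3\cdot\norm{v}^4/\sigma^2$ and integrates to $\cO(\sigma^5)$. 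On the outer region, I would use $a_{\sigma,z}\le C_0$ and the Gaussian tail, choosing $K$ large so that this part is $\cO(\sigma^{m})$ for any prescribed $m\ge4$. If the bound $c_0\le a_{\sigma,z}\le C_0$ turns out to hold only for $\norm{v}<r_0/2$, then beyond that radius I would fall back on the original unnormalized representation \cref{eq:posterior-unnormalized}, where $k_{\sigma,z}$ is dominated by a Gaussian in $\norm{G_z(v)}\gtrsim\norm{v}$. That gives an exponentially small contribution, and combining the pieces yields $\E_{\mu_{\sigma,z}}[R_z(v)]=\cO(\sigma^4)$ uniformly in $z$.
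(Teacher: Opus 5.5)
Your proposal is correct and follows the paper's proof: you dispose of $R_{\geq 4,z}$ by the moment bound, kill the constant part of $a_{\sigma,z}$ against the odd cubic $R_{3,z}$, take the $\cO(\sigma^4)$ contribution from the linear tilt $\ip{\nabla_v a_{\sigma,z}(0)}{v}$, and show the rest is $\cO(\sigma^5)$. Your crude bound $\norm{v}^3(\sigma^2+\norm{v}^2)$ simply replaces the paper's even/odd split of $m_{\sigma,z}$.

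Your localization to $\norm{v}\lesssim\sigma\sqrt{\log(1/\sigma)}$ and your separate handling of the $\norm{v}^4/\sigma^2$ exponent inside $\Lambda_{\sigma,z}$ add rigor the paper omits. One caveat: the bound $a_{\sigma,z}\le C_0$ that you borrow from the paper can already fail for $\norm{v}<r_0/2$. On $S^d$, $\Lambda_{\sigma,z}(v)$ grows like $e^{\norm{v}^4/(6\sigma^2)}$ there. So your fallback, which bounds $\gamma_\sigma\Lambda_{\sigma,z}$ directly by $\sigma^{-d}e^{-\norm{G_z(v)}^2/(2\sigma^2)}$ (with the reach inequality farther out, where $\norm{G_z(v)}\gtrsim\norm{v}$ breaks down), should be applied on the entire outer region, not only beyond $r_0/2$.
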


\begin{proof}
Expand the posterior density as
\[
a_{\sigma,z}(v)=a_{\sigma,z}(0)\bigl(1+\ell_z(v)+m_{\sigma,z}(v)\bigr),
\qquad
\ell_z(v)=\ip{b_z}{v},
\quad
m_{\sigma,z}(v)=\cO(\norm{v}^2+\sigma^2),
\]
and split the expectation into cubic and higher-order pieces.

\emph{Cubic part.} Since $R_{3,z}(v)=\cO(\norm{v}^3)$ is odd in $v$,
\[
\int R_{3,z}(v)\gamma_\sigma(v)dv=0
\quad\text{and}\quad
\int R_{3,z}(v)\,(\text{even part of }m_{\sigma,z})\,\gamma_\sigma(v)dv=0.
\]
The first nonzero contribution comes from the linear piece,
$\int R_{3,z}(v)\ell_z(v)\gamma_\sigma(v)dv$, which is $\cO(\sigma^4)$ by
Gaussian moment bounds; all remaining cubic-part contributions are of
order at least $\sigma^5$.

\emph{Higher-order part.} Since $R_{\geq 4,z}(v)=\cO(\norm{v}^4)$, a direct
moment bound gives $\E_{\mu_{\sigma,z}}[R_{\geq 4,z}(v)]=\cO(\sigma^4)$
without needing any cancellation.

Adding the two pieces yields the stated $\cO(\sigma^4)$ bound.
\end{proof}

\section{Alternative Intrinsic Target Based on the Logarithmic Map}
\label{app:logmap}

An intrinsically cleaner target is \[ \widetilde T_\sigma \doteq \frac{1}{\sigma^2}\Exp^{-1}_{\pi(X)}(Z)\in T_{\pi(X)}M. \] In local coordinates, $\widetilde T_\sigma=V_z/\sigma^2$ on $\{\pi(X)=z\}$. The following $L^2$ estimate shows that $T_\sigma$ and $\widetilde T_\sigma$ agree at leading order.

\begin{proposition}[Equivalence of Ambient and Logmap Targets]
\label{prop:logmap-equivalence}
Under the assumptions of \cref{sec:setup},
\[
T_\sigma-\widetilde T_\sigma=\cO_{L^2}(\sigma).
\]
\end{proposition}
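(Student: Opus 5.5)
The plan is to work conditionally on $\{\pi(X)=z\}$, where both targets are explicit functions of the latent tangent displacement $V_z$. By \cref{eq:T-local}, $T_\sigma=\sigma^{-2}G_z(V_z)$, and by definition $\widetilde T_\sigma=\sigma^{-2}V_z$. \Cref{lem:chord} then gives the pointwise identity
\[
T_\sigma-\widetilde T_\sigma=\frac{1}{\sigma^2}R_z(V_z),\qquad R_z(v)=R_{3,z}(v)+R_{\geq 4,z}(v)=\cO(\norm{v}^3),
\]
with the constant in $\cO(\cdot)$ uniform in $z$ for $\norm{v}$ below a fixed radius. This follows from compactness of $M$ and the $C^6$ regularity of the embedding, which makes the Taylor remainder of $\Exp_z$ uniformly controlled.

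Next I will take conditional second moments. On the ball $\norm{v}<r_0/2$,
\[
\E\bigl[\norm{T_\sigma-\widetilde T_\sigma}^2\mid\pi(X)=z\bigr]\le \frac{C}{\sigma^4}\int\norm{v}^6\mu_{\sigma,z}(dv)\le \frac{C\,C_6\,\sigma^6}{\sigma^4}=\cO(\sigma^2),
\]
using the uniform moment estimate \cref{eq:moment-estimate} with $k=6$. Off that ball, both targets are bounded by $\cO(\sigma^{-2})$ times a polynomial in $\norm{v}$, since $G_z$ and $\Exp_z^{-1}$ are bounded on the compact region where they are defined. The posterior mass there is $\cO(e^{-c/\sigma^2})$ by the Gaussian tail argument in the proof of \cref{prop:fiber-posterior}, so this piece is negligible.

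Finally, I will average over $\pi(X)$ using the tower property, which is legitimate because the bound is uniform in $z$. The complement of $\mathcal{E}_\sigma$ contributes only exponentially small terms, since all moments of the targets are polynomial in $\sigma^{-1}$; this is the same device the paper uses in its remark on restriction to $\mathcal{E}_\sigma$. The result is $\E\norm{T_\sigma-\widetilde T_\sigma}^2=\cO(\sigma^2)$, i.e.\ $T_\sigma-\widetilde T_\sigma=\cO_{L^2}(\sigma)$.

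The only delicate point is well-definedness of $\Exp_{\pi(X)}^{-1}(Z)$ when $Z$ lies far from $\pi(X)$, beyond the injectivity radius. I would handle it by restricting to the event $\{d_M(Z,\pi(X))<\mathrm{inj}(M)/2\}$, whose complement has probability $\cO(e^{-c/\sigma^2})$, and fixing an arbitrary measurable bounded choice of $\widetilde T_\sigma$ there. This costs nothing at polynomial order.
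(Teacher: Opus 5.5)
Your proposal is correct and follows the paper's argument. You write $T_\sigma-\widetilde T_\sigma=\sigma^{-2}R_z(V_z)$ via \cref{lem:chord}, bound $\norm{R_z(v)}^2\le C\norm{v}^6$, and integrate against the fiber posterior to get $\sigma^{-4}\cO(\sigma^6)=\cO(\sigma^2)$. Your version is more careful than the paper's: the paper cites only $\E\norm{V_z}^2=\cO(\sigma^2)$ where the case $k=6$ of \cref{eq:moment-estimate} is what is actually needed, and it leaves the tail, $\mathcal{E}_\sigma^c$ and injectivity-radius issues implicit.
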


\begin{proof}
By \cref{lem:chord}, $T_\sigma-\widetilde T_\sigma=R_z(V_z)/\sigma^2$. Since
$R_z(V_z)=\cO(\norm{V_z}^3)$ and $\E\norm{V_z}^2=\cO(\sigma^2)$,
\[
\E\norm{T_\sigma-\widetilde T_\sigma}^2
=
\frac{1}{\sigma^4}\E\cO(\norm{V_z}^6)
=
\cO(\sigma^2).
\]
\end{proof}

We chose $T_\sigma$ in the main text because it arises directly from ambient denoising score matching.

\section{Second-Order Refinement}
\label{app:second-order}

The proof of \cref{eq:main-intrinsic} shows that every first-order geometric correction vanishes by symmetry. In this appendix we compute the $\sigma^2$ coefficient exactly in the flat case and state the structural form of the curved-case expansion. The flat computation is a one-line consequence of the exact Tweedie identity \cref{eq:flat-rb}; the curved statement follows the same strategy as \cref{eq:main-intrinsic} with two additional orders of Taylor expansion: the quadratic jet of the geometric weight produces the curvature operator, and its cubic jet produces the additive drift.

\subsection{Exact Flat-Case Coefficient}

\begin{proposition}[Flat-case $\sigma^2$ expansion]
\label{prop:flat-second-order}
Assume the flat setting of~\cref{sec:flat}. Assume further that
$q\in C^5(V)$ is strictly positive with $\norm{\nabla_V^k \log q}_\infty<\infty$
for $k\le 5$. Then, uniformly on compact subsets of $V$,
\begin{equation}
 r_\sigma(t)
 =
 \nabla_V \log q(t)
 +
 \frac{\sigma^2}{2}\nabla_V\!\left[\Delta_V \log q + \norm{\nabla_V \log q}^2\right](t)
 +
 \cO(\sigma^4),
 \qquad \sigma\to 0^+.
 \label{eq:flat-second-order}
\end{equation}
\end{proposition}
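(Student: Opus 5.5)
The plan is to start from the exact flat identity \cref{eq:flat-rb}, $r_\sigma(t)=\nabla_V\log p_T(t)$ with $p_T=q*\phi_\sigma^{(d)}$. This reduces the claim to a small-noise expansion of the gradient of the log of a Gaussian convolution on $V\cong\R^d$; no geometry is left. Write $p_T(t)=\E[q(t+\sigma W)]$ with $W\sim\mathcal N(0,I_d)$.

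First, Taylor-expand $q(t+\sigma W)$ to fourth order with a fifth-order integral remainder. Odd Gaussian moments vanish, so
\[
p_T(t)=q(t)+\tfrac{\sigma^2}{2}\Delta_V q(t)+\tfrac{\sigma^4}{8}\Delta_V^2 q(t)+\cO(\sigma^6)\cdot(\text{bounded}).
\]
The same expansion holds for $\nabla_V p_T=\E[\nabla_V q(t+\sigma W)]$, whose $\sigma^2$ coefficient is $\tfrac12\nabla_V\Delta_V q$.

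Second, form the ratio. We have
\[
\log p_T=\log q+\log\Bigl(1+\tfrac{\sigma^2}{2}\tfrac{\Delta_V q}{q}+\cO(\sigma^4)\Bigr)=\log q+\tfrac{\sigma^2}{2}\tfrac{\Delta_V q}{q}+\cO(\sigma^4).
\]
Taking $\nabla_V$ is legitimate because the remainder's gradient is controlled by the gradient expansion from the first step. This gives
\[
r_\sigma=\nabla_V\log q+\tfrac{\sigma^2}{2}\nabla_V\bigl(\Delta_V q/q\bigr)+\cO(\sigma^4).
\]
Then apply the identity $\Delta_V q/q=\Delta_V\log q+\norm{\nabla_V\log q}^2$, which follows from expanding $\Delta e^{\ell}=e^{\ell}(\Delta\ell+\norm{\nabla\ell}^2)$ with $\ell=\log q$. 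This yields \cref{eq:flat-second-order}.

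The main obstacle is making the $\cO(\sigma^4)$ remainder uniform on compacts under the stated hypothesis, which controls $\nabla^k\log q$ rather than $q$. Here I would work directly with $\ell=\log q$:
\[
\frac{p_T(t)}{q(t)}=\E\exp\bigl(\ell(t+\sigma W)-\ell(t)\bigr).
\]
Boundedness of $\nabla_V\ell$ gives $|\ell(t+\sigma W)-\ell(t)|\le C\sigma\norm{W}$, so the integrand is dominated by $e^{C\sigma\norm{W}}$, which has all Gaussian moments. Expand the exponent using the Taylor polynomial of $\ell$ up to order five, with its remainder bounded by $\norm{\nabla^5\ell}_\infty\sigma^5\norm{W}^5$. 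Odd terms drop out, and the $\sigma^2$ coefficient is exactly $\tfrac12(\Delta\ell+\norm{\nabla\ell}^2)$. Differentiating under the expectation is justified by the same domination, and it produces the $\cO(\sigma^4)$ error for the gradient. This argument actually gives the bound uniformly on all of $V$, which is stronger than needed. The one extra thing to check is that the $\sigma^3$ term in the gradient vanishes by parity, since it comes from odd moments of $W$.
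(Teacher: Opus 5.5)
Your proposal is correct and follows the paper's own proof. Both go through the exact flat Tweedie identity $r_\sigma=\nabla_V\log p_T$, a Gaussian Taylor expansion of $p_T=q*\phi_\sigma^{(d)}$, division by $q$, the identity $\Delta_V q/q=\Delta_V\log q+\norm{\nabla_V\log q}^2$, and a final gradient. Your tilted-expectation argument with $\ell=\log q$ (domination by $e^{C\sigma\norm{W}}$, differentiation under the expectation, and parity removing the $\sigma^3$ term) fills in the step the paper compresses into one sentence, namely that the fifth-derivative bound on $\log q$ controls the gradient of the remainder. As you note, it also gives uniformity on all of $V$ rather than only on compacts.
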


\begin{proof}
By \cref{eq:flat-rb}, $r_\sigma(t)=\nabla_V\log p_T(t)$ where $p_T=q*\phi_\sigma^{(d)}$. A fourth-order Taylor expansion of $q$ inside the convolution gives
\[
p_T(t)
=
\E_{w\sim\mathcal{N}(0,\sigma^2 I_d)}[q(t-w)]
=
q(t)+\frac{\sigma^2}{2}\Delta_V q(t)+\frac{\sigma^4}{8}\Delta_V^2 q(t)+\cO(\sigma^6).
\]
Dividing by $q(t)>0$,
\[
\frac{p_T(t)}{q(t)}
=
1+\frac{\sigma^2}{2}\frac{\Delta_V q(t)}{q(t)}+\cO(\sigma^4).
\]
Using that $\Delta_V q/q=\Delta_V\log q+\norm{\nabla_V \log q}^2$,
\[
\log p_T(t)-\log q(t)
=
\frac{\sigma^2}{2}\!\left[\Delta_V\log q(t)+\norm{\nabla_V \log q(t)}^2\right]+\cO(\sigma^4).
\]
Taking $\nabla_V$ of both sides yields \cref{eq:flat-second-order}. The
fifth-derivative bound on $\log q$ controls the gradient of the
$\cO(\sigma^4)$ remainder, giving uniform $\cO(\sigma^4)$ control of the
score remainder on compact sets.
\end{proof}

The Tweedie identity $r_\sigma(t)=\nabla_V\log p_T(t)$ is exact; \cref{eq:flat-second-order} is the corresponding asymptotic expansion of its $\sigma\to 0^+$ behavior, with the Gaussian-smoothing bias of $q$ entering as the explicit $\sigma^2$ correction of the score.

\subsection{Structural Form in the Curved Case}

For curved $M$, the expansion \cref{eq:flat-second-order} picks up three additional geometric contributions: an intrinsic Riemannian smoothing bias depending on the Ricci tensor of $(M,g_M)$, an extrinsic curvature correction depending on the second fundamental form of the embedding, and an additive drift depending on the covariant derivative of the second fundamental form. The last of these is independent of $q$ and is generated by the cubic jet of the embedding.

\begin{proposition}[Curved-case $\sigma^2$ expansion]
\label{prop:curved-second-order}
Under the assumptions of \cref{sec:setup}, uniformly in $z\in M$,
\begin{equation}
 r_\sigma(z)
 =
 \nabla_M\log q(z)
 +
 \sigma^2\bigl[b_q(z)+g_M^{\mathrm{ext}}(z)+g_M^{\mathrm{inh}}(z)\bigr]
 +
 \cO(\sigma^4),
 \qquad \sigma\to 0^+,
 \label{eq:curved-second-order}
\end{equation}
where the intrinsic (flat-Tweedie) term is
\begin{equation}
 b_q(z)
 =
 \tfrac{1}{2}\nabla_M\!\left[\Delta_M \log q + \norm{\nabla_M \log q}^2\right](z),
 \label{eq:bq-flat}
\end{equation}
and the extrinsic term is
\begin{equation}
 g_M^{\mathrm{ext}}(z)
 =
 \Bigl(\tfrac{1}{2}W_{H(z)}-\Ric^\sharp_z\Bigr)\!\bigl(\nabla_M \log q(z)\bigr)
 =
 \Bigl(\mathcal S_z-\tfrac{1}{2}W_{H(z)}\Bigr)\!\bigl(\nabla_M \log q(z)\bigr),
 \label{eq:gM-ext}
\end{equation}
and the curvature-inhomogeneity term is
\begin{equation}
 g_M^{\mathrm{inh}}(z)
 =
 \tfrac{1}{2}\mathcal C_M(z),
 \label{eq:gM-inh}
\end{equation}
with $W_u$ the Weingarten operator in normal direction $u$,
$H(z)=\sum_{i=1}^d \mathrm{II}_z(e_i,e_i)\in N_zM$ the mean curvature vector,
$\Ric^\sharp_z:T_zM\to T_zM$ the Ricci endomorphism,
$\mathcal S_z \doteq \sum_\alpha W_{n_\alpha}^2$ for any orthonormal normal frame
$\{n_\alpha\}$, and $\mathcal C_M(z)$ the curvature-inhomogeneity vector
\cref{eq:CM-def}. The two expressions in \cref{eq:gM-ext} are equivalent via
the Gauss equation $\Ric^\sharp_z=W_{H(z)}-\mathcal S_z$; $g_M^{\mathrm{inh}}$
is linear in $\nabla^\perp\mathrm{II}_z$ and vanishes whenever the embedding is
parallel at $z$. In the flat case
$M=V$ one has $\mathrm{II}\equiv 0$, so $g_M^{\mathrm{ext}}\equiv g_M^{\mathrm{inh}}\equiv 0$ and
\cref{eq:curved-second-order} reduces to
\cref{eq:flat-second-order}.
\end{proposition}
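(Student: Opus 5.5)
The plan is to push the argument behind \cref{eq:main-intrinsic} two orders further in $\sigma$. The starting point is $r_\sigma(z)=\sigma^{-2}\E_{\mu_{\sigma,z}}[G_z(v)]$, with the explicit normal form of \cref{prop:fiber-posterior}. I will avoid geodesic coordinates for the main computation. Instead I will parametrize $M$ near $z$ as a graph over $T_zM$: $y=z+w+h(w)$ with $h(w)=\tfrac12\mathrm{II}_z(w,w)+\tfrac16 K_z(w,w,w)+\cO(\norm{w}^4)$, where $K_z$ is expressed through $A^\alpha_{ijk}$ and tangential corrections via Codazzi. In these coordinates the tangential chord is exactly $w$, so $T_\sigma=w/\sigma^2$ on $\{\pi(X)=z\}$ and the chord-remainder bookkeeping of \cref{lem:chord-moment} disappears. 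The posterior becomes
\[
\mu_{\sigma,z}(dw)\propto \gamma_\sigma(w)\,\tilde a_{\sigma,z}(w)\,dw,\qquad \tilde a_{\sigma,z}(w)=q(z+w+h(w))\,\sqrt{\det g(w)}\,I_{\sigma,z}(w),
\]
where $g(w)=I+\partial h^\top\partial h$ is the induced metric and $I_{\sigma,z}(w)=\E_{u\sim\mathcal N(h(w),\sigma^2 I)}[J^{\mathrm{tub}}_z(u)]$ as in \cref{eq:I-gaussian-integral}. This is exact, with no $\Lambda$ factor needed.

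Next I apply \cref{lem:stein} with weight $\tilde a$, which gives $r_\sigma(z)=\E_\mu[\nabla_w\log\tilde a(w)]$. I will Taylor-expand $\psi\doteq\log\tilde a$ to fourth order at $w=0$ and use Gaussian moments iteratively: a tilted Gaussian with log-weight $\psi$ has $\E[\nabla\psi]=\nabla\psi(0)+\tfrac{\sigma^2}{2}\nabla\Delta\psi(0)+\sigma^2\nabla^2\psi(0)\nabla\psi(0)+\cO(\sigma^4)$. This is the standard Laplace/Edgeworth expansion, obtained for instance by applying Stein once more to $\E[w]$ and $\E[w\otimes w]$; the remainders are controlled by \cref{eq:moment-estimate} and $C^6$ regularity. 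The work then splits into three jets of $\psi=\log q\circ\iota+\tfrac12\log\det g+\log I_{\sigma}$:
\begin{itemize}[leftmargin=*]
\item The $q$-part is $\log q(z+w+h(w))$. Its gradient at $0$ is $\nabla_M\log q$. Its Hessian differs from the Riemannian Hessian by a term $\ip{\nabla^{\R^D}\log q}{\mathrm{II}}$, which vanishes because the extension can be taken constant along normals. Its third jet, combined with $\Delta_g$ versus the Euclidean Laplacian in graph coordinates, should reproduce $b_q$ plus a $\Ric^\sharp$ correction; the $\Ric$ enters through $\partial^2 g$, using Gauss.
\item The volume part is $\tfrac12\log\det g=\tfrac12\norm{\mathrm{II}_z(w,\cdot)}^2+\cO(\norm{w}^3)$. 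It contributes its Hessian, a multiple of $\mathcal S_z$, acting on $\nabla\log q$. Its cubic jet, which is linear in $A$, contributes through $\tfrac{\sigma^2}{2}\nabla\Delta$.
\item The tube part is $\log I_\sigma=-\ip{H}{h(w)}+\tfrac{\sigma^2}{2}\tr D^2\mathcal Q_z+\dots$. Its Hessian is $-W_H$, which acts on the score. Its cubic jet involves $\nabla^\perp H$. The $\sigma^2$ constant in it has zero gradient.
\end{itemize}

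Summing the Hessian contributions applied to $\nabla_M\log q$ should give $\mathcal S_z-\tfrac12 W_H=\tfrac12W_H-\Ric^\sharp$. Summing the $\tfrac12\nabla\Delta$ of the $q$-independent cubic jets should give exactly $\tfrac12\mathcal C_M$ in the frame form \cref{eq:CM-frame}. As a sanity check, I will verify that the parallel case kills the cubic terms and that $S^d$ reproduces \cref{cor:sphere-coefficient}.

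I expect the main obstacle to be the coefficient bookkeeping. The graph cubic $K_z$ contains tangential pieces of the form $\mathrm{II}\cdot\mathrm{II}$ in addition to $\nabla^\perp\mathrm{II}$. These must be tracked through $\det g$ and through $\ip{H}{h}$, and the $q$-part third jet must be converted from graph-coordinate derivatives into $\nabla_M\Delta_M\log q$. Stray $\mathrm{II}\cdot\mathrm{II}\cdot\nabla\log q$ terms from this conversion have to be matched against the $\Ric^\sharp$ term. I would handle this by working at $z$ in a frame in which the Christoffel symbols of the graph chart vanish to the needed order, and by checking each coefficient on the cylinder example of \cref{rem:comments} and on $S^d$. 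Uniformity in $z$ follows from compactness and the $C^6$ bounds, which control the fifth derivatives of $\psi$ needed for the $\cO(\sigma^4)$ remainder.
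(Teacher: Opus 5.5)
Your route is the paper's: graph coordinates over $T_zM$ so the tangential chord is exactly $s$, the exact posterior $\gamma_\sigma(s)\,q(y(s))J_{\mathrm{gr}}(s)F_\sigma(s)$ with no $\Lambda$ factor, and the tilted-Gaussian expansion $\sigma^{-2}\E[s]=\nabla\psi(0)+\tfrac{\sigma^2}{2}\nabla(\Delta\psi+\norm{\nabla\psi}^2)(0)+\cO(\sigma^4)$. It also uses cubic jets of $\log J_{\mathrm{gr}}$ and $\log F_\sigma$, and a conversion of graph-coordinate derivatives of $f=\log q\circ y$ into intrinsic ones.

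What is wrong is your forecast for where the $\sigma^2\,\mathrm{II}\cdot\mathrm{II}\cdot\ell$ terms come from ($\ell=\nabla_M\log q(z)$). As stated, that accounting does not close.
\begin{itemize}
\item \emph{Hessians of the geometric weight.} They are $\nabla^2\log J_{\mathrm{gr}}(0)=\mathcal S_z$ and $\nabla^2\log F_\sigma(0)=-W_{H(z)}+\cO(\sigma^2)$. Through $\sigma^2\nabla^2\psi(0)\nabla\psi(0)$ they contribute $(\mathcal S_z-W_{H(z)})\ell=-\Ric^\sharp_z\ell$ by Gauss, not $(\mathcal S_z-\tfrac12 W_{H(z)})\ell$. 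This is the only place the Ricci operator enters.
\item \emph{The $q$-part.} The missing $+\tfrac12 W_{H(z)}\ell$ comes from here, and it is not a Ricci correction. In graph coordinates $\Gamma^k_{ij}(s)=\sum_\alpha\mathrm{II}^\alpha_{ij}\mathrm{II}^\alpha_{km}s^m+\cO(\norm{s}^2)$ and $g^{ij}=\delta^{ij}+\cO(\norm{s}^2)$. Hence $g^{ij}\Gamma^k_{ij}=(W_{H(z)}s)^k+\cO(\norm{s}^2)$, so $\nabla_s\Delta_s f(0)=\nabla_M\Delta_M\log q(z)+W_{H(z)}\ell$ (\cref{lem:laplace-mismatch}). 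The term $\tfrac{\sigma^2}{2}\nabla\Delta f(0)$ therefore supplies the $\tfrac{\sigma^2}{2}\nabla_M\Delta_M\log q$ part of $\sigma^2 b_q$ together with $\tfrac{\sigma^2}{2}W_{H(z)}\ell$.
\end{itemize}
Under your split, any nonzero Ricci correction from the $q$-part, added to $\mathcal S_z-\tfrac12W_{H(z)}$ from the Hessians, would make the total wrong.

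The unit circle shows this concretely. Write $\lambda=\log q$ in arclength. There $J_{\mathrm{gr}}(s)=(1-s^2)^{-1/2}$ and $F_\sigma(s)=\sqrt{1-s^2}$ exactly, so the geometric weight is identically $1$ with zero Hessian. Yet the coefficient is $+\tfrac12\ell$, and all of it comes from $\partial_s^3[\lambda(\arcsin s)](0)=\lambda'''+\lambda'$, i.e.\ from the Laplacian mismatch.

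The "tangential pieces of the form $\mathrm{II}\cdot\mathrm{II}$" you expect in the graph cubic do not exist. Here $h$ takes values in $N_zM$, and $\partial_i\partial_j\partial_k h^\alpha(0)=A^\alpha_{ijk}$ exactly. This holds because the Christoffel symbols vanish at $s=0$, and $\partial_kP_T$ sends $\mathrm{II}_z(e_i,e_j)\in N_zM$ into $T_zM$, which pairing with $n_\alpha$ kills. So the cubic jets of $\log J_{\mathrm{gr}}$ and $\log F_\sigma$ are purely linear in $\nabla^\perp\mathrm{II}$, and their $\tfrac12\nabla\Delta$ gives $\tfrac12\mathcal C_M$ directly, as in \cref{prop:logM-app}.

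You should also state why no $\sigma^2$-order term linear in $s$ appears in $\log F_\sigma$. The reason is that $F_\sigma(s)$ depends on $s$ only through $h(s)$ and $Dh(0)=0$, so $\nabla F_\sigma(0)=0$ for every $\sigma$.

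With these corrections, your plan coincides with the paper's proof.
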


\begin{proof}[Proof outline; a full coordinate derivation and uniformity bounds are given in \cref{app:general-extrinsic}]
We work in graph coordinates at $z$: choose orthonormal frames
$\{e_i\}$ of $T_zM$ and $\{n_\alpha\}$ of $N_zM$, and parametrize a
neighborhood of $z$ in $M$ by
$s\mapsto y(s) \doteq z+s+h(s)\in\R^D$, where $s\in T_zM$ is tangential and
$h(s)\in N_zM$ satisfies $h(0)=0$, $dh(0)=0$,
$\partial_i\partial_j h^\alpha(0)=\mathrm{II}^\alpha_{ij}(z)$. A direct
computation from
$g_{ij}(s)=\delta_{ij}+\sum_\alpha \partial_i h^\alpha(s)\partial_j h^\alpha(s)$
and the Taylor expansion
$\partial_i h^\alpha(s)=\mathrm{II}^\alpha_{ik}(z)s^k+\cO(\norm{s}^2)$
gives the induced graph-Jacobian expansion
\begin{equation}
 J_{\mathrm{gr}}(s) \doteq \sqrt{\det g(s)},
 \qquad
 J_{\mathrm{gr}}(s)=1+\tfrac12\ip{\mathcal S_zs}{s}+\cO(\norm{s}^3),
 \qquad
 \mathcal S_z=\sum_\alpha W_{n_\alpha}^2.
 \label{eq:graph-jac}
\end{equation}
This graph-coord expansion differs from the classical
$\log\sqrt{\det g}=-\tfrac16\Ric(s,s)+\cO$ of geodesic normal
coordinates; graph coordinates use the ambient tangent offset $s$, not
arclength, and the quadratic term is the Gram form
$\sum_\alpha\mathrm{II}^\alpha(s,\cdot)\mathrm{II}^\alpha(s,\cdot)$, i.e.\
$\ip{\mathcal S s}{s}$, rather than the Ricci form. The chord projection
is trivial: $P_{T_zM}(y(s)-z)=s$ by construction, so the tangent component
of $r_\sigma(z)$ equals the posterior mean of $s/\sigma^2$ exactly,
without any separate chord correction.

The second ingredient is the fiber factor. Given $\pi(X)=z$, write
$X=z+\sum_\alpha u_\alpha n_\alpha$ with $u\in N_zM\cong\R^{D-d}$; the
ambient volume element in tubular coordinates contributes the tube
Jacobian $\det(I-\sum_\alpha u_\alpha W_{n_\alpha})$, and the ambient
Gaussian factors as
$e^{-\norm{s}^2/(2\sigma^2)}e^{-\norm{u-h(s)}^2/(2\sigma^2)}$. Integrating
$u$ out gives the fiber factor
\[
 F_\sigma(s)
 \doteq \int_{N_zM}\phi_{\sigma^2 I}(u-h(s))\det\bigl(I-\textstyle\sum_\alpha u_\alpha W_{n_\alpha}\bigr)du.
\]
Substitute $u=h(s)+\eta$ and expand $\det(I-W_{h(s)+\eta})$ as a
polynomial in $u$. Gaussian averaging in $\eta$ kills all odd moments, so
no term of the form $\sigma^2\cdot(\text{linear in }s)$ survives; the
remaining $s$-independent moments absorb into a constant
$C_\sigma=1+\cO(\sigma^2)$. The leading $s$-dependent piece is
$-\ip{H(z)}{h(s)}$, and since
$h(s)=\tfrac12\sum_\alpha\ip{\mathrm{II}^\alpha s}{s}n_\alpha+\cO(\norm{s}^3)$
with $\ip{H(z)}{\mathrm{II}(s,s)}=\ip{W_{H(z)}s}{s}$, we obtain
\begin{equation}
 F_\sigma(s)=C_\sigma\bigl[1-\tfrac12\ip{W_{H(z)}s}{s}+\cO(\norm{s}^3)+\cO(\sigma^2\norm{s}^2)+\cO(\sigma^4)\bigr].
 \label{eq:fiber-factor-body}
\end{equation}
\Cref{eq:graph-jac,eq:fiber-factor-body} are truncations at quadratic order; the
same two computations carried one Taylor order further
(\cref{lem:graph-jacobian,lem:fiber-factor}) produce a cubic term that the
quadratic truncation cannot see. Combining them and using
the Gauss equation $\Ric^\sharp_z=W_{H(z)}-\mathcal S_z$, the combined
geometric weight $M_\sigma(s) \doteq J_{\mathrm{gr}}(s)F_\sigma(s)$ satisfies
\begin{equation}
 \log M_\sigma(s)=\text{const}-\tfrac12\ip{\Ric^\sharp_z s}{s}+c_3(s)+\cO(\norm{s}^4)+\cO(\sigma^2\norm{s}^2)+\cO(\sigma^4),
 \label{eq:logM-body}
\end{equation}
where $c_3$ is the cubic form \cref{eq:c3-app}, linear in
$\nabla^\perp\mathrm{II}_z$, collecting the cubic jets of the graph Jacobian and
of the fiber factor. A cubic form has vanishing value, gradient and Hessian at
the origin but a nonvanishing third derivative, which is why it is absent from a
quadratic-order truncation and yet enters the $\sigma^2$ coefficient through
$\nabla\Delta c_3(0)$; \cref{prop:logM-app} shows
$\nabla\Delta c_3(0)=\mathcal C_M(z)$.

Writing $a_\sigma(s) \doteq q(y(s))M_\sigma(s)=e^{f(s)+m_\sigma(s)}$ with
$f(s)=\lambda(y(s))$ and $m_\sigma(s)=\log M_\sigma(s)$, the Euclidean
posterior mean of $s$ under
$\gamma_\sigma\cdot a_\sigma$ is given by the standard Gaussian-moment
expansion
\begin{equation}
 \tfrac{1}{\sigma^2}\E[s]
 =\nabla f(0)+\tfrac{\sigma^2}{2}\nabla\!\bigl(\Delta f+\norm{\nabla f}^2\bigr)(0)
 -\sigma^2\Ric^\sharp_z\nabla f(0)+\tfrac{\sigma^2}{2}\mathcal C_M(z)+\cO(\sigma^4),
 \label{eq:premean-body}
\end{equation}
where $\nabla,\Delta$ are Euclidean in the graph variable $s$ and we used
$\nabla m_\sigma(0)=0$, $\nabla^2 m_\sigma(0)=-\Ric^\sharp_z$ and
$\nabla\Delta m_\sigma(0)=\mathcal C_M(z)$. Because the cubic form $c_3$ does not
affect $\nabla^2m_\sigma(0)$, the new contribution is purely additive: no term
coupling $\nabla^\perp\mathrm{II}$ to $\nabla_M\log q$ appears at this order.
The final step converts the Euclidean graph-coord derivatives of $f$
to intrinsic derivatives of $\lambda$ on $M$. Because
$Dy(0)=\mathrm{Id}_{T_zM}$ and the Christoffel symbols of the induced metric
vanish at $s=0$ in graph coordinates, $\nabla f(0)=\nabla_M\lambda(z)$
and $\nabla^2 f(0)=\nabla_M^2\lambda(z)$, so
$\nabla\norm{\nabla f}^2(0)=\nabla_M\norm{\nabla_M\lambda}^2(z)$. For the
Laplacian, in graph coordinates the Laplace-Beltrami operator is
$\Delta_M f=(1/\sqrt g)\partial_i(\sqrt gg^{ij}\partial_j f)$, and a direct
computation from \cref{eq:graph-jac} with $g^{ij}(s)=\delta_{ij}+\cO(\norm{s}^2)$
gives
\begin{equation}
 \Delta_M f=\Delta_s f-\ip{W_{H(z)}s}{\nabla_s f}+\cO(\norm{s}^2|\nabla_s f|)+\cO(\norm{s}|\nabla_s^2 f|),
 \label{eq:laplacemismatch-body}
\end{equation}
so at the base point $\nabla_s\Delta_s f(0)=\nabla_M\Delta_M\lambda(z)+W_{H(z)}\nabla_M\lambda(z)$. Substituting into \cref{eq:premean-body}
and collecting the $W_{H(z)}$ shift yields
\begin{equation}
 r_\sigma(z)
 =\ell+\sigma^2\!\left[
 \tfrac12\nabla_M(\Delta_M\log q+\norm{\nabla_M\log q}^2)
 +\bigl(\tfrac12 W_{H(z)}-\Ric^\sharp_z\bigr)\ell+\tfrac12\mathcal C_M(z)\right]+\cO(\sigma^4),
 \label{eq:smean-body}
\end{equation}
which is \cref{eq:gM-ext,eq:gM-inh}, the former equivalently
$(\mathcal S_z-\tfrac12 W_{H(z)})\ell$ via the Gauss equation. In the
flat case $\mathrm{II}\equiv 0$, all three geometric terms vanish and only the
flat-Tweedie term survives. Full uniformity bounds (requiring compactness
of $M$, $\reach(M)>0$, $\log q\in C^6$, and ambient bounds on $\mathrm{II}$ and
$\nabla^\perp\mathrm{II}$) are carried out in
\cref{app:general-extrinsic}.
\end{proof}

\Cref{prop:curved-second-order} shows that the $\cO(\sigma^2)$ error in \cref{eq:main-intrinsic} is of order $\sigma^2$, not smaller; the leading bias has an intrinsic component (present even on a flat support) and a tensorial extrinsic component mixing the mean-curvature Weingarten operator $W_H$ with either the Ricci endomorphism $\Ric^\sharp$ or equivalently the normal-sum operator $\mathcal S$. Two a priori independent curvature sources (tube-Jacobian mean curvature and volume-form Ricci) combine without cancellation into the operator $\tfrac12 W_H-\Ric^\sharp$, and their cubic jets contribute the additional score-independent vector $g^{\mathrm{inh}}_M=\tfrac12\mathcal C_M$. Both extrinsic terms vanish on any totally geodesic submanifold (where $\mathrm{II}\equiv 0$, hence $W_H=\Ric^\sharp=0$ and $\mathcal C_M=0$). The operator term $g^{\mathrm{ext}}_M$ also vanishes wherever $\nabla_M\log q(z)=0$, but $g^{\mathrm{inh}}_M$ does not: at a critical point of $q$ the extrinsic bias reduces to $\tfrac{\sigma^2}{2}\mathcal C_M(z)$, which is nonzero whenever $\mathcal C_M(z)\ne0$. The sharpest case is $q$ uniform on $M$, where $\nabla_M\log q\equiv0$ and $b_q\equiv 0$, so that
\[
 r_\sigma(z)=\tfrac{\sigma^2}{2}\mathcal C_M(z)+\cO(\sigma^4):
\]
ambient DSM assigns an order-$\sigma^2$ tangent drift to a distribution whose intrinsic score vanishes identically.\footnote{At a critical point of a non-uniform $q$ the intrinsic term $b_q(z)=\tfrac12\nabla_M\Delta_M\log q(z)$ need not vanish, so it is the extrinsic bias, not the total bias, that reduces to $\tfrac{\sigma^2}{2}\mathcal C_M(z)$.}

\begin{remark}[Frame formula]
\label{rem:frame-formula}
Fix orthonormal frames $\{e_i\}_{i=1}^d$ of $T_zM$ and
$\{n_\alpha\}_{\alpha=1}^{D-d}$ of $N_zM$, and write
$\mathrm{II}^\alpha_{ij}=\ip{\mathrm{II}_z(e_i,e_j)}{n_\alpha}$,
$H^\alpha=\sum_i \mathrm{II}^\alpha_{ii}$, $\ell_j=e_j(\log q)$
(the symbol $h^\alpha$ is reserved for the graph function of
\cref{app:general-extrinsic}). Using the
equivalent form $\mathcal S_z-\tfrac12 W_{H(z)}$ of \cref{eq:gM-ext},
\begin{equation}
 (g_M^{\mathrm{ext}})_k
 =\sum_{j}\!\Bigl[\sum_{\alpha,i} \mathrm{II}^\alpha_{ki}\mathrm{II}^\alpha_{ij}
 -\tfrac12\!\sum_{\alpha}\! H^\alpha \mathrm{II}^\alpha_{kj}\Bigr]\ell_j,
 \qquad
 (b_q)_k=\tfrac12 e_k\!\bigl(\Delta_M\log q+\norm{\nabla_M\log q}^2\bigr).
 \label{eq:frame-formula}
\end{equation}
The extrinsic term is a tensorial combination of $\mathcal S$ and
$W_H$: on a hypersurface with single normal $n$ it reduces to
$(W_n^2-\tfrac12 HW_n)\ell$, which is not proportional to
$W_H\ell$ in general. In higher codimension the same phenomenon occurs, e.g.\ on
the asymmetric product $S^1(R_1)\times S^1(R_2)\subset\R^4$ (codimension $2$) the
operator is $\tfrac12\mathrm{diag}(1/R_1^2,1/R_2^2)$, which has distinct
eigenvalues when $R_1\ne R_2$.
\end{remark}

\subsection{General Extrinsic Correction}
\label{app:general-extrinsic}

We now derive the explicit formulas \cref{eq:gM-ext,eq:gM-inh} for the two extrinsic $\sigma^2$-coefficients on an arbitrary $C^6$ embedded submanifold $M\subset\R^D$. Fix $z\in M$, write $\lambda=\log q$, $\ell=\nabla_M\lambda(z)\in T_zM$, and let $\{e_i\}_{i=1}^d$ be an orthonormal basis of $T_zM$ and $\{n_\alpha\}_{\alpha=1}^{D-d}$ an orthonormal basis of $N_zM$. For $u\in N_zM$ let $W_u:T_zM\to T_zM$ be the Weingarten operator, $\ip{W_u v}{w}=\ip{\mathrm{II}_z(v,w)}{u}$, write $\mathrm{II}^\alpha_{ij} \doteq \ip{\mathrm{II}_z(e_i,e_j)}{n_\alpha}$, so that $W_{n_\alpha}=\mathrm{II}^\alpha$ in the basis $\{e_i\}$, and let $H(z)=\sum_{i}\mathrm{II}_z(e_i,e_i)\in N_zM$ be the mean curvature vector, and write $A^\alpha_{ijk}$ for the third-order jet \cref{eq:codazzi-jet} of the embedding. The two symmetric operators on $T_zM$ that appear in the final answer are
\begin{equation}
 W_{H(z)}=\sum_\alpha H^\alpha\mathrm{II}^\alpha,
 \qquad
 \mathcal S_z \doteq \sum_\alpha W_{n_\alpha}^2=\sum_\alpha (\mathrm{II}^\alpha)^2,
 \qquad H^\alpha \doteq \tr(\mathrm{II}^\alpha),
 \label{eq:WH-S-def-app}
\end{equation}
connected to the intrinsic Ricci operator by the Gauss equation $\Ric^\sharp_z=W_{H(z)}-\mathcal S_z$.

The derivation proceeds in four steps: We first set up tangent-graph coordinates where the chord map is linear, then expand the two geometric factors (graph Jacobian and fiber factor) to cubic order,\footnote{Cubic order is important here because a cubic form vanishes to second order at the origin (zero value, gradient, and Hessian) but has a nonzero third derivative, so it is not present in a quadratic truncation but still contributes to the $\sigma^2$ coefficient via $\nabla\Delta$ in \cref{prop:logM-app}.} apply a Euclidean Tweedie expansion to the resulting posterior, and finally convert the Euclidean graph-coord derivatives to intrinsic derivatives using a Laplacian mismatch identity.

Parameterize a neighborhood of $z$ in $M$ by
\begin{equation}
 y(s) \doteq z+\sum_i s^i e_i+\sum_\alpha h^\alpha(s)n_\alpha,
 \qquad s=(s^i)\in T_zM,
 \label{eq:graphcoord-app}
\end{equation}
with $h^\alpha(0)=0$, $\partial_i h^\alpha(0)=0$, and $\partial_i\partial_j h^\alpha(0)=\mathrm{II}^\alpha_{ij}$. We also record the third-order jet of the embedding\footnote{This is the graph-coordinate instantiation of the Codazzi equation: because the induced metric is Euclidean to second order in $s$, the third derivative of the graph map sees only the normal-valued part of the embedding, which is exactly the covariant derivative of $\mathrm{II}$.}: since $g_{ij}(s)=\delta_{ij}+\cO(\norm{s}^2)$, the Christoffel symbols of the induced metric vanish at $s=0$, and $\partial_i\partial_jy(s)$ is normal to $T_zM$ at $s=0$, so
\begin{equation}
 \partial_i\partial_j\partial_k h^\alpha(0)=A^\alpha_{ijk}=\ip{(\nabla^\perp_{e_k}\mathrm{II})_z(e_i,e_j)}{n_\alpha},
 \label{eq:third-jet-app}
\end{equation}
which is totally symmetric by the Codazzi equation \cref{eq:codazzi-jet}. In these coordinates $P_{T_zM}(y(s)-z)=s$ exactly, so the Rao-Blackwell target is the posterior mean of $s/\sigma^2$. Let $J_{\mathrm{gr}}(s) \doteq \sqrt{\det g(s)}$ be the graph Jacobian with $g_{ij}(s)=\delta_{ij}+\sum_\alpha\partial_i h^\alpha\partial_j h^\alpha$, and define the fiber factor
\begin{equation}
 F_\sigma(s)
 \doteq \int_{N_zM}\phi_{\sigma^2 I}(u-h(s))\det\!\bigl(I-\textstyle\sum_\alpha u_\alpha W_{n_\alpha}\bigr)du,
 \label{eq:fiber-app}
\end{equation}
where $\phi_{\sigma^2 I}$ is the centered Gaussian density on $N_zM$ with covariance $\sigma^2 I$. The ambient Gaussian factorizes under the orthogonal decomposition $X-Z=-\sum s^i e_i+\sum(u_\alpha-h^\alpha(s))n_\alpha$ as $\phi_{\sigma^2 I}(X-Z)\propto e^{-\norm{s}^2/(2\sigma^2)}e^{-\norm{u-h(s)}^2/(2\sigma^2)}$, so the conditional law of $s$ given $\pi(X)=z$ reads
\begin{equation}
 r_\sigma(z)
 =\frac{1}{\sigma^2}
 \frac{\int sq(y(s))J_{\mathrm{gr}}(s)F_\sigma(s)e^{-\norm{s}^2/(2\sigma^2)}ds}
 {\int q(y(s))J_{\mathrm{gr}}(s)F_\sigma(s)e^{-\norm{s}^2/(2\sigma^2)}ds}.
 \label{eq:posterior-app}
\end{equation}

\begin{lemma}[Graph Jacobian to cubic order]
\label{lem:graph-jacobian}
In the graph coordinates \cref{eq:graphcoord-app},
\[
 \log J_{\mathrm{gr}}(s)=\tfrac12\ip{\mathcal S_z s}{s}
 +\tfrac12\sum_{\alpha,i,k,l,m}\mathrm{II}^\alpha_{ik}A^\alpha_{ilm}s^ks^ls^m
 +\cO(\norm{s}^4).
\]
\end{lemma}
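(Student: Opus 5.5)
The plan is to Taylor-expand the metric $g_{ij}(s)=\delta_{ij}+\sum_\alpha\partial_ih^\alpha(s)\partial_jh^\alpha(s)$ to cubic order in $s$ and then take $\tfrac12\log\det$. From the graph normalization $h^\alpha(0)=0$, $\partial_ih^\alpha(0)=0$, $\partial_i\partial_jh^\alpha(0)=\mathrm{II}^\alpha_{ij}$, and the third-jet identity \cref{eq:third-jet-app}, the Taylor expansion of the first derivatives is
\[
\partial_ih^\alpha(s)=\sum_k\mathrm{II}^\alpha_{ik}s^k+\tfrac12\sum_{l,m}A^\alpha_{ilm}s^ls^m+\cO(\norm{s}^3).
\]
This step uses $h\in C^4$, which the $C^6$ assumption of \cref{sec:setup} provides with room to spare, and it holds uniformly in $z$ by compactness.

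Set $E_{ij}(s)\doteq g_{ij}(s)-\delta_{ij}=\sum_\alpha\partial_ih^\alpha\partial_jh^\alpha$. Since each factor is $\cO(\norm{s})$, $E=\cO(\norm{s}^2)$. Multiplying out and keeping terms through cubic order gives
\[
E_{ij}(s)=\sum_{\alpha,k,l}\mathrm{II}^\alpha_{ik}\mathrm{II}^\alpha_{jl}s^ks^l+\tfrac12\sum_{\alpha,k,l,m}\bigl(\mathrm{II}^\alpha_{ik}A^\alpha_{jlm}+\mathrm{II}^\alpha_{jk}A^\alpha_{ilm}\bigr)s^ks^ls^m+\cO(\norm{s}^4).
\]
The remaining products, such as (quadratic jet)$\times$(quadratic jet) or (linear jet)$\times$(cubic remainder), are $\cO(\norm{s}^4)$. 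Next I use $\log\det(I+E)=\tr E-\tfrac12\tr(E^2)+\cdots$. Because $E=\cO(\norm{s}^2)$, $\tr(E^2)=\cO(\norm{s}^4)$, so $\log J_{\mathrm{gr}}=\tfrac12\tr E+\cO(\norm{s}^4)$.

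It then remains to compute the trace. The quadratic part gives $\tr=\sum_{\alpha,i}(\mathrm{II}^\alpha s)_i^2=\sum_\alpha\norm{\mathrm{II}^\alpha s}^2$. Since each $\mathrm{II}^\alpha=W_{n_\alpha}$ is symmetric, this equals $\ip{\mathcal S_zs}{s}$. In the cubic part the two summands coincide after setting $i=j$, so they sum to $\sum_{\alpha,i,k,l,m}\mathrm{II}^\alpha_{ik}A^\alpha_{ilm}s^ks^ls^m$. Halving both pieces gives exactly the stated expansion.

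The main obstacle is justifying \cref{eq:third-jet-app}: that the raw third partial $\partial_i\partial_j\partial_kh^\alpha(0)$ equals the covariant quantity $A^\alpha_{ijk}$. I would argue this by differentiating $\mathrm{II}(\partial_i y,\partial_j y)=P_N(y(s))\,\partial_i\partial_j y(s)$ along $\partial_k$ at $s=0$. Two facts make the correction terms drop out there. First, the Christoffel symbols vanish at $0$, because $\partial_kg_{ij}(0)=0$. Second, the derivative of $P_N$ only contributes tangential terms involving $dh(0)=0$, or terms of the form $W\,\partial_i\partial_jy$; the latter vanish after the normal projection at $0$ because $\partial_i\partial_jy(0)$ is normal. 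Once this identity is in hand, the rest is bookkeeping. Even if \cref{eq:third-jet-app} were in doubt, the lemma as written holds with $A^\alpha_{ilm}$ read as $\partial_i\partial_l\partial_mh^\alpha(0)$.
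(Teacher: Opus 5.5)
Your proposal is correct and matches the paper's proof essentially line for line. Both expand $\partial_i h^\alpha$ to second order and multiply out $g_{ij}-\delta_{ij}$ through cubic order. Both then use $\log\det(I+E)=\tr E+\cO(\norm{s}^4)$, valid since $E=\cO(\norm{s}^2)$, and read off the quadratic and cubic parts of $\tfrac12\tr E$. Your extra sketch of why $\partial_i\partial_j\partial_k h^\alpha(0)=A^\alpha_{ijk}$ goes a little beyond the paper, which simply invokes \cref{eq:third-jet-app} as stated before the lemma. That sketch rests on the Christoffel symbols vanishing at $0$ and on $\partial_k P_N$ sending the normal vector $\partial_i\partial_j y(0)$ to a tangent vector, which $P_N(z)$ then kills.
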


\begin{proof}
From $\partial_i h^\alpha(s)=\mathrm{II}^\alpha_{ik}s^k+\tfrac12A^\alpha_{ikl}s^ks^l+\cO(\norm{s}^3)$
we have
\[
 g_{ij}(s)-\delta_{ij}
 =\sum_{\alpha}\Bigl[\mathrm{II}^\alpha_{ik}\mathrm{II}^\alpha_{jl}s^ks^l
 +\tfrac12\mathrm{II}^\alpha_{ik}s^kA^\alpha_{jlm}s^ls^m
 +\tfrac12\mathrm{II}^\alpha_{jk}s^kA^\alpha_{ilm}s^ls^m\Bigr]+\cO(\norm{s}^4).
\]
Since $g-\Id=\cO(\norm{s}^2)$, we have $\log\det g=\tr(g-\Id)+\cO(\norm{s}^4)$, so
$\log J_{\mathrm{gr}}=\tfrac12\tr(g-\Id)+\cO(\norm{s}^4)$. The quadratic part of
$\tfrac12\tr(g-\Id)$ is $\tfrac12\sum_{\alpha,i,k,l}\mathrm{II}^\alpha_{ik}\mathrm{II}^\alpha_{il}s^ks^l=\tfrac12\ip{\mathcal S_zs}{s}$
by \cref{eq:WH-S-def-app}, and its cubic part is
$\tfrac12\sum_{\alpha,i,k,l,m}\mathrm{II}^\alpha_{ik}A^\alpha_{ilm}s^ks^ls^m$.
\end{proof}

\begin{lemma}[Fiber factor to cubic order]
\label{lem:fiber-factor}
As $s\to 0$ and $\sigma\to 0$,
\[
 \log F_\sigma(s)=c_\sigma-\tfrac12\ip{W_{H(z)}s}{s}
 -\tfrac16\sum_{\alpha,k,l,m} H^\alpha A^\alpha_{klm}s^ks^ls^m
 +\cO(\norm{s}^4)+\cO(\sigma^2\norm{s}^2)+\cO(\sigma^4),
\]
where $c_\sigma=\cO(\sigma^2)$ is independent of $s$. In particular, the
remainder contains no term linear in $s$ at order $\sigma^2$.
\end{lemma}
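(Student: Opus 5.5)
We will expand $F_\sigma(s)$ directly as a Gaussian expectation. Substituting $u=h(s)+\eta$ with $\eta\sim\mathcal N(0,\sigma^2 I_{D-d})$ gives
\[
F_\sigma(s)=\E_\eta\bigl[\det\bigl(I-W_{h(s)}-W_\eta\bigr)\bigr],
\]
since $u\mapsto W_u$ is linear. We then use the polynomial structure of the determinant in the entries of $W_{h(s)}+W_\eta$. It expands as $1-\tr(W_{h(s)}+W_\eta)+e_2(W_{h(s)}+W_\eta)-\cdots$, where $e_k$ is the $k$-th elementary symmetric function of the eigenvalues. Each $e_k$ is a homogeneous degree-$k$ polynomial in $u$.

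Next we take the $\eta$-average term by term. Odd moments of $\eta$ vanish, so every monomial that is odd in $\eta$ drops out. The surviving terms split into three groups:
\begin{itemize}
\item terms with no $\eta$, namely $\det(I-W_{h(s)})$;
\item terms quadratic in $\eta$, with coefficients that are polynomials in $h(s)$ of total degree $k-2$;
\item terms of order $\eta^4$ and higher.
\end{itemize}
The pure-$\eta$ terms give an $s$-independent constant $1+\cO(\sigma^2)$, whose logarithm is $c_\sigma$. The mixed terms involve $\sigma^2$ times a polynomial in $h(s)$ with no constant part. Because $h(s)=\cO(\norm{s}^2)$, they are $\cO(\sigma^2\norm{s}^2)$, and the $\eta^4$-and-higher terms are $\cO(\sigma^4)$. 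This settles the remark that no $\sigma^2\cdot(\text{linear in }s)$ term appears: every $s$-dependence enters through $h(s)$, which has no linear part.

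For the $\eta$-free part we write
\[
\log\det(I-W_{h(s)})=-\tr W_{h(s)}+\cO(\norm{h(s)}^2)=-\ip{H(z)}{h(s)}+\cO(\norm{s}^4).
\]
We then insert the cubic graph jet $h^\alpha(s)=\tfrac12\mathrm{II}^\alpha_{kl}s^ks^l+\tfrac16A^\alpha_{klm}s^ks^ls^m+\cO(\norm{s}^4)$, which comes from Taylor's theorem together with \cref{eq:third-jet-app}. This gives
\[
-\tfrac12\sum_\alpha H^\alpha\mathrm{II}^\alpha_{kl}s^ks^l-\tfrac16\sum_\alpha H^\alpha A^\alpha_{klm}s^ks^ls^m,
\]
and the first sum equals $-\tfrac12\ip{W_{H(z)}s}{s}$ by \cref{eq:WH-S-def-app}. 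Here $H^\alpha$ and $W_{n_\alpha}$ are evaluated at the fixed base point $z$, because the tube Jacobian in \cref{eq:fiber-app} is taken in $N_zM$; so no derivatives of $H$ arise from this factor.

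Finally we take the logarithm of the full expression. We factor out the constant $C_\sigma=e^{c_\sigma}$ and write
\[
F_\sigma=C_\sigma\bigl(1+\delta(s)\bigr),\qquad \delta(s)=-\ip{H}{h(s)}+\cO(\norm{s}^4+\sigma^2\norm{s}^2+\sigma^4).
\]
Then $\log(1+\delta)=\delta+\cO(\delta^2)$, and $\delta^2$ is absorbed into the stated remainders, since $\delta=\cO(\norm{s}^2+\sigma^2)$. Dividing by $C_\sigma=1+\cO(\sigma^2)$ only rescales the mixed terms by $1+\cO(\sigma^2)$, which stays within the remainder classes.

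The main obstacle is uniformity. The expansion must hold uniformly for $\norm{s}$ in the relevant range, and the tube truncation $\norm{u}<r_0$ must be handled. We will deal with this by noting that the integrand is a polynomial in $u$, so extending the integral to all of $N_zM$ changes $F_\sigma$ by $\cO(e^{-c/\sigma^2})$. We will also use that $h$ is $C^6$ with jets bounded uniformly in $z$ by compactness.
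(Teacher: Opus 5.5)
Your proposal is correct and follows the paper's proof essentially step for step. You substitute $u=h(s)+\eta$, expand $\det(I-W_u)$ as a polynomial in $u$, discard odd $\eta$-moments, use $h(s)=\cO(\norm{s}^2)$ to push everything except $-\ip{H(z)}{h(s)}$ into the remainder classes, and insert the cubic graph jet \cref{eq:third-jet-app}; the only differences are small tidy additions (the exact finite expansion via elementary symmetric functions instead of an $\cO(\norm{u}^4)$ truncation, the remark on the $\norm{u}<r_0$ cutoff, and locating the absence of a $\sigma^2\cdot(\text{linear in }s)$ term in the fact that $h$ has no linear part).
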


\begin{proof}
Substitute $u=h(s)+\eta$ in \cref{eq:fiber-app} so that
$F_\sigma(s)=\int\phi_{\sigma^2 I}(\eta)\det(I-W_{h(s)+\eta})d\eta$.
Expand the determinant about $u=0$ as a polynomial in the components of
$u$:
$\det(I-W_u)=1-\ip{H(z)}{u}+Q(u)+C(u)+\cO(\norm{u}^4)$,
where $Q$ is homogeneous quadratic in $u$ and $C$ is homogeneous cubic.
Taking Gaussian expectation in $\eta$ (with $\E[\eta]=0$,
$\E[\eta_\alpha\eta_\beta]=\sigma^2\delta_{\alpha\beta}$, and all odd
moments vanishing) yields
\begin{equation*}
 F_\sigma(s)=1-\ip{H(z)}{h(s)}+Q(h(s))+\sigma^2\,\tr\bigl(Q^{\mathrm{quad}}\bigr)+C(h(s))+3\sigma^2\,\ip{\nabla C(0)}{h(s)}+\cO\bigl((\norm{h(s)}+\sigma)^4\bigr),
\end{equation*}
where $Q^{\mathrm{quad}}$ is the symmetric matrix representing $Q$ and
$\nabla C(0)$ is the vector picking out the quadratic-in-$\eta$,
linear-in-$h(s)$ part of $C$. Because
$h(s)=\tfrac12\sum_\alpha\ip{\mathrm{II}^\alpha s}{s}n_\alpha+\cO(\norm{s}^3)$
vanishes to \emph{second} order in $s$, every occurrence of $h(s)$ in the
display above is $\cO(\norm{s}^2)$. Therefore:
the $-\ip{H(z)}{h(s)}$ term is the only source of cubic-in-$s$ dependence, and
expanding $h^\alpha(s)=\tfrac12\mathrm{II}^\alpha_{kl}s^ks^l+\tfrac16A^\alpha_{klm}s^ks^ls^m+\cO(\norm{s}^4)$
via \cref{eq:third-jet-app} it gives
$-\tfrac12\ip{W_{H(z)}s}{s}-\tfrac16\sum_\alpha H^\alpha A^\alpha_{klm}s^ks^ls^m+\cO(\norm{s}^4)$
via $\ip{H(z)}{\mathrm{II}(s,s)}=\ip{W_{H(z)}s}{s}$;
$Q(h(s))=\cO(\norm{s}^4)$ and $C(h(s))=\cO(\norm{s}^6)$;
the $s$-independent $\sigma^2\,\tr(Q^{\mathrm{quad}})$ piece is absorbed
into $c_\sigma$; the $\sigma^2\ip{\nabla C(0)}{h(s)}$ term is
$\cO(\sigma^2\norm{s}^2)$; and all remaining contributions are
$\cO(\sigma^4)$ or of higher combined order. Crucially, no term of the
form $\sigma^2\cdot(\text{linear in }s)$ appears: such a term would
require an odd moment of $\eta$, which vanishes. Taking logarithms and
collecting these into $c_\sigma=\cO(\sigma^2)$ and the remainder
$\cO(\norm{s}^4)+\cO(\sigma^2\norm{s}^2)+\cO(\sigma^4)$ gives the claim.
\end{proof}

\begin{proposition}[Combined geometric weight]
\label{prop:logM-app}
Let $M_\sigma(s) \doteq J_{\mathrm{gr}}(s)F_\sigma(s)$ and $m_\sigma \doteq \log M_\sigma$. Then
\[
 m_\sigma(s)=c_\sigma-\tfrac12\ip{\Ric^\sharp_z s}{s}+c_3(s)+\cO(\norm{s}^4)+\cO(\sigma^2\norm{s}^2)+\cO(\sigma^4),
\]
for some scalar $c_\sigma$ and the cubic form
\begin{equation}
 c_3(s) \doteq
 \tfrac12\sum_{\alpha,i,k,l,m}\mathrm{II}^\alpha_{ik}A^\alpha_{ilm}s^ks^ls^m
 -\tfrac16\sum_{\alpha,k,l,m}H^\alpha A^\alpha_{klm}s^ks^ls^m.
 \label{eq:c3-app}
\end{equation}
In particular, the remainder contains no term linear in $s$ at order $\sigma^2$, and
\begin{equation}
 \nabla m_\sigma(0)=0,
 \qquad
 \nabla^2 m_\sigma(0)=-\Ric^\sharp_z,
 \qquad
 \nabla\Delta m_\sigma(0)=\mathcal C_M(z).
 \label{eq:mjets}
\end{equation}
\end{proposition}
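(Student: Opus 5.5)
We add \cref{lem:graph-jacobian} and \cref{lem:fiber-factor}, since $m_\sigma=\log J_{\mathrm{gr}}+\log F_\sigma$. The $s$-independent piece of \cref{lem:fiber-factor} becomes the scalar $c_\sigma$. The quadratic forms combine to $\tfrac12\ip{\mathcal S_z s}{s}-\tfrac12\ip{W_{H(z)}s}{s}$. The cubic pieces are exactly the two sums in \cref{eq:c3-app}. The remainders are $\cO(\norm{s}^4)+\cO(\sigma^2\norm{s}^2)+\cO(\sigma^4)$, and only the fiber factor contributes $\sigma$-dependence. The Gauss equation \cref{eq:gauss-equation}, $\Ric^\sharp_z=W_{H(z)}-\mathcal S_z$, turns the quadratic part into $-\tfrac12\ip{\Ric^\sharp_z s}{s}$. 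Neither lemma has a term of the form $\sigma^2\cdot(\text{linear in }s)$, so neither does the sum. This gives the displayed expansion of $m_\sigma$.

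Next we read off the jets in \cref{eq:mjets}. The constant and quadratic parts have zero gradient at $0$, and so does $c_3$ (it is cubic). Hence $\nabla m_\sigma(0)=0$: the $\cO(\norm{s}^4)$ and $\cO(\sigma^2\norm{s}^2)$ remainders vanish to first order at the origin, and the $\cO(\sigma^4)$ remainder is $s$-independent or else harmless at the order used. The Hessian at $0$ receives nothing from $c_3$ or from the quartic remainder, so $\nabla^2m_\sigma(0)=-\Ric^\sharp_z$. The one exception is the $\cO(\sigma^2\norm{s}^2)$ remainder, which may shift the Hessian by $\cO(\sigma^2)$. In \cref{eq:premean-body} this shift is multiplied by a further $\sigma^2$, so we state the Hessian identity modulo $\cO(\sigma^2)$, and likewise the third-order identity. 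For $\nabla\Delta m_\sigma(0)$ only the cubic jet matters, giving $\nabla\Delta m_\sigma(0)=\nabla\Delta c_3(0)$ up to $\cO(\sigma^2)$.

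The remaining task is the contraction $\nabla\Delta c_3(0)=\mathcal C_M(z)$. Write $c_3(s)=\sum T_{klm}s^ks^ls^m$ and let $\widetilde T$ be the total symmetrization of $T$. Then $\partial_k c_3=3\widetilde T_{kbc}s^bs^c$, and so $\partial_k\Delta c_3=6\sum_j\widetilde T_{kjj}$.
\begin{itemize}
\item For the first summand, $T_{klm}=\tfrac12\sum_{\alpha,i}\mathrm{II}^\alpha_{ik}A^\alpha_{ilm}$. Average its three index placements using the symmetry of $A$ in its last two slots (it is in fact totally symmetric by Codazzi, \cref{eq:codazzi-jet}). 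This gives $6\widetilde T_{kjj}=\sum_{\alpha,i,j}\bigl(\mathrm{II}^\alpha_{ik}A^\alpha_{ijj}+2\,\mathrm{II}^\alpha_{ij}A^\alpha_{ijk}\bigr)$.
\item The second summand $-\tfrac16\sum_\alpha H^\alpha A^\alpha_{klm}$ is already symmetric. It contributes $-\sum_{\alpha,j}H^\alpha A^\alpha_{jjk}$.
\end{itemize}
Adding the two reproduces the frame formula \cref{eq:CM-frame} term by term. The identification with the invariant form \cref{eq:CM-def} uses $\nabla_kH^\alpha=\sum_jA^\alpha_{jjk}$ and $\nabla_k\norm{\mathrm{II}}^2=2\sum\mathrm{II}^\alpha_{ij}A^\alpha_{ijk}$; both hold because the normal frame is parallel to first order at $z$ in graph coordinates.

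The main obstacle is justifying that the $\cO(\cdot)$ remainders may be differentiated, i.e.\ that they are $C^3$ functions of $s$ whose jets at $0$ obey the stated bounds uniformly in $z$ and $\sigma$. I would handle this with $C^6$ regularity of the embedding and of $q$. The graph map $h$ is then $C^6$, $J_{\mathrm{gr}}$ is $C^5$, and $F_\sigma(s)$ is a Gaussian average of a polynomial in $h(s)+\eta$, hence $C^6$ in $s$ with derivatives bounded uniformly by compactness and positive reach. Taylor's theorem with integral remainder then controls the third derivatives of the remainders at the origin, the $\sigma^2$-dependent parts being bounded by $C\sigma^2$. Taking logarithms is harmless because $J_{\mathrm{gr}}$ and $F_\sigma$ are bounded below near $s=0$.
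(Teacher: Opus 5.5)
Your proposal is correct and follows the paper's proof essentially step for step. You add \cref{lem:graph-jacobian} and \cref{lem:fiber-factor}, turn $\mathcal S_z-W_{H(z)}$ into $-\Ric^\sharp_z$ by the Gauss equation, read off the jets with the Hessian and third-order identities holding only modulo the $\cO(\sigma^2)$ shift from the $\cO(\sigma^2\norm{s}^2)$ remainder (which is also all the paper's proof actually establishes), and contract $c_3$ into the frame formula \cref{eq:CM-frame}. The differences are minor: you get $\nabla\Delta c_3(0)$ from the total symmetrization $\partial_k\Delta c_3=6\sum_j\widetilde T_{kjj}$ rather than the paper's two separate identities for partially and totally symmetric cubic forms, and your hedge on the $\cO(\sigma^4)$ term is unnecessary, since $F_\sigma$ depends on $s$ only through $h(s)$ and $dh(0)=0$, so $\nabla m_\sigma(0)=0$ holds exactly.
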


\begin{proof}
Adding the expansions of \cref{lem:graph-jacobian} and~\cref{lem:fiber-factor}
gives $m_\sigma(s)=c_\sigma+\tfrac12\ip{(\mathcal S_z-W_{H(z)})s}{s}+c_3(s)+R(s,\sigma)$
with $R(s,\sigma)=\cO(\norm{s}^4)+\cO(\sigma^2\norm{s}^2)+\cO(\sigma^4)$, and the
Gauss equation $\Ric^\sharp_z=W_{H(z)}-\mathcal S_z$ gives
$\mathcal S_z-W_{H(z)}=-\Ric^\sharp_z$. The first two identities in
\cref{eq:mjets} follow since $c_3$ has vanishing gradient and Hessian at the
origin and $R$ contributes $\cO(\sigma^2)$ to the Hessian.

For the third, note two elementary identities for cubic forms on $\R^d$: if
$c(s)=\tfrac12T_{klm}s^ks^ls^m$ with $T$ symmetric in $(l,m)$ then
$\nabla_p\Delta c(0)=2\sum_kT_{kkp}+\sum_lT_{pll}$, while if
$c(s)=\tfrac16B_{klm}s^ks^ls^m$ with $B$ totally symmetric then
$\nabla_p\Delta c(0)=\sum_jB_{jjp}$.\footnote{Both follow by direct differentiation of a homogeneous cubic monomial in $s$ and the definition $\Delta=\sum_p\partial_p^2$} Applying the first to the graph-Jacobian
term of \cref{eq:c3-app}, i.e.\ to
$T_{klm}=\sum_{\alpha,i}\mathrm{II}^\alpha_{ik}A^\alpha_{ilm}$, gives
\[
 2\sum_{\alpha,i,k}\mathrm{II}^\alpha_{ik}A^\alpha_{ikp}
 +\sum_{\alpha,i,l}\mathrm{II}^\alpha_{ip}A^\alpha_{ill}
 =\nabla_p\norm{\mathrm{II}}^2+\sum_i\ip{W_{\nabla^\perp_{e_i}H}e_i}{e_p},
\]
using $\sum_lA^\alpha_{ill}=\nabla_iH^\alpha$. Applying the second to the fiber
term, i.e.\ to $B_{klm}=-\sum_\alpha H^\alpha A^\alpha_{klm}$, gives
$-\sum_\alpha H^\alpha\nabla_pH^\alpha=-\tfrac12\nabla_p\norm{H}^2$. Summing the
two and comparing with \cref{eq:CM-frame} yields
$\nabla\Delta m_\sigma(0)=\mathcal C_M(z)$; the $\cO(\norm{s}^4)$ and
$\cO(\sigma^2\norm{s}^2)$ remainders contribute $\cO(\sigma^2)$ to
$\nabla\Delta m_\sigma(0)$, hence $\cO(\sigma^4)$ to \cref{eq:pre-conversion-app}.
\end{proof}

\begin{lemma}[Euclidean score expansion]
\label{lem:euclidean-expansion}
Let $a_\sigma:\R^d\to(0,\infty)$ be $C^5$ in a neighborhood of $0$ with
$\log a_\sigma$ having uniformly bounded derivatives up to order $5$. Then
\begin{equation}
 \frac{1}{\sigma^2}
 \frac{\int sa_\sigma(s)e^{-\norm{s}^2/(2\sigma^2)}ds}
 {\int a_\sigma(s)e^{-\norm{s}^2/(2\sigma^2)}ds}
 =\nabla\log a_\sigma(0)+\tfrac{\sigma^2}{2}\nabla\!\bigl(\Delta\log a_\sigma+\norm{\nabla\log a_\sigma}^2\bigr)(0)+E_\sigma,
 \label{eq:euclid-exp}
\end{equation}
where $\nabla,\Delta$ are the Euclidean gradient and Laplacian in the $s$
variables and $E_\sigma=\cO(\sigma^4)$, with a constant depending only on the
stated derivative bounds. (Order-$4$ bounds alone would give
$E_\sigma=\so(\sigma^3)$, enough for a remainder $\so(\sigma^2)$ but not for
$\cO(\sigma^4)$.)
\end{lemma}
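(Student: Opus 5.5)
We will prove \cref{lem:euclidean-expansion} by rescaling to a standard Gaussian and Taylor-expanding $\phi_\sigma\doteq\log a_\sigma$ about the origin. Substitute $s=\sigma w$ with $w\sim\mathcal N(0,I_d)$. The left-hand side of \cref{eq:euclid-exp} becomes
\[
\sigma^{-1}\,\frac{\E[w\,e^{\psi(\sigma w)}]}{\E[e^{\psi(\sigma w)}]},
\qquad
\psi(s)\doteq\phi_\sigma(s)-\phi_\sigma(0).
\]
Denote the derivatives of $\phi_\sigma$ at $0$ by $g=\nabla\phi_\sigma(0)$, $A=\nabla^2\phi_\sigma(0)$, $T=\nabla^3\phi_\sigma(0)$ and $Q=\nabla^4\phi_\sigma(0)$. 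Then
\[
\psi(\sigma w)=\sigma\ip{g}{w}+\tfrac{\sigma^2}{2}A[w,w]+\tfrac{\sigma^3}{6}T[w,w,w]+\tfrac{\sigma^4}{24}Q[w^{\otimes4}]+\rho_\sigma(w),
\]
with $|\rho_\sigma(w)|\le C\sigma^5\norm{w}^5$. The constant $C$ is uniform because the fifth derivatives of $\phi_\sigma$ are uniformly bounded.

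The cleanest route avoids expanding the ratio term by term. First apply Stein's identity, exactly as in \cref{lem:stein}, to rewrite the quantity as $\E_\mu[\nabla\phi_\sigma(s)]$ under the tilted measure $\mu\propto a_\sigma\gamma_\sigma$. Then Taylor-expand $\nabla\phi_\sigma$:
\[
\E_\mu[\nabla\phi_\sigma]=g+A\,\E_\mu[s]+\tfrac12 T[\E_\mu(s\otimes s)]+\tfrac16 Q[\E_\mu(s^{\otimes3})]+\cO\bigl(\E_\mu\norm{s}^4\bigr).
\]
To control the error terms we need the tilted moments $\E_\mu\norm{s}^k=\cO(\sigma^k)$. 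These require $\log a_\sigma$ to have at most quadratic growth on the effective region. Following the paper's convention, we restrict to a fixed ball $\norm{s}<r$, where $a_\sigma$ is bounded above and below, and bound the complement by $\cO(e^{-c/\sigma^2})$.

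Next we compute the low-order tilted moments to the needed accuracy. The same Stein argument gives a bootstrap: $\E_\mu[s]=\sigma^2 g+\sigma^2A\,\E_\mu[s]+\cO(\sigma^4)$, hence $\E_\mu[s]=\sigma^2 g+\sigma^4 Ag+\cO(\sigma^5)$. For the second moment, $\E_\mu[s\otimes s]=\sigma^2 I+\cO(\sigma^4)$, since the first-order tilt contributes only odd corrections. The odd third moment satisfies $\E_\mu[s^{\otimes3}]=\cO(\sigma^4)$; this follows from Gaussian parity together with the linear tilt $\sigma\ip{g}{w}$.

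Substituting these gives
\[
g+\sigma^2 Ag+\tfrac{\sigma^2}{2}\sum_j T_{jj\cdot}+\cO(\sigma^4).
\]
This matches the target, since $\nabla\Delta\phi(0)_p=\sum_j T_{jjp}$ and $\tfrac12\nabla\norm{\nabla\phi}^2(0)=Ag$.

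The main obstacle is securing a genuine $\cO(\sigma^4)$ remainder rather than $\cO(\sigma^3)$. Every potential $\sigma^3$ contribution must vanish by parity. The relevant terms are $A$ acting on the $\sigma^3$ part of $\E_\mu s$, $T$ contracted with the $\sigma^3$ part of $\E_\mu[s\otimes s]$, and the $Q$ term. In each case the $\sigma^3$ coefficient is an odd Gaussian moment, because the first odd correction to each tilted moment carries one factor $\ip{g}{w}$ of relative order $\sigma$. I will track this by writing $e^{\psi(\sigma w)}=1+\sigma\ip{g}{w}+\sigma^2(\cdots)+\cdots$ and checking degree parity in $w$ order by order.

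The Taylor remainder is where the fifth-derivative bound enters. The error in $\nabla\phi_\sigma$ is $\cO(\norm{s}^4)$ with a constant governed by the fifth derivatives, which yields $E_\sigma=\cO(\sigma^4)$. With only four derivatives, the remainder is $\so(\norm{s}^3)$ and gives only $\so(\sigma^3)$, matching the parenthetical remark in the statement.
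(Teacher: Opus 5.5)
Your proof is correct, and it takes a different route from the paper.

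\textbf{The paper's route.} The paper expands the ratio directly by Gaussian moments. It writes $a_\sigma=e^{b_\sigma}$ and expands $e^{b_\sigma(s)-b_\sigma(0)}$ to cubic order. It keeps the odd moments that survive in the numerator, together with the normalization correction from the denominator. It then controls the error by parity: the quartic Taylor term is even, so it meets $s$ only through a cross term with the linear tilt, at order $\sigma^4$. The quintic remainder contributes $\sigma^{-2}\E[\norm{s}\,\norm{s}^5]=\cO(\sigma^4)$.

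\textbf{Your route.} You first apply the Stein identity of \cref{lem:stein} to rewrite the posterior mean as $\E_\mu[\nabla\log a_\sigma]$, and only then Taylor-expand. This is exactly the second-order continuation of the paper's own leading-order argument in \cref{app:proof-intrinsic}.

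\textbf{Comparison.} Your route has three advantages.
\begin{itemize}
\item The coefficient falls out in one line, $g+A\,\E_\mu[s]+\tfrac12T[\E_\mu(s\otimes s)]$.
\item The $\tfrac12\nabla\norm{\nabla\log a_\sigma}^2$ term appears transparently as $A$ acting on the leading part $\sigma^2 g$ of the posterior mean.
\item You only need the tilted first, second and third moments at leading order. The paper must track cross terms such as $\ip{g}{s}A[s,s]$ and $\ip{g}{s}^3$ in both numerator and denominator. The fifth-derivative hypothesis also enters in only one place, the $\cO(\norm{s}^4)$ remainder of $\nabla\log a_\sigma$.
\end{itemize}
The paper's route has one advantage: it is purely moment-based and needs no integration by parts. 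Yours requires the Stein boundary terms to vanish, or to be $\cO(e^{-c/\sigma^2})$ after localization. Your localization step supplies this, as would a global bound on $\nabla\log a_\sigma$, which gives the tilted moment bounds $\E_\mu\norm{s}^k=\cO(\sigma^k)$ directly.

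\textbf{A small slip.} From the bootstrap $\E_\mu[s]=\sigma^2g+\sigma^2A\,\E_\mu[s]+\cO(\sigma^4)$ you can only conclude $\E_\mu[s]=\sigma^2g+\cO(\sigma^4)$. The refinement $\sigma^4Ag+\cO(\sigma^5)$ is not justified, and it is actually wrong: your own final expansion shows the $\sigma^4$ coefficient is $Ag+\tfrac12\sum_jT_{jj\cdot}$. Since the argument only uses $\E_\mu[s]=\sigma^2g+\cO(\sigma^4)$, the proof is unaffected.
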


\begin{proof}
Write $a_\sigma=e^{b_\sigma}$ and Taylor expand $b_\sigma$ to order $3$
at the origin; then expand $e^{b_\sigma(s)-b_\sigma(0)}$ to cubic order.
Under the centered isotropic Gaussian, only linear and cubic odd moments
survive in the numerator,\footnote{Concretely, $\E[s_is_j]=\sigma^2\delta_{ij}$ and $\E[s_is_js_ks_l]=\sigma^4(\delta_{ij}\delta_{kl}+\delta_{ik}\delta_{jl}+\delta_{il}\delta_{jk})$ (here $\delta_{ij}$ is the Kronecker delta; the indicator variable that is 1 if $i=j$) under the centered isotropic Gaussian, with all odd moments vanishing.} and the denominator contributes the normalization
correction. A direct Gaussian-moment calculation (cf.~\cref{prop:flat-second-order}) yields the stated coefficient.
For the error, expand to order $4$: $b_\sigma(s)-b_\sigma(0)=P_4(s)+R(s)$ with
$P_4$ the degree-$4$ Taylor polynomial. A homogeneous term of degree $k$ in
$b_\sigma$ reaches $\cref{eq:euclid-exp}$ at order $\sigma^{k-1}$, since
$\E[\lvert s\rvert\norm{s}^k]=\cO(\sigma^{k+1})$; the degree-$4$ part of $P_4$ is
even and so meets $s$ only through cross terms of total order $\sigma^6$,
contributing $\cO(\sigma^4)$. Under the order-$5$ bound the Taylor remainder
satisfies $R(s)=\cO(\norm{s}^5)$, whence
$\sigma^{-2}\E\bigl[\lvert s\rvert\lvert R(s)\rvert\bigr]=\cO(\sigma^{-2}\sigma^{6})=\cO(\sigma^{4})$.
The degree-$5$ term of the expansion is therefore the first one not displayed,
and $E_\sigma=\cO(\sigma^4)$.
\end{proof}

Apply \cref{lem:euclidean-expansion} to $a_\sigma(s)=q(y(s))M_\sigma(s)=e^{f(s)+m_\sigma(s)}$ with $f(s) \doteq \lambda(y(s))$ and $m_\sigma(s) \doteq \log M_\sigma(s)$. By \cref{eq:mjets}, $\nabla m_\sigma(0)=0$, $\nabla^2 m_\sigma(0)=-\Ric^\sharp_z$, and $\nabla\Delta m_\sigma(0)=\mathcal C_M(z)$. Note that \cref{lem:euclidean-expansion} depends on the third-order Taylor jet of $\log a_\sigma$ through the term $\tfrac{\sigma^2}{2}\nabla\Delta\log a_\sigma(0)$, so the cubic form $c_3$ of \cref{eq:c3-app} contributes at order $\sigma^2$ and cannot be discarded, even though it does not affect $\nabla m_\sigma(0)$ or $\nabla^2m_\sigma(0)$. Therefore
\begin{equation}
 r_\sigma(z)=\nabla f(0)+\tfrac{\sigma^2}{2}\nabla\!\bigl(\Delta f+\norm{\nabla f}^2\bigr)(0)-\sigma^2\Ric^\sharp_z\nabla f(0)+\tfrac{\sigma^2}{2}\mathcal C_M(z)+\cO(\sigma^4),
 \label{eq:pre-conversion-app}
\end{equation}
where $\nabla,\Delta$ are Euclidean in the graph variable $s$. Because $c_3$ enters
only through $\nabla\Delta m_\sigma(0)$ and not through $\nabla^2m_\sigma(0)$, the
new term is additive and score-independent: no contraction of
$\nabla^\perp\mathrm{II}$ against $\nabla_M\log q$ arises at order $\sigma^2$.

\begin{lemma}[First and second derivatives]
\label{lem:first-second-app}
At $s=0$,
\[
 \nabla f(0)=\nabla_M\lambda(z),\qquad \nabla^2 f(0)=\nabla_M^2\lambda(z),
\]
and consequently $\nabla\norm{\nabla f}^2(0)=\nabla_M\norm{\nabla_M\lambda}^2(z)$.
\end{lemma}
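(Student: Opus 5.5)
The plan is a direct chain-rule computation in the graph coordinates \cref{eq:graphcoord-app}, using two facts already recorded: $Dy(0)=\mathrm{Id}_{T_zM}$ and $\partial_i\partial_j y(0)=\sum_\alpha \mathrm{II}^\alpha_{ij}n_\alpha\in N_zM$. Here $f=\lambda\circ y$. Fix a smooth extension $\tilde\lambda$ of $\lambda$ to a neighbourhood of $z$ in $\R^D$. For the first derivative, write $\partial_i f(s)=\ip{\nabla\tilde\lambda(y(s))}{\partial_i y(s)}$ and evaluate at $s=0$. Since $\partial_i y(0)=e_i$, we get $\partial_i f(0)=d\lambda(z)[e_i]=\ip{\nabla_M\lambda(z)}{e_i}$. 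This gives $\nabla f(0)=\nabla_M\lambda(z)$, with $T_zM$ identified with $\R^d$ through the frame $\{e_i\}$.

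For the Hessian, I would avoid the extension and argue intrinsically. In any chart, the Riemannian Hessian is
\[
\nabla_M^2\lambda(\partial_i,\partial_j)=\partial_i\partial_j f-\Gamma^k_{ij}\partial_k f .
\]
So it is enough to show $\Gamma^k_{ij}(0)=0$ in graph coordinates. This follows from $g_{ij}(s)=\delta_{ij}+\sum_\alpha\partial_ih^\alpha\partial_jh^\alpha$. Since $\partial_ih^\alpha(0)=0$, every first derivative $\partial_k g_{ij}(0)=\sum_\alpha(\partial_k\partial_ih^\alpha\,\partial_jh^\alpha+\partial_ih^\alpha\,\partial_k\partial_jh^\alpha)(0)$ vanishes. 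Hence the Christoffel symbols vanish at $0$. At $s=0$ we also have $g_{ij}(0)=\delta_{ij}$, so the coordinate frame $\partial_i|_{0}=e_i$ is orthonormal. Therefore the matrix $\partial_i\partial_j f(0)$ is exactly the Hessian $\nabla_M^2\lambda(z)$ in the basis $\{e_i\}$.

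As a consistency check, the extrinsic route gives the same answer. It yields $\partial_i\partial_j f(0)=\nabla^2\tilde\lambda(z)[e_i,e_j]+\ip{\nabla\tilde\lambda(z)}{\mathrm{II}_z(e_i,e_j)}$, and this matches the Gauss-formula expression for the Riemannian Hessian of a restricted function.

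For the consequence, I would compute $\norm{\nabla_M\lambda}^2$ in coordinates as $g^{ij}\partial_i f\partial_j f$. Differentiating at $0$ and using $\partial_k g^{ij}(0)=0$ (which also follows from $\partial g(0)=0$) gives
\[
\partial_k\bigl(g^{ij}\partial_if\partial_jf\bigr)(0)=2\sum_i\partial_k\partial_if(0)\,\partial_if(0)=\partial_k\norm{\nabla f}^2(0),
\]
where the Euclidean quantity is $\norm{\nabla f}^2=\sum_i(\partial_if)^2$. Because $\partial_k|_0=e_k$ is orthonormal, the left side is the $k$-th component of $\nabla_M\norm{\nabla_M\lambda}^2(z)$, and the identity follows.

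The only delicate step is making sure the metric correction terms cancel at first order, namely $\partial g(0)=0$ and hence $\Gamma(0)=0$. This holds only because $h$ vanishes to second order. The whole argument therefore rests on the graph normalization $\partial_ih^\alpha(0)=0$, and I would state that dependence explicitly.
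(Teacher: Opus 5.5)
Your proposal is correct and follows the paper's route: $Dy(0)=\mathrm{Id}$ gives the gradient identity, and the vanishing of the Christoffel symbols at $s=0$ in graph coordinates identifies the coordinate Hessian with the covariant Hessian. The differences are cosmetic. The paper obtains $\Gamma^k_{ij}(0)=0$ from the normality of $\partial_i\partial_j y(0)$, while you obtain it from $\partial_k g_{ij}(0)=0$; these are equivalent. You also spell out the final identity for $\nabla\norm{\nabla f}^2(0)$ using $\partial_k g^{ij}(0)=0$, which the paper leaves implicit.
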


\begin{proof}
Because $Dy(0)=\Id_{T_zM}$ and the Christoffels of $g$ vanish at $0$ in
graph coordinates (the mixed partials $\partial_i\partial_j y(0)=\sum_\alpha\mathrm{II}^\alpha_{ij}n_\alpha$ are purely normal to $T_zM$), the
first and second coordinate derivatives at the base point agree with the
intrinsic gradient and covariant Hessian.
\end{proof}

\begin{lemma}[Laplacian mismatch in graph coordinates]
\label{lem:laplace-mismatch}
For any smooth scalar $f$ on the graph patch,
\[
 \Delta_M f=\Delta_s f-\ip{W_{H(z)}s}{\nabla_s f}+\cO(\norm{s}^2|\nabla_s f|)+\cO(\norm{s}|\nabla_s^2 f|).
\]
In particular, $\nabla_s\Delta_s f(0)=\nabla_M\Delta_M f(0)+W_{H(z)}\nabla_s f(0)$.
\end{lemma}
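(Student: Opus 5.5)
The plan is to expand the coordinate formula $\Delta_M f=(1/\sqrt g)\,\partial_i(\sqrt g\,g^{ij}\partial_j f)$ directly in the graph coordinates \cref{eq:graphcoord-app}, using only the low-order jets of $h$. The product rule splits it into three pieces:
\[
\Delta_M f
= g^{ij}\partial_i\partial_j f
+(\partial_i g^{ij})\,\partial_j f
+g^{ij}\,(\partial_i\log\sqrt g)\,\partial_j f .
\]
I will treat each piece separately. Since $g_{ij}=\delta_{ij}+\sum_\alpha\partial_ih^\alpha\partial_jh^\alpha$ and $\partial_ih^\alpha(s)=\mathrm{II}^\alpha_{ik}s^k+\cO(\norm{s}^2)$, we have $g_{ij}-\delta_{ij}=\cO(\norm{s}^2)$. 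Hence the inverse metric is
\[
g^{ij}=\delta_{ij}-\sum_\alpha\partial_ih^\alpha\partial_jh^\alpha+\cO(\norm{s}^4).
\]

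\emph{First piece.} It equals $\Delta_sf+(g^{ij}-\delta_{ij})\partial_i\partial_jf$, and the correction is $\cO(\norm{s}^2|\nabla_s^2f|)$. This is stronger than the $\cO(\norm{s}|\nabla_s^2f|)$ allowed in the statement. I will record the sharper quadratic form, because it is what makes the base-point derivative identity work.

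\emph{Second piece.} Differentiating the expansion of $g^{ij}$ gives
\[
\partial_ig^{ij}=-\sum_\alpha\bigl(\partial_i\partial_ih^\alpha\,\partial_jh^\alpha+\partial_ih^\alpha\,\partial_i\partial_jh^\alpha\bigr)+\cO(\norm{s}^3).
\]
Inserting $\partial_i\partial_ih^\alpha=H^\alpha+\cO(\norm{s})$ and $\partial_i\partial_jh^\alpha=\mathrm{II}^\alpha_{ij}+\cO(\norm{s})$ yields
\[
\partial_ig^{ij}=-(W_{H(z)}s)_j-(\mathcal S_zs)_j+\cO(\norm{s}^2),
\]
using \cref{eq:WH-S-def-app}.

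\emph{Third piece.} By \cref{lem:graph-jacobian} (its quadratic part suffices), $\partial_j\log\sqrt g=(\mathcal S_zs)_j+\cO(\norm{s}^2)$. The factor $g^{ij}$ only adds $\cO(\norm{s}^2)$ corrections.

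\emph{Combining.} The two $\mathcal S_z$ contributions cancel exactly. What remains is
\[
\Delta_M f=\Delta_sf-\ip{W_{H(z)}s}{\nabla_sf}+\cO(\norm{s}^2|\nabla_sf|)+\cO(\norm{s}^2|\nabla_s^2f|),
\]
which implies the displayed claim. I expect this cancellation to be the main point to verify carefully. It depends on the divergence of $g^{ij}$ and the gradient of $\log\sqrt g$ producing the same Gram operator $\mathcal S_z$ with opposite signs. I would double-check it on the round $S^2$, where $\mathcal S=\Id$ and $W_H=2\,\Id$, against the known form of the sphere Laplacian in graph coordinates. The $\cO$-constants are uniform in $z$ by compactness and the $C^6$ bounds on $h$.

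For the base-point identity, I will differentiate the sharpened expansion in $s_p$ at $s=0$. Both remainders are products of smooth functions vanishing to second order in $s$, so their first derivatives vanish at $0$. The middle term contributes $-(W_{H(z)}\nabla_sf(0))_p$. This gives
\[
\partial_p(\Delta_Mf)(0)=\partial_p\Delta_sf(0)-(W_{H(z)}\nabla_sf(0))_p .
\]
Finally, since $Dy(0)=\Id_{T_zM}$, the coordinate gradient of the scalar $\Delta_Mf$ at the base point equals its intrinsic gradient $\nabla_M\Delta_Mf(z)$, exactly as in \cref{lem:first-second-app}. Rearranging gives $\nabla_s\Delta_sf(0)=\nabla_M\Delta_Mf(0)+W_{H(z)}\nabla_sf(0)$.
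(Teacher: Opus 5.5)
Your proof is correct and follows the same route as the paper: expand the divergence form $(1/\sqrt g)\,\partial_i(\sqrt g\,g^{ij}\partial_j f)$ using the graph-coordinate jets of $g^{ij}$ and $\sqrt g$, and observe that the $\mathcal S_z$ contributions cancel. Your version is tidier than the paper's sketch in two respects. You locate the $W_{H(z)}$ term explicitly in $\partial_i g^{ij}=-(W_{H(z)}s)_j-(\mathcal S_z s)_j+\cO(\norm{s}^2)$. You also sharpen the Hessian remainder to $\cO(\norm{s}^2|\nabla_s^2 f|)$, which is what the base-point identity actually requires; the displayed $\cO(\norm{s}|\nabla_s^2 f|)$ would not suffice on its own.
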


\begin{proof}
The Laplace-Beltrami operator in local coordinates is $\Delta_M f=(1/\sqrt g)\partial_i(\sqrt gg^{ij}\partial_j f)$. By the proof of
\cref{lem:graph-jacobian}, $g^{ij}(s)=\delta_{ij}+\cO(\norm{s}^2)$
and $\sqrt g(s)=1+\tfrac12\ip{\mathcal S_z s}{s}+\cO(\norm{s}^3)$.
Differentiating, $\partial_i\log\sqrt g(s)=(\mathcal S_z s)^i+\cO(\norm{s}^2)$ and $\partial_i g^{ij}(s)=\cO(\norm{s})$, so
$(1/\sqrt g)\partial_i(\sqrt gg^{ij})=(\mathcal S_z s)^j-(\mathcal S_z s)^j+\cO(\norm{s}^2)$ from the $g^{ij}$ part, leaving the
$g^{ij}\partial_i\log\sqrt g$ contribution; using
the symmetric form of the second fundamental form gives exactly
$-(W_{H(z)}s)^j+\cO(\norm{s}^2)$. Substituting into the divergence
formula yields the stated expansion; taking the gradient at $0$ gives the
derivative identity.
\end{proof}

\paragraph{Main theorem.}
Combining \cref{lem:first-second-app,lem:laplace-mismatch} with \cref{eq:pre-conversion-app},
\begin{equation}
 r_\sigma(z)
 =\ell+\tfrac{\sigma^2}{2}\nabla_M\!\bigl(\Delta_M\lambda+\norm{\nabla_M\lambda}^2\bigr)(z)+\sigma^2\bigl(\tfrac12 W_{H(z)}-\Ric^\sharp_z\bigr)\ell+\tfrac{\sigma^2}{2}\mathcal C_M(z)+\cO(\sigma^4).
 \label{eq:smean-app}
\end{equation}
Using the Gauss equation \cref{eq:gauss-equation} the third term is equivalently $\sigma^2(\mathcal S_z-\tfrac12 W_{H(z)})\ell$, establishing \cref{eq:gM-ext,eq:gM-inh}. The fourth term is $\sigma^2g^{\mathrm{inh}}_M(z)$ and vanishes identically when $\nabla^\perp\mathrm{II}_z=0$, in particular on affine subspaces, round spheres and products of round spheres. Because the chord map in graph coordinates is the identity, no further chord correction is needed: the tangent component of $r_\sigma(z)$ equals $\E[s]/\sigma^2$ exactly.

The $\cO(\sigma^4)$ remainder uses the order-$5$ derivative bounds of
\cref{lem:euclidean-expansion}, which \cref{sec:setup} supplies:
$\log a_\sigma$ inherits five bounded derivatives from $M,q\in C^6$, one being
absorbed by the induced metric $g_{ij}=\delta_{ij}+\sum_\alpha\partial_ih^\alpha\partial_jh^\alpha$.
\Cref{app:uniformity} makes the statement uniform in $z$.

\subsection{Uniformity of the Expansion in $z$}
\label{app:uniformity}

The computation above was carried out at a fixed $z\in M$. We now record why its
remainder is uniform in $z$, as asserted by
\cref{thm:extrinsic,prop:curved-second-order}. Everything rests on the fact that
compactness and positive reach give graph charts of a common size with common
derivative bounds.

\begin{lemma}[Uniform graph charts]
\label{lem:uniform-charts}
Let $M\subset\R^D$ be a compact $C^6$ embedded submanifold with $\reach(M)>0$.
There are $r_*>0$ and constants $B_2,\ldots,B_6$, depending only on $M$, such
that for every $z\in M$ the connected component of $M\cap B(z,r_*)$ containing
$z$ is the graph \cref{eq:graphcoord-app} of a map $h_z$ on
$\{\norm{s}<r_*\}\subset T_zM$ with $h_z(0)=0$, $dh_z(0)=0$, and
$\sup_{\norm{s}<r_*}\norm{\partial^kh_z(s)}\le B_k$ for $2\le k\le6$.
\end{lemma}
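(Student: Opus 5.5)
We use the implicit function theorem at each base point, combined with a compactness argument that makes the chart radius and derivative bounds uniform. Fix $z\in M$ and write $\R^D=T_zM\oplus N_zM$. Near $z$, $M$ is the zero set of a $C^6$ defining map, or equivalently the image of a $C^6$ local parametrization. The orthogonal projection $P_T(z)$ restricted to $M$ has differential equal to the identity at $z$. By the inverse function theorem, $P_T(z)|_M$ is therefore a $C^6$ diffeomorphism from a neighborhood of $z$ in $M$ onto a ball in $T_zM$. Its inverse has the form $s\mapsto z+s+h_z(s)$ with $h_z\in C^6$, $h_z(0)=0$ and $dh_z(0)=0$. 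This gives \cref{eq:graphcoord-app} pointwise. The work is to obtain a radius $r_*$ and bounds $B_k$ that do not depend on $z$.

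\textbf{Uniform radius.} We use positive reach quantitatively. A standard consequence of $\reach(M)=\tau>0$ (Federer; Niyogi--Smale--Weinberger) is an angle bound for nearby points $y,y'\in M$: the angle between $T_yM$ and $T_{y'}M$ is at most $C\norm{y-y'}/\tau$. It follows that for $\norm{y-z}<r_*\doteq c\,\tau$, with $c$ small, the restriction $P_T(z)|_{T_yM}$ is invertible with $\norm{(P_T(z)|_{T_yM})^{-1}}\le 2$. So $P_T(z)$ is a local diffeomorphism on the component $M_z$ of $M\cap B(z,r_*)$ containing $z$. We also need injectivity. If two distinct points of $M_z$ had the same tangential projection, their chord would be normal to $T_zM$. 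The reach bound $\operatorname{dist}(y'-y,T_yM)\le\norm{y'-y}^2/(2\tau)$, together with the angle bound, rules this out for small chords. Finally, a continuation argument (the image of $M_z$ is open and relatively closed in the ball) shows that the image contains $\{\norm{s}<r_*/2\}$. After renaming $r_*$, this yields the graph over a common ball.

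\textbf{Uniform derivative bounds.} This is the step we expect to be the main obstacle. The first derivative is bounded because $dh_z(s)$ is the normal part of the inverse tangent map, which is controlled by the factor $2$ above. The higher derivatives come from differentiating $P_T(z)\circ\phi$ implicitly, where $\phi$ is any local $C^6$ parametrization. Each $\partial^k h_z$ is a universal polynomial in $\partial^{\le k}\phi$ and $(P_T(z)\,d\phi)^{-1}$.

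To make this uniform, cover $M$ by finitely many $C^6$ charts $\phi_j$, which is possible by compactness. Shrink them so that each $\phi_j$ has bounded $C^6$ norm on a slightly larger domain and Lebesgue-number control: every ball $B(z,r_*)\cap M_z$ lies inside a single chart image. Then $\partial^k h_z=\partial^k\bigl(\phi_j\circ(P_T(z)\circ\phi_j)^{-1}\bigr)$, with the normal component taken. The inverse of $P_T(z)\,d\phi_j$ is bounded uniformly by the angle estimate and the lower bound on the singular values of $d\phi_j$. The Faà di Bruno formula then gives $\norm{\partial^k h_z}\le B_k$ for $k\le 6$, with $B_k$ depending only on $\max_j\norm{\phi_j}_{C^6}$, on those lower bounds, and on $\tau$.

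Two technical points need care. First, the reach-based injectivity must be matched to the Lebesgue number of the cover; we do this by choosing $r_*$ as the minimum of $c\tau$ and half the Lebesgue number. Second, the $C^6$ bounds are obtained from continuity of $(z,s)\mapsto\partial^k h_z(s)$ on a compact set, as an alternative to the explicit Faà di Bruno estimate.
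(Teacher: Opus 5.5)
Your proposal is correct and follows the paper's route. Positive reach gives a chart radius independent of $z$, and compactness plus continuous dependence of $\partial^k h_z$ on the $k$-jet of the embedding gives the uniform $B_k$ (that is your second alternative; the finite atlas with Fa\`a di Bruno is a quantitative version of it). Your angle-variation, injectivity and continuation steps make explicit what the paper compresses into the single estimate $\operatorname{dist}(y-z,T_zM)\le\norm{y-z}^2/(2\reach(M))$ at $z$. That estimate alone bounds the normal displacement but not the injectivity of $P_T(z)$ on the component.

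Your ``renaming $r_*$'' is also the right reading of the statement. Since $\norm{s+h_z(s)}\ge\norm{s}$, the component of $M\cap B(z,r_*)$ is in general a graph over a set strictly smaller than $\{\norm{s}<r_*\}$. What is actually proved, and all that \cref{prop:uniformity} uses, is a graph over a common ball of comparable radius.
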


\begin{proof}
For $y\in M$ the reach bound gives
$\mathrm{dist}(y-z,T_zM)\le\norm{y-z}^2/(2\reach(M))$, so for
$r<\reach(M)/2$ the component of $M\cap B(z,r)$ through $z$ is a graph over
$T_zM$ with $\norm{dh_z}\le Cr/\reach(M)$; in particular the graph property
holds with $r_*$ depending only on $\reach(M)$. On such a chart
$\partial^kh_z$, $k\le6$, is a continuous function of the $k$-jet of the
embedding at the corresponding point, and $M$ is compact and $C^6$, so these
are bounded uniformly in $z$.
\end{proof}

\begin{proposition}[Uniformity]
\label{prop:uniformity}
Under the assumptions of \cref{sec:setup},
\[
 \sup_{z\in M}\norm{r_\sigma(z)-\nabla_M\log q(z)
 -\sigma^2\bigl[b_q(z)+g^{\mathrm{ext}}_M(z)+g^{\mathrm{inh}}_M(z)\bigr]}=\cO(\sigma^4)
\]
as $\sigma\to0^+$.
\end{proposition}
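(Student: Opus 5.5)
The plan is to retrace the fixed-$z$ derivation of \cref{app:general-extrinsic} and show that every constant hidden in an $\cO(\cdot)$ depends only on the uniform chart data of \cref{lem:uniform-charts}, on $\inf_M q$, and on $\norm{\log q}_{C^6(M)}$. None of these depends on $z$ or $\sigma$. First I will fix $r_*$ and $B_2,\dots,B_6$ from \cref{lem:uniform-charts}. I will then split the posterior integral \cref{eq:posterior-app} into the chart region $\{\norm{s}<r_*/2\}$ and its complement. The complement is handled exactly as in the moment bound of \cref{prop:fiber-posterior}: there $\norm{s}\ge r_*/2$, so the Gaussian factor contributes $\cO(e^{-c/\sigma^2})$ to both numerator and denoising-target expectations. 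The constant $c$ depends only on $r_*$ and $r_0$, and the prefactors are bounded because $T_\sigma$ has moments polynomial in $\sigma^{-1}$ and $J^{\mathrm{tub}}$ is bounded above and below on the tube. The contribution from $\mathcal{E}_\sigma^c$ is likewise uniformly $\cO(e^{-c/\sigma^2})$. So it suffices to control the localized expression.

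Second, on the chart region I will set $a_{\sigma,z}(s)=q(y_z(s))J_{\mathrm{gr},z}(s)F_{\sigma,z}(s)$ and verify the hypothesis of \cref{lem:euclidean-expansion} uniformly in $z$: $\log a_{\sigma,z}$ must have derivatives up to order $5$ in $s$ bounded independently of $z$ and of $\sigma\le\sigma_0$. For $\log q(y_z(s))$ this follows from the chain rule, $\norm{\log q}_{C^5}$, and $B_1,\dots,B_5$. For $\log J_{\mathrm{gr},z}$, the metric $g_{ij}=\delta_{ij}+\sum_\alpha\partial_ih^\alpha\partial_jh^\alpha$ has five bounded derivatives from $B_2,\dots,B_6$. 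It is uniformly close to the identity after shrinking $r_*$, so $\log\det$ is smooth on that range. For $\log F_{\sigma,z}$, I will differentiate under the integral in the form $F_\sigma(s)=\int\phi_{\sigma^2I}(\eta)\det(I-W_{h_z(s)+\eta}(z))d\eta$. The determinant is a polynomial in $u$ whose coefficients are bounded by $B_2$, and derivatives in $s$ land on $h_z$. $F_\sigma$ is bounded below by a positive constant because $J^{\mathrm{tub}}$ is, after restricting $\eta$ to the tube (the tail again being exponentially small). I will also check that the constants in \cref{lem:euclidean-expansion} depend only on these derivative bounds, including its Taylor remainder $\cO(\norm{s}^5)$ and the truncation of the Gaussian integral to the chart.

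Third, I will make the jet identities uniform. The identities \cref{eq:mjets} come from \cref{lem:graph-jacobian,lem:fiber-factor,prop:logM-app}, and their remainders ($\cO(\norm{s}^4)$, $\cO(\sigma^2\norm{s}^2)$, $\cO(\sigma^4)$) are Taylor remainders controlled by the same $B_k$. Consequently the deviations $\nabla^2m_\sigma(0)+\Ric^\sharp_z$ and $\nabla\Delta m_\sigma(0)-\mathcal C_M(z)$ are $\cO(\sigma^2)$ uniformly, and they enter at order $\sigma^2\cdot\sigma^2$. The conversions in \cref{lem:first-second-app,lem:laplace-mismatch} are exact identities at $s=0$ and involve no remainder.

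I expect the main obstacle to be the fiber factor. Its $\sigma$-dependence must be shown not to spoil the fifth-derivative bounds, and the $\cO(\sigma^2\norm{s}^2)$ terms must contribute only $\cO(\sigma^4)$ after \cref{lem:euclidean-expansion}. My first attempt will expand $\log F_{\sigma,z}(s)=c_\sigma+\phi_0(s)+\sigma^2\phi_1(s)+\sigma^4\rho_\sigma(s)$, with each piece $C^5$-bounded uniformly in $z$, and apply the lemma termwise. The constant $c_\sigma$ cancels in the ratio. The $\sigma^2\phi_1$ term has vanishing gradient at $0$ by the odd-moment argument of \cref{lem:fiber-factor}, so it contributes only through $\sigma^2\nabla\Delta$, which gives $\sigma^4$.
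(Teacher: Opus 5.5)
Your proposal is correct and follows the paper's proof essentially step for step. You localize the posterior \cref{eq:posterior-app} to a uniform graph chart at a cost beyond all orders, bound the derivatives up to order five of $\log q(y_z(s))$, $\log J_{\mathrm{gr}}$ (losing one derivative, hence the need for $B_6$) and $\log F_\sigma$ uniformly in $z$ and $\sigma\le\sigma_0$, and then invoke \cref{lem:euclidean-expansion} with $z$-independent constants; your $\sigma^2$-splitting of $\log F_{\sigma,z}$ just makes explicit that the $\sigma$-dependent jet deviations enter only at order $\sigma^4$. One small point of precision: off the chart the variable $s$ is undefined, so the exponential smallness there should come from the reach inequality $\norm{X-Z}^2\ge\norm{X-z}^2+(1-r_0/\reach(M))\norm{Z-z}^2$ for $\pi(X)=z$, $Z\in M$, rather than from the tangential Gaussian factor alone, which equals $1$ at the antipode of $z$ on a sphere.
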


\begin{proof}
Work in the charts of \cref{lem:uniform-charts}; three estimates are needed,
each uniform in $z$.

\emph{(i) Localization.} By \cref{eq:posterior-app} the law of $s$ is
$\gamma_\sigma$ tilted by $q(y(s))J_{\mathrm{gr}}(s)F_\sigma(s)$, and by
\cref{lem:uniform-charts} together with $0<\inf_Mq\le\sup_Mq<\infty$ this tilt
is bounded above and below by positive constants on $\norm{s}<r_*$. Hence the
posterior mass of $\{\norm{s}\ge\sigma\log(1/\sigma)\}$ is
$\cO(\exp(-\tfrac12\log^2(1/\sigma)))$, and by the same Gaussian tail estimate as
in \cref{app:fiber-posterior} the event $X\notin\Tub_{r_0}(M)$ contributes
$\cO(\exp(-c/\sigma^2))$. Both are smaller than any power of $\sigma$, so the
domain of integration in \cref{eq:posterior-app} may be replaced by
$\{\norm{s}<r_*\}$ and $\log a_\sigma$ extended to $\R^d$ with the same
derivative bounds, at a cost beyond all orders.

\emph{(ii) Uniform jets.} On $\norm{s}<r_*$ write
$\log a_\sigma=\log q(y(s))+\log J_{\mathrm{gr}}(s)+\log F_\sigma(s)$. Since
$g_{ij}=\delta_{ij}+\sum_\alpha\partial_ih^\alpha\partial_jh^\alpha$ and
$\det g\ge1$, \cref{lem:uniform-charts} bounds the derivatives of
$\log J_{\mathrm{gr}}$ up to order $5$ by constants depending only on
$B_2,\ldots,B_6$ --- one derivative is lost here, since $g$ is built from
$\partial h$ rather than $h$; $F_\sigma$ is a polynomial in $h_z(s)$ and $\sigma^2$ whose
coefficients are the elementary symmetric functions of the $W_{n_\alpha}$ at
$z$, so the same holds for $\log F_\sigma$; and $\log q\in C^6(M)$ with
$\inf_Mq>0$ bounds the first factor. Consequently the coefficients extracted in
\cref{lem:graph-jacobian,lem:fiber-factor,prop:logM-app} --- hence
$W_{H(z)}$, $\Ric^\sharp_z$ and $\mathcal C_M(z)$ --- are bounded uniformly in
$z$, and so are the conversions of \cref{lem:first-second-app,lem:laplace-mismatch}.

\emph{(iii) Remainder.} Apply \cref{lem:euclidean-expansion} on the localized
domain from (i). By (ii), $\log a_\sigma$ has derivatives up to order $5$
bounded uniformly in $z$, so the order-$5$ Taylor remainder obeys
$R(s)=\cO(\norm{s}^5)$ with a constant independent of $z$, and the lemma gives
$E_\sigma=\cO(\sigma^4)$ uniformly. Had $M$ and $q$ been assumed only $C^5$,
$\log J_{\mathrm{gr}}$ would have four bounded derivatives rather than five and
the same argument would give $\so(\sigma^3)$, hence a remainder $\so(\sigma^2)$
in place of $\cO(\sigma^4)$; the $\sigma^2$ coefficient itself is unaffected
either way, since it depends only on the third-order jet of $\log a_\sigma$.
\end{proof}

\subsection{Specialization: The Unit Sphere $S^d$}
\label{app:sphere-d}

For the unit sphere $S^d\subset\R^{d+1}$, the second fundamental form in the outward-normal $\nu=z$ convention is $\mathrm{II}_z(v,w)=-\ip{v}{w}z$, so the single Weingarten $W_\nu=-\Id_{T_zS^d}$, $\mathcal S_z=W_\nu^2=\Id_{T_zS^d}$, and $H(z)=-dz$ gives $W_{H(z)}=d\Id_{T_zS^d}$. Substituting into \cref{eq:gM-ext} in the form $(\mathcal S_z-\tfrac12 W_{H(z)})\ell$,
\begin{equation}
 g_{S^d}^{\mathrm{ext}}(z)
 =\bigl(1-\tfrac{d}{2}\bigr)\nabla_{S^d}\log q(z).
 \label{eq:gM-Sd}
\end{equation}
The round embedding of $S^d$ has parallel second fundamental form: in graph coordinates at any $z$ the graph function $h(s)=-(1-\sqrt{1-\norm{s}^2})$ is even in $s$, so $A^\alpha_{ijk}=0$ by \cref{eq:third-jet-app} and hence $\mathcal C_{S^d}\equiv 0$, $g^{\mathrm{inh}}_{S^d}\equiv 0$. The full $\sigma^2$-coefficient on $S^d$ is therefore
\begin{equation}
 b_q(z)+g_{S^d}^{\mathrm{ext}}(z)
 =
 \tfrac12\nabla_{S^d}\!\bigl(\Delta_{S^d}\log q+\norm{\nabla_{S^d}\log q}^2\bigr)(z)
 +\bigl(1-\tfrac{d}{2}\bigr)\nabla_{S^d}\log q(z).
 \label{eq:gM-Sd-full}
\end{equation}
The dimensional pattern $(1-d/2)=+\tfrac12,0,-\tfrac12,-1,\ldots$ for $d=1,2,3,4,\ldots$ is:
\begin{itemize}
\item $d=1$: coefficient $+\tfrac12$;
\item $d=2$: coefficient $0$, so $g^{\mathrm{ext}}(z)\equiv 0$ on $S^2$,
and since $g^{\mathrm{inh}}(z)\equiv0$ by parallelism, the $\sigma^2$-bias is
\emph{purely} the flat-Tweedie term $b_q(z)$
even though $S^2$ is positively curved. This exact vanishing is a
consequence of the Einstein property $W_H=2\Id$, $\Ric^\sharp=\Id$ on
unit $S^2$, which balances the volume-form and tube contributions.
\item $d=3$: coefficient $-\tfrac12$, the first nontrivial case with a
negative extrinsic correction.
\item $d\ge 4$: increasingly negative extrinsic corrections, scaling
linearly in $d$.
\end{itemize}
Equivalently, using $\Ric^\sharp_z=(d-1)\Id$ on unit $S^d$, the $(\tfrac12 W_H-\Ric^\sharp)$ form of \cref{eq:gM-ext} gives $(\tfrac d2-(d-1))\Id=(1-\tfrac d2)\Id$.

\section{A Finite-Sample Rate for Local-Averaging Estimation of $r_\sigma$}
\label{app:finite-sample}

\Cref{eq:main-intrinsic} and \cref{thm:variance-collapse} are population statements: they describe $r_\sigma$ and its variance under the true joint law of $(Z,X)$. This appendix computes a routine finite-sample estimation rate.

Let $(Z_1,X_1),\ldots,(Z_N,X_N)$ be i.i.d.\ copies of $(Z,X)$ under \cref{eq:model}. For each $i$, define the observed Rao-Blackwell sample \[ \pi_i \doteq \pi(X_i)\in M, \qquad T_{\sigma,i} \doteq \frac{1}{\sigma^2}P_T(\pi_i)(Z_i-\pi_i)\in T_{\pi_i}M. \] Fix a bandwidth $h>0$ and a bounded, nonnegative kernel $K:[0,\infty)\to [0,\infty)$ supported on $[0,1]$ and bounded away from zero on $[0,1/2]$. For $z\in M$, define the local-averaging estimator
\begin{equation}
 \widehat r_\sigma^{(h)}(z)
 \doteq 
 \frac{\sum_{i=1}^{N}P_T(z)\mathcal{T}_{\pi_i\to z}(T_{\sigma,i})K(d_M(\pi_i,z)/h)}
 {\sum_{i=1}^{N}K(d_M(\pi_i,z)/h)},
 \label{eq:rhat}
\end{equation}
where $d_M$ is the intrinsic distance on $M$ and $\mathcal{T}_{\pi_i\to z}:T_{\pi_i}M\to T_zM$ is parallel transport along the minimizing geodesic. Let $f_\pi$ denote the density of $\pi(X)$ with respect to $d\vol_M$; by continuity of $\pi$ and strict positivity of $q$, $f_\pi$ is bounded above and below by positive constants on $M$.

\begin{proposition}[Finite-sample MSE bound]
\label{prop:finite-sample}
Under the assumptions of \cref{sec:setup}, there exist constants
$C_1,C_2,C',h_0,\sigma_0>0$ depending only on $(M,q)$ and the kernel $K$ such
that, for every $z\in M$, every $\sigma\in(0,\sigma_0]$, every
$h\in(0,h_0]$, and every $N$ with $Nh^d\ge 1/(C_2\sigma^2)$,
\begin{equation}
 \E\bigl\|\widehat r_\sigma^{(h)}(z)-r_\sigma(z)\bigr\|^2
 \le
 C_1h^2
 +
 \frac{C_2d}{\sigma^2Nh^df_\pi(z)}.
 \label{eq:fs-mse}
\end{equation}
Then, the minimax-optimal bandwidth
\[
h^\star
=
\left(\frac{C_2d}{C_1\sigma^2Nf_\pi(z)}\right)^{1/(d+2)}
\]
yields
\begin{equation}
 \E\bigl\|\widehat r_\sigma^{(h^\star)}(z)-r_\sigma(z)\bigr\|^2
 \le
 C'
 \left(\frac{1}{\sigma^2N}\right)^{2/(d+2)}.
 \label{eq:fs-optimal-rate}
\end{equation}
In particular, at bandwidth $h=\sigma$ the variance term of \cref{eq:fs-mse}
is $\cO(1)$ exactly when $N\gtrsim\sigma^{-(d+2)}$, so $N\asymp\sigma^{-(d+2)}$
is the natural sample-size scaling for the bound \cref{eq:fs-mse}. We do not
prove a matching lower bound, so we do not claim this scaling is necessary for
the estimator itself.
\end{proposition}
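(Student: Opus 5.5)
The plan is the standard bias--variance decomposition for a Nadaraya--Watson estimator on $M$, conditioning on the design points $\pi_1,\dots,\pi_N$. Write $w_i\doteq K(d_M(\pi_i,z)/h)$ and $\tilde T_i\doteq P_T(z)\mathcal T_{\pi_i\to z}(T_{\sigma,i})$. Conditionally on $(\pi_j)_j$, the pairs are independent and $\E[\tilde T_i\mid\pi_i]=P_T(z)\mathcal T_{\pi_i\to z}r_\sigma(\pi_i)$, by the definition \cref{eq:rb-target} of $r_\sigma$ and linearity of transport. Hence
\[
\E\bigl[\|\widehat r^{(h)}_\sigma(z)-r_\sigma(z)\|^2\mid(\pi_j)\bigr]
=\Bigl\|\tfrac{\sum_iw_i(\mathcal T_{\pi_i\to z}r_\sigma(\pi_i)-r_\sigma(z))}{\sum_iw_i}\Bigr\|^2
+\tfrac{\sum_iw_i^2\Var(\tilde T_i\mid\pi_i)}{(\sum_iw_i)^2}.
\]
The first summand is the squared bias and the second the variance. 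On the event $\sum_iw_i=0$ we define the estimator as $0$ and treat that event separately.

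\emph{Bias.} I will show that $r_\sigma$ is Lipschitz in the transported sense, uniformly in $\sigma\le\sigma_0$. By \cref{thm:extrinsic}, $r_\sigma=\nabla_M\log q+\sigma^2[\cdots]+\cO(\sigma^4)$, and the two displayed terms are $C^1$ under the $C^6$ assumptions. The remainder is the delicate part. A uniform $\cO(\sigma^4)$ bound does not give a Lipschitz bound directly. Instead I would differentiate the posterior representation \cref{eq:posterior-app} in the base point $z$ via the uniform charts of \cref{lem:uniform-charts}. Alternatively, since only $\cO(h)$ accuracy is needed, one can use the cruder bound: bias $\le L h+\cO(\sigma^4)$. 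Then we either absorb the $\sigma^4$ term by requiring $h\gtrsim\sigma^4$, or prove Lipschitz-ness via the chart argument. I expect this uniform-in-$\sigma$ Lipschitz control to be the main obstacle, and I would attack it first through the chart differentiation. Given it, $d_M(\pi_i,z)\le h$ on the support of $w_i$ gives squared bias $\le C_1h^2$.

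\emph{Variance.} By \cref{eq:cond-var-blowup}, $\Var(T_\sigma\mid\pi(X)=y)\le d/\sigma^2+C$ uniformly in $y$. The operator $P_T(z)\mathcal T$ is a contraction, so $\Var(\tilde T_i\mid\pi_i)\le d/\sigma^2+C\le 2d/\sigma^2$ for small $\sigma$. With $K$ bounded, the variance term is at most $\|K\|_\infty(2d/\sigma^2)/\sum_iw_i$. The density $f_\pi$ is bounded above and below, and geodesic balls of radius $h/2\le h_0$ have volume $\asymp h^d$ uniformly by compactness. Since $K\ge\kappa>0$ on $[0,1/2]$, we get $\E w_i\ge c\,h^df_\pi(z)$. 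A Bernstein/Chernoff bound then gives $\sum_iw_i\ge\tfrac c2Nh^df_\pi(z)$ except on an event of probability $e^{-cNh^d}$. On that event, and on $\{\sum w_i=0\}$, the error is bounded by $\|r_\sigma(z)\|^2+\cO(1)$ only if we truncate. Instead I bound it by the crude second moment $\cO(1/\sigma^2)$ times $e^{-cNh^d}\le C/(\sigma^2Nh^d)$, using the hypothesis $Nh^d\ge1/(C_2\sigma^2)$ to keep this in the variance form. Taking expectations then yields \cref{eq:fs-mse}.

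\emph{Rate.} Minimizing $C_1h^2+A/h^d$ with $A=C_2d/(\sigma^2Nf_\pi(z))$ gives $h^\star=(dA/(2C_1))^{1/(d+2)}$, which matches the stated $h^\star$ up to constants. Both terms are then $\asymp A^{2/(d+2)}$, and with $f_\pi$ bounded below this is \cref{eq:fs-optimal-rate}. For small $N$ we also need $h^\star\le h_0$ and the sample-size condition; these hold once $\sigma^2N$ exceeds a constant, and otherwise the bound is trivial after enlarging $C'$. The final remark follows by substituting $h=\sigma$ into the variance term, which gives $\asymp 1/(\sigma^{d+2}N)$.
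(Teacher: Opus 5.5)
Your route is the paper's: squared bias $\le L^2h^2$ from a $\sigma$-uniform Lipschitz bound on $r_\sigma$ under parallel transport, and variance $\lesssim (d/\sigma^2)/(Nh^df_\pi(z))$ from \cref{thm:variance-collapse} plus an effective-sample-size count, followed by balancing in $h$. You are more careful than the paper, which only invokes $N_{\mathrm{eff}}=Nh^df_\pi(z)(1+o_N(1))$. You condition on the design, lower-bound $\sum_iw_i$ by Bernstein, and treat the bad event separately. On that event the right cap is $\sum_iw_i^2\le(\sum_iw_i)^2$, so the conditional variance is at most $\max_i\Var(\tilde T_i\mid\pi_i)\le 2d/\sigma^2$. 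Then $e^{-x}\le 1/x$ already puts $e^{-cNh^d}/\sigma^2$ in variance form.

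The step you leave open is also the one the paper does not actually prove. The paper asserts the uniform Lipschitz bound "by \cref{eq:main-intrinsic}". As you note, a sup-norm expansion gives no derivative control, and $Lh+\cO(\sigma^4)$ cannot yield \cref{eq:fs-mse} when $h\ll\sigma^4$ and $N\to\infty$. Your differentiation plan works, and is cleanest without charts.

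The conditional law of $Z$ given $\pi(X)=z$ is proportional to $q(y)k_\sigma(y,z)\,d\vol_M(y)$, with $k_\sigma$ the fiber kernel behind \cref{eq:posterior-unnormalized}. Hence for $w\in T_zM$ one has $\partial_w\E[Z\mid\pi(X)=z]=\mathrm{Cov}\bigl(Z,\partial_w\log k_\sigma(Z,z)\bigr)$. Write $s=P_T(z)(y-z)$. The Gaussian factor $e^{-\norm{P_T(z)(y-z)}^2/(2\sigma^2)}$ contributes $\sigma^{-2}\ip{s}{w}+\cO(\norm{s}^3/\sigma^2)$ to $\partial_w\log k_\sigma$. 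Its covariance with $Z$ is $\sigma^{-2}\mathrm{Cov}(s)w+\cO(\sigma^2)=w+\cO(\sigma^2)$, which cancels $\partial_w(-z)=-w$. The $z$-derivative of the log normal-fiber factor is smooth in $s$ with $\sigma$-uniform bounds, so its covariance with $Z$ is $\cO(\sigma^2)$. Combined with $\E[Z-z\mid\pi(X)=z]=\cO(\sigma^2)$, this gives $\partial_w r_\sigma=\sigma^{-2}\bigl[(\partial_wP_T)\E[Z-z\mid\pi(X)=z]+P_T\,\partial_w\E[Z-z\mid\pi(X)=z]\bigr]=\cO(1)$, uniformly in $z$ and $\sigma\le\sigma_0$.

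One claim in your last paragraph is wrong. You say that for $\sigma^2N$ below a constant the rate \cref{eq:fs-optimal-rate} is trivial after enlarging $C'$. Take $N=1$ and $\sigma$ so small that $h^\star\ge 2\,\mathrm{diam}(M)$. Then the estimator is $\tilde T_1$, and its MSE is at least $d/\sigma^2+\cO(1)\gg\sigma^{-4/(d+2)}$. The rate should be claimed only when $h^\star$ satisfies the standing hypotheses, which, as you computed, amounts to $\sigma^2N\gtrsim 1$.
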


\begin{proof}
Write $\widehat r_\sigma^{(h)}(z)$ as a sum of a bias term and a variance term.

\textbf{Bias.} By \cref{eq:main-intrinsic}, $r_\sigma$ is $C^1$ on $M$
uniformly in $\sigma\in(0,\sigma_0]$, with Lipschitz constant $L$ bounded
independently of $\sigma$. Parallel transport along geodesics preserves
norms. Hence
\[
\bigl\|\E[\widehat r_\sigma^{(h)}(z)]-r_\sigma(z)\bigr\|
\le
L h,
\]
and the squared bias is at most $L^2 h^2$.

\textbf{Variance.} By \cref{thm:variance-collapse},
$\Var(T_{\sigma,i}\mid \pi_i=y)=d/\sigma^2+\cO(1)$ uniformly in $y\in M$. The
effective sample size inside the bandwidth ball is
$N_{\mathrm{eff}}=Nh^d f_\pi(z)\cdot(1+o_N(1))$, and the standard local-average
variance bound gives
\[
\Var\bigl(\widehat r_\sigma^{(h)}(z)\bigr)
\le
\frac{C_2'(d/\sigma^2+1)}{N h^d f_\pi(z)}
\le
\frac{C_2 d}{\sigma^2 N h^d f_\pi(z)},
\]
for $\sigma\le\sigma_0$. Summing bias$^2$ and variance yields
\cref{eq:fs-mse}.

Minimizing \cref{eq:fs-mse} over $h$ gives $h^\star$ as displayed, and
substituting back yields \cref{eq:fs-optimal-rate}. Finally, setting $h=\sigma$ in
\cref{eq:fs-mse} gives MSE bounded by $C_1\sigma^2+C_2 d/(N\sigma^{d+2}
f_\pi(z))$, whose variance term is $\cO(1)$ iff $N\gtrsim \sigma^{-(d+2)}$.
\end{proof}

The finite-sample rate in \cref{eq:fs-optimal-rate} matches, up to constants and logarithmic factors, the standard $d$-dimensional nonparametric regression rate \citep{Fan1992DesignadaptiveNR}, with the noise level replaced by the conditional-variance constant $d/\sigma^2$ from \cref{thm:variance-collapse}. Thus the variance collapse under Rao-Blackwellization directly translates into a better finite-sample rate than an analogous estimator of the raw target $T_\sigma$ would achieve without the Rao-Blackwell step.

\end{document}